\theoremstyle{plain}
\newtheorem{theorem}{Theorem}[section]
\newtheorem{lemma}[theorem]{Lemma}
\newtheorem{corollary}[theorem]{Corollary}
\theoremstyle{definition}
\newtheorem{definition}[theorem]{Definition}
\theoremstyle{remark}
\definecolor{orange}{RGB}{255, 165, 0}
\newcommand{\eat}[1]{}
\newcommand{\ONESHOT}{\textsc{oneshot}}
\newcommand{\JOINT}{\textsc{joint}}
\newcommand{\PEELING}{\textsc{peeling}}
\newcommand{\CANONICAL}{\textsc{canonical}}
\newcommand{\TOP}[1]{\ensuremath{\textsc{top}\text{-}#1\xspace}}
\newcommand{\GOOD}[1]{\ensuremath{\textsc{good}\text{-}#1\xspace}}
\DeclareMathOperator*{\argmax}{arg\,max}
\DeclareMathOperator*{\argmin}{arg\,min}
\renewcommand{\ln}{\log}
\newcommand{\eps}{\varepsilon}
\newcommand{\OPT}{\textsc{opt}}
\newcommand{\y}{y}
\newcommand{\kargmax}[2]{{\argmax_{#2}}^{[#1]}}
\DeclareMathOperator\sgn{sgn}
\newcommand{\loss}{\textsc{loss}}
\newcommand{\topitem}[1]{ j_{#1}}
\renewcommand{\top}[1]{\ensuremath\vec{x}_{[#1]}}
\newcommand{\Sidak}{Šidák}
\newcommand{\Bonferroni}{Bonferroni}
\newcommand{\LHopitalsrule}{L'Hôpital's rule}
\icmltitlerunning{Differentially Private Top-k Selection via Canonical Lipschitz Mechanism}
\begin{document}

\twocolumn[

\icmltitle{Differentially Private Top-k Selection via Canonical Lipschitz Mechanism}

\icmlsetsymbol{equal}{*}

\begin{icmlauthorlist}
\icmlauthor{Michael Shekelyan}{kcl}
\icmlauthor{Grigorios Loukides}{kcl}
\end{icmlauthorlist}

\icmlaffiliation{kcl}{King's College London, Department of Informatics, London, United Kingdom}

\icmlcorrespondingauthor{Michael Shekelyan}{michael.shekelyan@kcl.co.uk}

\icmlkeywords{Privacy-preserving Statistics, Differential Privacy}

\vskip 0.3in
]

\printAffiliationsAndNotice{}

\newcommand{\SINGLEDOUBLE}[2]{#2}
\newcommand{\REVISE}[2]{#2}
\newcommand{\FORK}[2]{#1}

\begin{abstract}
Selecting the top-$k$ highest scoring items under differential privacy (DP) is a fundamental task with many applications. This work presents three new results. First, the exponential mechanism, permute-and-flip and report-noisy-max, as well as their oneshot variants, are unified into the Lipschitz mechanism, an additive noise mechanism with a single DP-proof via a mandated Lipschitz property for the noise distribution. Second, this new generalized mechanism is paired with a canonical loss  function to obtain the canonical Lipschitz mechanism, which can directly select k-subsets out of $d$ items in $O(dk+d \log d)$ time. The canonical loss function assesses subsets by how many users must change for the subset to become top-$k$. Third, this composition-free approach to subset selection  improves utility guarantees by an $\Omega(\log k)$ factor compared to one-by-one selection via sequential composition, and our experiments on synthetic and real-world data indicate substantial utility improvements.
\end{abstract}

\section{Introduction} 

Let $\{1, \ldots, d\}$ be a set of items and  $\vec{x} \in \mathbb{R}^d$ be a data vector comprised of \REVISE{the numerical scores of these items}{numerical scores for these items}. Depending on the application domain the items can be thought of as features, policies, models, or in some cases physical objects as the term suggests. The top-$k$ selection problem seeks to select $k$ highest scoring items, i.e.,  $\kargmax{k}{i \in \{1, \ldots, d\}} \{ \vec{x}_i \}$. It is a fundamental problem with myriads of applications (see~\cite{top_k_survey} for a survey) and also a building block in analytic tasks including classification, summarization, and content extraction~\cite{kais_top10,aaai_13}. The applications of top-$k$ selection are typically fueled by \REVISE{users' data}{user data} and thus have raised privacy  concerns~\cite{netflix_deanon}. 
To address such concerns, it is crucial that top-$k$ selection preserves privacy. This is possible by enforcing differential privacy (DP) ~\cite{dwork2014algorithmic} which, informally speaking, ensures that the selected items do not depend heavily on any \REVISE{}{single }user's private information. 

A data vector $\vec{x}$ is derived from some object $\hat{x} \in \mathbb{X}$  and is influenced through the private information of a set  of users $\textsc{users}(\hat{x})$. This influence is a central concept in DP, which deploys random processes (mechanisms) whose behavior is relatively similar with or without the input of a particular user. To achieve this, functions which determine the behavior of the process must have a limited sensitivity to individual users:
\begin{definition}[sensitivity] \label{def:sensitivity}
Let $\mathbb{Y}$ be some arbitrary domain. A function $f : \mathbb{Y} \times \mathbb{X} \rightarrow \mathbb{R}$ has sensitivity $\Delta_f \in \mathbb{R}_{\ge 0}$, if $|f( y \mid \hat{b)}-f( y \mid \hat{a})| \le \Delta_f$, for any $y \in \mathbb{Y}$ and any pair of   objects $\hat{a}, \hat{b} \in \mathbb{X}$ with Hamming distance 1 between $\textsc{users}(\hat{a})$ and $\textsc{users}(\hat{b})$.
\end{definition}

The methods proposed in this work are oblivious\footnote{While the underlying object $\hat{x}$ can often be thought of as a database of user records, it may also lack a natural division into user-specific parts. For example, $\vec{x}$ can contain numerical features of some video $\hat{x}$ in which the individuals $\textsc{users}(\hat{x})$ appear in. In this case, there are no ``user records'', but one can still model each individual as a ``user'' who  influences the numerical features $\vec{x}$ .} to $\textsc{users}(\hat{x})$ or even $|\textsc{users}(\hat{x})|$. The notation $f(y \mid \hat{x})$ conveys that, given a fixed $\hat{x}$, the function $f$ behaves like a regular function $\mathbb{Y} \rightarrow \mathbb{R}$. 

In practice the sensitivity of functions may be asymmetric ($-\Delta_1,+\Delta_2$). Yet, for shift-invariant mechanisms \cite{mckenna2020permute} one can apply a shifting trick to reduce it to a sensitivity $\frac{|\Delta_2-\Delta_1|}{2}$ (cf. Theorem~\ref{thm:asymmetric} in Appendix~\ref{sec:proofs}) generalizing  methods for \REVISE{count-based}{monotonic} functions \cite{dwork2013algorithmic,mckenna2020permute} \REVISE{  without a need for special methods \cite{durfee2019practical}}{without having to resort to generalized sensitivity notions like bounded range \cite{durfee2019practical}. }

The $\eps$-DP methods translate the bounded sensitivity over the function values into a bounded sensitivity $\eps \in \mathbb{R}_{\ge0}$ over the log-probabilities of selection options:

\begin{definition}[$\eps$-DP selection]
Let $Y$ be a random variable supported over a set of possible outputs $\mathbb{Y}$ and $y \in \mathbb{Y}$. Reporting the value $Y = y$ is $\eps$-DP, for a given $\eps \in \mathbb{R}_{\ge 0}$, if $f(y \mid \hat{x}) = \log Pr[\,Y = y \mid \hat{x}\,]$ has sensitivity $\Delta_f = \eps$.
\end{definition}

\begin{figure}
\SINGLEDOUBLE{\centering\includegraphics[width=0.75\textwidth]{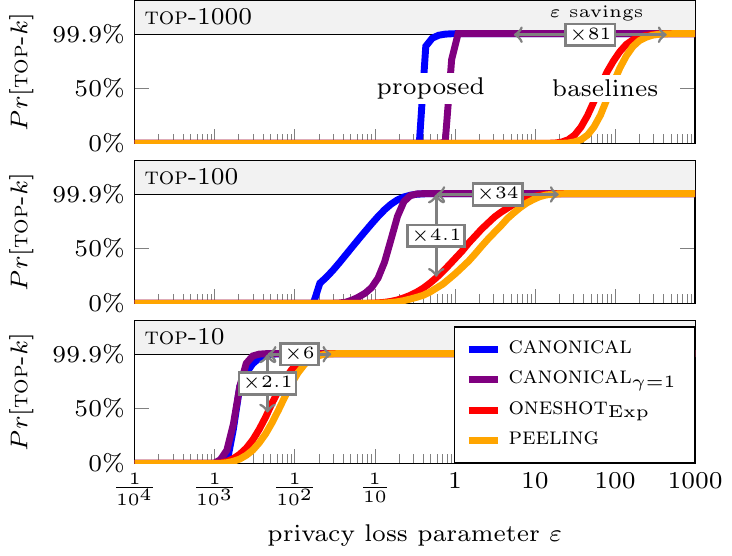}}{\centering\includegraphics[width=0.465\textwidth]{plots/n4top1000.pdf}}
\caption{$\CANONICAL$ and $\CANONICAL_{\gamma=1}$ vastly improve the probability of sampling the true top-$k$ for $\eps$-DP top-$k$.\label{fig:netflix}}
\end{figure}
This ensures that if two objects $\hat{a}$ and $\hat{b}$ differ in the participation of at most one user, then observing $Y = y$ provides little insight into the participation status of a user due to the bounded log-likelihood ratio of participation scenarios $\hat{a}$ vs. $\hat{b}$, i.e., $|\log \frac{Pr[Y = y \mid \hat{a}]}{Pr[Y = y \mid \hat{b}]}| \le \eps$. 
Consider the well-known example of the Netflix prize dataset~\cite{netflix_data_paper} and  queries asking for top-$k$ movies/patterns, based on movie ratings and other user  information. Then $\eps$-DP modifies query answers to limit inferences on user data. 

Our work considers the top-$k$ selection problem under $\eps$-DP, proposing 
novel methods that reduce utility loss both in theory (cf. Section~\ref{sec:theory}) and in practice (cf. Figure~\ref{fig:netflix} for results on the  Netflix prize dataset) and are amenable to simple, efficient implementations. Our main contributions are:
\paragraph{I. Lipschitz Mechanism.} 

Inspired by the report-noisy-max mechanism~\cite{dwork2013algorithmic},  equivalences of the exponential mechanism \cite{mcsherry2007mechanism} and permute-and-flip mechanism \cite{mckenna2020permute} to their additive noise formulations \cite{durfee2019practical,ding2021permute}, we sought a property that unifies them and guarantees $\eps$-DP. We discovered that adding noise $X$ to each score and reporting the index of the maximal noisy score is $\eps$-DP if $\ln(1-Pr[X \le x])$ is $1$-Lipschitz (cf. Theorem~\ref{thm:lipschitz}). Hence, we call this additive noise method the Lipschitz mechanism. This mechanism instantiates many popular mechanisms and novel variants via different choices for noise distributions and parameter $\kappa$: 

\SINGLEDOUBLE{%
\begin{center}
\begin{tabular}{rll}
\toprule
Noise Distribution &  top-$1$ with $\kappa = 1$            & top-$k$ with $\kappa = k$\\
\midrule
Gumbel               & \textsc{Exponential Mechanism} \cite{mcsherry2007mechanism}    & $\PEELING$  \\
Laplace              & \textsc{Report-Noisy-Max} \cite{dwork2013algorithmic}      &  $\ONESHOT_{\text{Laplace}}$ \cite{qiao2021oneshot}\\
(Half-)Logistic         & {\color{gray}(new)}                    & {\color{gray}(new)}\\ 
Exponential          & \textsc{Permute-And-Flip} \cite{mckenna2020permute}       &  $\ONESHOT_{\text{Exp}}$ {\color{gray}(new)} \\
\bottomrule
\end{tabular}
\end{center}%
}{%
{\small%
\begin{tabular}{rll}
\toprule
Noise Distr. &  top-$1$ with $\kappa = 1$            & top-$k$ with $\kappa = k$\\
\midrule
Gumbel               & \textsc{Exp. Mechanism}    & $\PEELING$  \\
Laplace              & \textsc{Report-Noisy-Max}       &  $\ONESHOT_{\text{Laplace}}$ \\
(Half-)Logistic      & {\scriptsize\color{gray}(new)}   & {\scriptsize\color{gray}(new)}\\ 
Exponential          & \textsc{Permute-And-Flip}        &  $\ONESHOT_{\text{Exp}}$ {\scriptsize\color{gray}(new)} \\
\bottomrule
\end{tabular}
}%
}%

Reporting the indices of the $\kappa$ largest utility values is also $\eps$-DP ($\kappa$ reduces the $\eps$ internally) and instantiates the faster oneshot mechanisms, such as the $\ONESHOT_{\text{Laplace}}$ mechanism \cite{qiao2021oneshot}.  The Lipschitz mechanism yields the first $\eps$-DP proof of a oneshot variant of permute-and-flip ($\ONESHOT_{\text{Exp}}$). Due to the versatility of the Lipschitz mechanism, we use it both to instantiate existing work, as well as new top-$k$ methods that apply the mechanism directly on the set of $k$-subsets as the selection domain (with $\kappa = 1$).

\vspace{-3mm}
\paragraph{II. Canonical Lipschitz Mechanism for top-$k$.} 
Applying general selection mechanisms, such as the Lipschitz mechanism, over an exponentially large selection domain is prohibitive in cost. Thus, specialized mechanisms for suitable loss functions are needed. A natural loss function is inspired by the \REVISE{ following well-known trick: a single user can change the required number of users for a candidate solution to become optimal by at most one.}{well-known trick that a single user can change the number of user changes needed (for a solution to become optimal) at most by one.} We investigated how to deploy such a canonical loss function over the domain of $k$-subsets, as recent works indicate strong utility guarantees for canonical functions in general~\cite{asi2020near,asi2020instance,medina2020duff}. Prior work on related loss functions for top-$k$ over $d$ items \cite{joseph2021joint} specialized the exponential mechanism to improve the sampling time from $O(d^k)$ to $\tilde{O}(dk^3)$. As this is still too costly for large $k$, we develop faster methods. By plugging the canonical loss function into the Lipschitz mechanism with $\kappa = 1$, we obtain the Canonical Lipschitz mechanism ($\CANONICAL$) 
and its faster variant ($\CANONICAL_{\gamma = 1}$):

\SINGLEDOUBLE{%
\begin{center}
\begin{tabular}{lll}
\toprule
Mechanism & Time & Space \\
\midrule
$\textsc{naive}$ & $O(d^k)$ & $O(k)$ \\
$\textsc{joint}$ \cite{joseph2021joint} & $O(d k^3 \log k)$ & $O(dk)$ \\
$\PEELING$ \cite{durfee2019practical} & $O(dk)$ & $O(k)$ \\
$\ONESHOT_{\text{Gumbel}}$ \cite{durfee2019practical} & $O(d)$ & $O(k)$ \\
$\ONESHOT_{\text{Laplace}}$ \cite{qiao2021oneshot} & $O(d)$ & $O(k)$ \\
\midrule
$\ONESHOT_{\text{Exp}}$ & $O(d)$ & $O(k)$ \\
$\textsc{canonical}$ & $O(dk+d\log d)$ & $O(k)$ \\
$\textsc{canonical}_{\gamma = 1}$ & $O(d\log d)$ & $O(k)$ \\
\bottomrule
\end{tabular}
\end{center}
}{%
{%
\small%
\begin{tabular}{lll}
\toprule
Runtime & Prior Mechanisms & Contribution \\
\midrule
\footnotesize$O(d^k)$ & naively applied mechanisms & - \\
\footnotesize$\tilde{O}(dk^3)$ & \footnotesize$\JOINT$ \cite{joseph2021joint} & - \\
\footnotesize$\tilde{O}(dk)$ & \footnotesize$\PEELING$ \cite{durfee2019practical} & \footnotesize$\CANONICAL$ \\
\footnotesize$\tilde{O}(d)$ & \footnotesize$\ONESHOT_{\text{Laplace}}$ \cite{qiao2021oneshot} & \footnotesize $\textsc{canon.}_{\gamma = 1}$ \\
\bottomrule
\end{tabular}%
}%
}

\REVISE{$\CANONICAL$ and $\textsc{canonical}_{\gamma = 1}$ are the first approaches that can efficiently instantiate the exponential mechanism to obtain a probability distribution over all subsets . This is not known to be possible with $\PEELING$ or $\ONESHOT$.}{

$\CANONICAL$ can be sampled in $\tilde{O}(dk)$ and $\CANONICAL_{\gamma=1}$ in $\tilde{O}(d)$ time, both storing only a handful auxiliary values (cf., Appendix~\ref{sec:suppimpl}). Their subset probabilities (when instantiating the exponential mechanism) can be obtained in time and space matching their sampling time complexities.
}
\vspace{-0.75cm}
\paragraph{III. Composition-Free vs Sequential Composition.} Traditionally,  top-$k$ has been approached by $k$ repeated $(\eps/k)$-DP selections without-replacement ($\PEELING$), which is $\eps$-DP via sequential composition \cite{mcsherry2009privacy}. 
We show\footnote{We consider two different forms of analysis. One is based on investigating noise terms exploiting the additive noise formulation of the Lipschitz mechanism (cf. Theorem~\ref{lem:leapcomparison}) and the other is based on classical utility loss bounds (cf. Theorem~\ref{thm:utilcomparison}). In both analyses, we obtain the result of an improvement by a factor $\Omega(\log k)$ for $\frac{d}{k} = O(1)$ .} that composition-free methods \REVISE{such as}{like} $\CANONICAL$ improve utility guarantees by a factor $\Omega(\log k)$ compared to $\PEELING$ and our experiments {on synthetic and \REVISE{real}{real-world} data } show similarly conclusive results. 

\SINGLEDOUBLE{\clearpage}{}

Notational conventions can be found below. 

\SINGLEDOUBLE{%
\begin{center}%
\begin{tabular}{l}
\toprule
$\log$ is the natural logarithm, i.e., $\log(\exp(1)) = 1$ \\
$\sgn(0) = 0$ and $\sgn(x) = x/|x|$ for $x \in \mathbb{R}_{\neq 0}$ \\
${d \choose 0} = 1$  and ${d \choose k} = \prod_{i = 1}^k \frac{d+1-i}{i}$ for $d \in \mathbb{Z}_{\ge 0},k \in \mathbb{N} $ \\
$\min(\vec{a}) = \min \{ \vec{a}_1, \ldots, \vec{a}_d \}$ for $\vec{a} \in \mathbb{R}^d$ \\
$\max(\vec{a}) = \max \{ \vec{a}_1, \ldots, \vec{a}_d \}$ for $\vec{a} \in \mathbb{R}^d$ \\
$\|\vec{a}-\vec{b}\|_\infty = \max_{i = 1}^d |\vec{a}_i-\vec{b}_i|$ for $\vec{a},\vec{b} \in \mathbb{R}^d$ \\
$\kargmax{k}{j \in \mathcal{J}} f(j)$ are the $k$ last $j$'s sorted by $f(j)$ \\
(sorting ties are broken arbitrarily) \\
mechanisms select from $\mathbb{Y}$ with privacy loss $\eps \in \mathbb{R}_{\ge 0}$\\
$\tilde{O}(\ldots)$ hides log terms and $\propto$ hides normalization terms \\
\bottomrule
\end{tabular}
\end{center}
}{%
\begin{tabular}{l}
\toprule
$\log$ is the natural logarithm, i.e., $\log(\exp(1)) = 1$ \\
$\sgn(0) = 0$ and $\sgn(x) = x/|x|$ for $x \in \mathbb{R}_{\neq 0}$ \\
${d \choose 0} = 1$  and ${d \choose k} = \prod_{i = 1}^k \frac{d+1-i}{i}$ for $d \in \mathbb{Z}_{\ge 0},k \in \mathbb{N} $ \\
$\min(\vec{a}) = \min \{ \vec{a}_1, \ldots, \vec{a}_d \}$ for $\vec{a} \in \mathbb{R}^d$ \\
$\max(\vec{a}) = \max \{ \vec{a}_1, \ldots, \vec{a}_d \}$ for $\vec{a} \in \mathbb{R}^d$ \\
$\|\vec{a}-\vec{b}\|_\infty = \max_{i = 1}^d |\vec{a}_i-\vec{b}_i|$ for $\vec{a},\vec{b} \in \mathbb{R}^d$ \\
$\kargmax{k}{j \in \mathcal{J}} f(j)$ are the $k$ last $j$'s sorted by $f(j)$ \\
(sorting ties are broken arbitrarily) \\
mechanisms select from $\mathbb{Y}$ with privacy loss $\eps \in \mathbb{R}_{\ge 0}$\\
$\tilde{O}(\ldots)$ hides log terms and $\propto$ hides normalization terms \\
\bottomrule
\end{tabular}%
}

\section{Lipschitz Mechanism for Discrete Selection } \label{sec:lipschitzintro}

\begin{definition}[Lipschitz Mechanism] \label{def:lipschitz}

Let $F, F^{-1}$ be a pair of a cumulative and an inverse distribution function for which $\log(1-F(x))$ is $1$-Lipschitz, i.e., for any $x,c \in \mathbb{R}$:

$$|\log(1-F(x))-\log(1-F(x+c))| \le \REVISE{c}{|c|}$$

 Let $\eps \in \mathbb{R}_{>0}, \kappa \in \mathbb{N}$, $\hat{x} \in \mathbb{X}$ be a  data object, $\mathbb{Y}$ be the selection domain of the mechanism,  and $U_y \sim Unif(0,1)$ be independent for each possible output $\y \in \mathbb{Y}$. Let ${\loss( y \mid \hat{x} )}$ for $y \in \mathbb{Y}$ have sensitivity $\Delta_{\textsc{loss}}$. Then the output of the Lipschitz mechanism is:
$$\{Y_1,\ldots, Y_\kappa\} = \kargmax{\kappa}{\y \in \mathbb{Y}} \left\{ \frac{ \, \loss(y \mid \hat{x})}{-2\kappa\Delta_{\textsc{loss}} / \eps}  +F^{-1}( U_\y ) \right\}$$
\end{definition}

Each $\loss$ value is negated/rescaled and then distorted by adding noise terms generated via inverse transform sampling. Then the items with the $\kappa$ largest noisy values are reported. The $\eps$-DP proof (cf. Theorem~\ref{thm:lipschitz}) follows from the Lipschitz condition. This condition is \REVISE{satisfied e.g.,}{for instance satisfied} by the standard  Laplace, Gumbel, Exponential Distribution and  {(Half-)Logistic} distribution functions (cf. Appendix~\ref{sec:supplipschitz}). 

The standard exponential distribution and $F^{-1}(p) = -\ln(1-p)$ satisfies the Lipschitz condition tightly \REVISE{($1-F(x)) = \exp(-x)$)}{($\log(1-F(x)) = -x$)}, has the smallest variance amongst the considered distributions, and consistently performed best in our experiments (cf. Figure~\ref{fig:lipschitz} in Appendix~\ref{sec:repl}).  The Laplace distribution matches the exponential distribution except that it \REVISE{multiplies generated values with a random sign, i.e.,their absolute value is}{selects a random sign i.e., absolute values are still} exponentially distributed. \REVISE{It}{Laplace noise} tends to \REVISE{adds more noise}{distort the loss values more than exponential noise} due to opposing signs. The Gumbel distribution with $F^{-1}(p) = -\ln(- \ln(p))$ is often used as the default to simplify the analysis and because it allows to easily derive probabilities as it instantiates the exponential mechanism and probabilities are therefore proportional to simple exponential terms. \REVISE{}{These benefits come at the price of slightly higher loss value distortion.}

\subsection{Lipschitz mechanism for Discrete Selection over exponentially large selection domains}  \label{sec:fastlipschitz}

Naively sampling the Lipschitz mechanism over a selection domain $\mathbb{Y}$ takes $O(\kappa |\mathbb{Y}|)$ time. This is problematic if $\mathbb{Y}$ is exponentially large as is the domain of  top-$k$ selection. Yet, it can be evaded if $\loss$-values can be grouped into few classes with distinct $\loss$-values. For large groups with the same loss value, only the group members with the $\kappa$ largest noise values can ever be selected and each group member has the same probability to receive those noise values. Hence, it suffices to generate the $\kappa$ largest noise values per group. This can be done efficiently using order statistics for uniform random variables, because $F^{-1}$ is strictly increasing and each needed order statistic can be generated sequentially in $O(1)$ arithmetic operations \cite{lurie1972machine}. In the special case of $\kappa = 1$, only one noise term per group needs to be generated. Let $U_0, \ldots, U_m$ be i.i.d. standard uniform random variables used to generate the noise terms of a group of size $m$ via inverse transform sampling. From order statistics of standard uniform variables, it follows that ${U_0}^{1/m}$ is distributed equally to $\max \{ U_1, \ldots, U_m \}$. Thus, $F^{-1}( {U_0}^{1/m})$ directly generates the maximal noise term of the group and its elements have a uniform probability $\frac{1}{m}$ of receiving this maximal noise term. 

\section{Canonical Lipschitz Mechanism for Top-$k$} \label{sec:canonicallipschitz}

\subsection{Canonical Loss Function}

Let $f$ be some scoring function over a set of items $\{1, \ldots, d\}$ with sensitivity $\Delta_f$. Let $\mathbb{Y}$ be all $k$-subsets of $\{1, \ldots, d\}$. Let for any $\y \in \mathbb{Y}$ the utility loss $\loss(y \mid \vec{x}) $ be a shorthand for $\loss(y \mid \hat{x})$, where each component $\vec{x}_j = f( j \mid \hat{x} )/\Delta_f$ for $j \in \{1, \ldots, d\}$.

 A common loss function for many problems is to quantify how much the data object $\hat{x}$ would need to change in terms of participants  $\textsc{users}(\hat{x})$ for a potential output $y$ to become optimal.  This has been shown to offer strong utility guarantees for a large class of problems \cite{asi2020near,asi2020instance,medina2020duff}.  The optimality zone $OPT^{-1}(y)$ of a potential output $y$ is the vector space of all $d$-dimensional vectors whose $k$ largest values are in the indices  $y_1,\ldots,y_k$.
 The canonical loss function is then the distance to the nearest data vector in $OPT^{-1}(y)$:

$$\loss(y \mid \vec{x}) =  \min_{\, \vec{v} \in \OPT^{-1}(y) \,} \|\vec{x}-\vec{v}\|_\infty$$

which has sensitivity $1$ (see Lemma~\ref{thm:canonicalloss} in Appendix~\ref{sec:proofs}).

This definition of canonical loss is real-valued unlike in \cite{asi2020near,asi2020instance,medina2020duff}. However, after rounding up the value can be interpreted as the number of users needed to transform $\vec{x}$ into $\vec{z}$ s.t. $y \in \OPT(\vec{z})$:

\begin{lemma}
The function $\loss^{*}(y \mid \vec{x}) = \lceil \loss(y \mid \vec{x}) \rceil$ has sensitivity $\Delta_{\textsc{loss}^{*}} = 1$.
\end{lemma}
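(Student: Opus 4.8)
The plan is to reduce the statement to an elementary property of the ceiling function, since the preceding discussion already supplies the hard input: $\loss(y\mid\vec{x})$ has sensitivity $\Delta_{\textsc{loss}}=1$ (Lemma~\ref{thm:canonicalloss}), and, because $\OPT^{-1}(y)$ is a nonempty full-dimensional cone, $\loss(y\mid\vec{x})$ is a finite nonnegative real, so $\loss^{*}(y\mid\vec{x})=\lceil\loss(y\mid\vec{x})\rceil$ is a well-defined nonnegative integer. First I would fix an arbitrary $k$-subset $y\in\mathbb{Y}$ and an arbitrary pair of objects $\hat{a},\hat{b}\in\mathbb{X}$ whose user sets $\textsc{users}(\hat{a}),\textsc{users}(\hat{b})$ have Hamming distance $1$, and set $u=\loss(y\mid\hat{a})$, $v=\loss(y\mid\hat{b})$, so that $|u-v|\le 1$ by sensitivity of $\loss$.

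The key step is the claim: for real numbers $u,v$ with $|u-v|\le 1$ one has $|\lceil u\rceil-\lceil v\rceil|\le 1$. To see this, assume without loss of generality $u\le v$, so $u\le v\le u+1$; applying the monotone map $t\mapsto\lceil t\rceil$ and using $\lceil u+1\rceil=\lceil u\rceil+1$ (valid since $1\in\mathbb{Z}$) gives $\lceil u\rceil\le\lceil v\rceil\le\lceil u\rceil+1$, hence $\lceil v\rceil-\lceil u\rceil\in\{0,1\}$. Combining this with $|u-v|\le 1$ yields $|\loss^{*}(y\mid\hat{a})-\loss^{*}(y\mid\hat{b})|=|\lceil u\rceil-\lceil v\rceil|\le 1$ for every admissible $y,\hat{a},\hat{b}$, which is exactly $\Delta_{\textsc{loss}^{*}}\le 1$. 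Tightness (that the sensitivity is $1$ and not smaller) is immediate because $\loss^{*}$ is integer-valued yet nonconstant, e.g. for a pair of neighbouring objects whose $\loss$-values straddle an integer the difference of ceilings is exactly $1$; alternatively, since $\loss^{*}$ is $\mathbb{Z}$-valued, any nonzero sensitivity is at least $1$.

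I do not expect a real obstacle here; the only point requiring care is that the ceiling step genuinely exploits that the input sensitivity bound is the \emph{integer} $1$. The map $t\mapsto\lceil t\rceil$ is not $1$-Lipschitz (e.g. $\lceil 0.9\rceil-\lceil 1.1\rceil$ has magnitude $1$ while $|0.9-1.1|=0.2$), so this argument does not show that an arbitrary sensitivity-$c$ function has a sensitivity-$c$ ceiling for non-integer $c$; it is specifically the coincidence of the bound with a lattice spacing that makes it work. One may optionally also record the intended semantic reading — that $\lceil\loss(y\mid\vec{x})\rceil$ equals the least number of users one must change to produce a $\vec{z}$ with $y\in\OPT(\vec{z})$ — by noting that $\loss(y\mid\vec{x})\le m$ for an integer $m$ iff there is an integer-feasible perturbation within $\ell_\infty$-ball of radius $m$; but this is not needed for the sensitivity claim and I would relegate it to a remark.
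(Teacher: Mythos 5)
Your proof is correct and takes essentially the same route as the paper's: both arguments hinge on the monotonicity of the ceiling function together with the integer-shift identity $\lceil a\pm 1\rceil=\lceil a\rceil\pm 1$, applied to a sensitivity-$1$ perturbation of the real-valued canonical loss. The extra remarks on tightness and on ceiling not being $1$-Lipschitz for non-integer bounds are correct but unnecessary for the claim, since the paper's notion of sensitivity is an upper bound.
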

\begin{proof}
The canonical loss function $\loss(y \mid \vec{x}) \ge 0$ and has sensitivity $1$. Thus, if $a \in \mathbb{R}_{\ge 0}$ is replaced with a value in $[\max(0,a-1),a+1]$, then $b = \lceil a \rceil$ is replaced by an integer between $b-1 = \lceil a-1 \rceil $ and $b+1 = \lceil a+1 \rceil$.
\end{proof}

\subsection{Top-$k$}

\begin{definition}[score vector]
Let $f$ be a scoring function with sensitivity $\Delta_f$. 
Items $\{1, \ldots, d\}$ are assigned 
scores $\vec{x} \in \mathbb{R}^d$ with $\vec{x}_j = f(j \mid \hat{x}) / \Delta_f$ for $j \in \{1, \ldots, d\}$. The (descending) order statistics of the components of $\vec{x}$  are:
$$\top{1} \ge \top{2} \ge \ldots \ge \top{d}$$

Let $\topitem{1}, \ldots, \topitem{d} \in \{1, \ldots, d\}$ be indices such that:
$$\vec{x}_{\topitem{1}} = \top{1}, \, \, \vec{x}_{\topitem{2}} = \top{2}, \, \, \ldots, \, \, \vec{x}_{\topitem{d}} = \top{d}$$

\end{definition}

The top-$k$ of a score vector are its $k$-largest components:

\begin{definition}[top-$k$] Let $\vec{x} \in \mathbb{R}^d$ and $k\in \mathbb{N}$. Then:
$$\OPT(\vec{x}) = \{\topitem{1}, \ldots, \topitem{k} \}= \kargmax{k}{i \in \{1, \ldots, d\}} \{ \vec{x}_i \}$$
\end{definition}

As releasing the top-$k$ under DP  is not always possible, it is useful to identify groups of subsets that are good approximations. For this purpose, all $k$-subsets are partitioned into disjoint utility classes based on an integer $h \in \{1, \ldots, k-1\}$ and an integer $t \in \{k, \ldots, d\}$, where $t = k$ is only allowed if $h = k-1$.  The integer $h$ \REVISE{is}{relates to} the highest missing rank from the subset and $t$ is the lowest present rank in the subset:

\begin{definition}[utility class]\label{def:subsetclass}
The utility class $\mathcal{C}_{h,t}$ is comprised of all subsets of the form $\{\topitem{1}, \ldots, \topitem{h} \} \, \cup \mathcal{B} \, \cup \{ \topitem{t} \}$ with $\mathcal{B} \subseteq \{ \topitem{h+1}, \ldots, \topitem{t-1} \}$ and $h+|\mathcal{B}|+1 = k$.
\end{definition}

Thus, each subset $y \in \mathcal{C}_{h,t}$ fully contains the top-$h$, while the remaining $k-h$ items are contained in the top-$t$. The subset $\{\topitem{1}, \ldots, \topitem{h} \}$ \REVISE{is referred to as}{can be thought of as the} ``head'', $\mathcal{B}$ as the ``body'' and $\topitem{t}$ as the ``tail'' of a subset. 

Let $y \in \mathcal{C}_{h,t}$ be a $k$-subset of $\{1, \ldots, d\}$, then (cf.  Lemma~\ref{lem:topkcanonical} in Appendix~\ref{sec:proofs}):

$$\loss(y  \mid \vec{x}) = \min_{\, \vec{v} \in \OPT^{-1}(y) \,} {\|\vec{x}-\vec{v}\|_\infty} = \frac{\top{h+1}-\top{t}}{2}$$

Intuitively, $\top{h+1}$ is the best item missing from $y$ and even the worst item $\top{t}$ in $y$ needs to overtake $\top{h+1}$. The factor $\frac{1}{2}$ is due to the fact that users can both increase $\top{t}$ by $1$ and decrease $\top{h+1}$ by $1$ such that overtaking takes half as much effort. This can be generalized by a parameter $\gamma \in [0,1]$ (matching the previous definition for $\gamma = \frac{1}{2}$ ):

\begin{theorem}[top-$k$ canonical loss function] \label{thm:topkcanonfunction}

Let $y \in \mathcal{C}_{h,t}$.

$$\loss(y  \mid \vec{x}) = (1-\gamma)\,\top{h+1}-\gamma\,\top{t}$$

has sensitivity $1$ for any $\gamma \in [0,1]$.
\end{theorem}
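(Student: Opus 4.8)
The plan is to rewrite the quantity $(1-\gamma)\top{h+1}-\gamma\top{t}$ in a form that is manifestly a convex combination of two low-sensitivity quantities, after which the bound becomes a one-line triangle inequality. First I would make the order statistics in the formula explicit in terms of $y$ alone. By Definition~\ref{def:subsetclass}, every subset $y\in\mathcal{C}_{h,t}$ contains all of $\topitem{1},\dots,\topitem{h}$, does not contain $\topitem{h+1}$, contains $\topitem{t}$, and is contained in $\{\topitem{1},\dots,\topitem{t}\}$; since the order statistics are sorted by value, this gives the identities
\[
\top{h+1}=\max_{j\in\{1,\dots,d\}\setminus y}\vec{x}_j,\qquad
\top{t}=\min_{j\in y}\vec{x}_j ,
\]
so that $\loss(y\mid\vec{x})=(1-\gamma)\max_{j\notin y}\vec{x}_j-\gamma\min_{j\in y}\vec{x}_j$. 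This reformulation no longer mentions $h$ or $t$, which matters because the class to which $y$ belongs can itself change when the data changes; the identity above holds for whatever class $y$ currently lies in. (The $\gamma=\tfrac12$ case of this is exactly Lemma~\ref{lem:topkcanonical}.)

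Given the reformulation, I would note that each coordinate map $\hat{x}\mapsto\vec{x}_j=f(j\mid\hat{x})/\Delta_f$ has sensitivity $1$ by construction (Definition~\ref{def:sensitivity}), hence so do $g(\hat{x}):=\max_{j\notin y}\vec{x}_j$ and $m(\hat{x}):=\min_{j\in y}\vec{x}_j$, since for neighbouring objects $\hat{a},\hat{b}$ we have $|g(\hat{a})-g(\hat{b})|\le\max_{j}|\vec{a}_j-\vec{b}_j|\le1$ and likewise $|m(\hat{a})-m(\hat{b})|\le1$. Then for any neighbours $\hat{a},\hat{b}$,
\[
\bigl|\loss(y\mid\hat{a})-\loss(y\mid\hat{b})\bigr|
\le(1-\gamma)\,|g(\hat{a})-g(\hat{b})|+\gamma\,|m(\hat{a})-m(\hat{b})|
\le(1-\gamma)+\gamma=1 ,
\]
where the first step is the triangle inequality and uses $\gamma\in[0,1]$ so that both coefficients are nonnegative, and the second uses the per-coordinate sensitivity bound. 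This is precisely $\Delta_{\textsc{loss}}=1$, consistent with sensitivity $1$ of the canonical loss for $\gamma=\tfrac12$ (cf. Lemma~\ref{thm:canonicalloss}).

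The only real obstacle is the reformulation step: one must check that $\top{h+1}$ and $\top{t}$ really are the largest score outside $y$ and the smallest score inside $y$, being mildly careful about ties in the order statistics and about the boundary cases of the class definition (e.g. $h$ at its extremes, or $t=k$). This is bookkeeping — Definition~\ref{def:subsetclass} pins down exactly which ranks are in and out of $y$ — rather than a genuine difficulty, and it is already implicit in Lemma~\ref{lem:topkcanonical}. It is worth recording why the obvious alternative fails: decomposing $(1-\gamma)\top{h+1}-\gamma\top{t}=\tfrac{\top{h+1}-\top{t}}{2}+(\tfrac12-\gamma)(\top{h+1}+\top{t})$ and bounding the two summands separately only yields sensitivity $2$, because $\top{h+1}+\top{t}$ has sensitivity $2$; keeping the expression as a single convex combination of $g$ and $m$ is what preserves the constant $1$.
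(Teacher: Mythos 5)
Your argument is essentially the paper's: both rewrite $\top{h+1}=\max_{j\notin y}\vec{x}_j$ and $\top{t}=\min_{j\in y}\vec{x}_j$, invoke the fact that extrema inherit per-coordinate sensitivity (Lemma~\ref{lem:extrema}), and conclude via $(1-\gamma)+\gamma=1$. You make explicit a point the paper leaves implicit — that the $h,t$-free reformulation is what lets the bound survive when a neighbouring dataset shifts $y$ to a different class $\mathcal{C}_{h',t'}$ — which is a worthwhile clarification but not a different proof.
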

\begin{proof}
If all values of a set change at most by $C \in \mathbb{R}$, then their extrema can also change at most by $C$ (see Lemma~\ref{lem:extrema} in Appendix~\ref{sec:proofs}). As the values of $\vec{x}_j$ for $j \in \{1, \ldots, d\} \setminus y$ and the values for $\vec{x}_j$ for $j \in y$ change at most by $1$, the values of $\top{h+1}$ and $\top{t}$ also change at most by  $1$, since $\top{h+1} = \max_{j \in \{1, \ldots, d\} \setminus y} \vec{x}_j$ and $\top{t} = \min_{j \in y} \vec{x}_j$.

The term $(1-\gamma)\,\top{h+1}$ can therefore change at most by $(1-\gamma)$ and the term $\gamma\,\top{t}$ can change by at most  $\gamma$. Thus, the difference can change by at most $(1-\gamma)+\gamma = 1$. 
\end{proof}

By plugging the function in Theorem~\ref{thm:topkcanonfunction} into the Lipschitz mechanism with $\kappa = 1$ and using the techniques in Section~\ref{sec:fastlipschitz} to deal with exponentially large selection domains, we obtain the Canonical Lipschitz mechanism, which can be sampled in $O(dk)$ (cf. Theorem~\ref{thm:implementation} in Appendix~\ref{sec:suppimpl}). 

Numerical precision can be either achieved by taking computations into the log-space\footnote{
One can use $F^{-1}(p) = -\log(-\log(p))$ from the Gumbel distribution where it simplifies to $F^{-1}( {U_{h,t}}^{1/|\mathcal{C}_{h,t}|}) = \ln(|\mathcal{C}_{h,t}|) + F^{-1}({U_{h,t}})$. As $|\mathcal{C}_{h,t}|$ are binomial coefficients that can be computed via multiplications, it is trivial to take the computations into the log-space.} or  via special libraries.

While Theorem~\ref{thm:implementation} applies to any $\gamma\in[0,1]$, the mechanism can be sampled in $O(d)$ time for $\gamma=1$, as only each $t \in \{k, \ldots, d\}$ has a distinct loss value. In this case \REVISE{}{there are} $\sum_{h = 0}^{k-1} |\mathcal{C}_{h,t}| = {{t-1} \choose {k-1}}$ \REVISE{}{subsets for each loss value $\top{t}$}. 

\newcommand{\G}[1]{\mathcal{G}^{\Data{}}_{#1}}

\newcommand{\given}{\, | \,}

\section{Differential Privacy Guarantees} \label{sec:lipschitz}

All approaches considered in this work are instantiated through the Lipschitz mechanism, which only requires a distribution choice s.t. $\ln(1-F(x))$ is $1$-Lipschitz for any $x \in \mathbb{R}$ and the $\loss$ function to have some known sensitivity $\REVISE{\Delta}{\Delta_{\loss}}$. Thus, it suffices to prove:

\begin{theorem} \label{thm:lipschitz}
 
The Lipschitz mechanism from Definition~\ref{def:lipschitz} is $\eps$-DP \REVISE{}{for any $\loss$ function with finite sensitivity $\Delta_{\loss}$}. 

\end{theorem}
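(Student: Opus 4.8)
The plan is to fix two data objects $\hat{a},\hat{b}\in\mathbb{X}$ whose user sets differ in Hamming distance $1$, fix an output $y\in\mathbb{Y}$ (in the top-$\kappa$ case, a $\kappa$-subset $\{y_1,\dots,y_\kappa\}$), and bound the log-ratio $\log\frac{\Pr[Y=y\mid\hat{a}]}{\Pr[Y=y\mid\hat{b}]}$ by $\eps$. Writing $g_z := \frac{\loss(z\mid\hat x)}{-2\kappa\Delta_{\loss}/\eps} = -\frac{\eps}{2\kappa\Delta_{\loss}}\loss(z\mid\hat x)$ for the rescaled negated loss, the event $\{Y=y\}$ (with $\kappa=1$ for concreteness) is exactly $\{\,g_y+F^{-1}(U_y)\ge g_z+F^{-1}(U_z)\text{ for all }z\,\}$. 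I would condition on $U_y$, i.e. on the noisy value $v:=g_y+F^{-1}(U_y)$ attained at $y$; then the conditional probability of $y$ winning factorizes over $z\neq y$ as $\prod_{z\neq y}\Pr[F^{-1}(U_z)\le v-g_z] = \prod_{z\neq y}F(v-g_z)$, using that $F^{-1}$ is the quantile function so $\Pr[F^{-1}(U_z)\le w]=F(w)$. Integrating over the law of $v$ gives $\Pr[Y=y\mid\hat x]=\int \big(\prod_{z\neq y}F(v-g_z)\big)\,dH_y(v)$ where $H_y$ is the CDF of $v$, which is just $F$ shifted by $g_y$.

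Next I would compare the two objects. Changing $\hat a$ to $\hat b$ moves each $\loss(z\mid\cdot)$ by at most $\Delta_{\loss}$, hence each $g_z$ by at most $\frac{\eps}{2\kappa}$ (with $\kappa=1$, by $\frac{\eps}{2}$). The key analytic input is the Lipschitz hypothesis: $\log(1-F)$ is $1$-Lipschitz, equivalently the survival function satisfies $1-F(x+c)\le e^{|c|}(1-F(x))$; I also need the companion bound on $F$ itself in the other direction. The cleanest route is to substitute $w=v-g_y$ so everything is written relative to $y$'s own shift, turning the integrand into $\prod_{z\neq y}F\big(w+(g_y-g_z)\big)$ against the fixed measure $dF(w)$. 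Now each gap $g_y-g_z$ changes by at most $\frac{\eps}{\kappa}$ in total between $\hat a$ and $\hat b$ — but this is where the factor-of-two matters: I should instead shift by the \emph{midpoint} $\frac{g_z^{(a)}+g_z^{(b)}}{2}$ (or rather center $v$ at $g_y$ consistently) so each individual gap moves by at most $\frac{\eps}{2\kappa}$ up and the reference point $w$ effectively by at most $\frac{\eps}{2\kappa}$, and the two halves combine. Then for each $z$, $F(w+\delta_z^{(a)})$ versus $F(w+\delta_z^{(b)})$ with $|\delta_z^{(a)}-\delta_z^{(b)}|\le \frac{\eps}{\kappa}$: using $1-F(w+\delta)\in[e^{-|\delta'-\delta|},e^{|\delta'-\delta|}]\cdot(1-F(w+\delta'))$ doesn't immediately bound the ratio of $F$'s, so the slicker argument is to bound the ratio of the whole integrals at once.

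Concretely, I would prove: if for every $z$ we replace $F(w+a_z)$ by $F(w+b_z)$ with $\sum$-type control, then $\int\prod_z F(w+b_z)\,dF(w+c)\le e^{\eps}\int\prod_z F(w+a_z)\,dF(w+c')$. The honest way is the standard report-noisy-max argument adapted to the Lipschitz distribution: bound $\Pr[Y=y\mid\hat b]$ by conditioning on $Y$'s noisy value at $y$ under $\hat b$, then (i) absorb the shift of $g_y$ into a change of integration variable that costs a factor $e^{\eps/2\kappa \cdot \kappa}=e^{\eps/2}$ via the Lipschitz bound on the density/survival ratio of $F$, and (ii) for each of the other coordinates, the fact that $\loss(z\mid\hat b)\ge \loss(z\mid\hat a)-\Delta_{\loss}$ means $F(w-g_z^{(b)})\le F(w-g_z^{(a)}+\frac{\eps}{2\kappa})$, and summing the "one-sided" Lipschitz slack over the product, together with the survival-function reformulation $F=1-(1-F)$, yields another $e^{\eps/2}$; multiplying gives $e^{\eps}$. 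For the top-$\kappa$ case the same computation runs with the threshold being the $\kappa$-th largest noisy value, the product is over the $d-\kappa$ losers, each gap still moves by at most $\frac{\eps}{2\kappa}$, and the extra factor $\kappa$ in the loss rescaling exactly compensates, so the bound remains $e^{\eps}$.

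The main obstacle is step (ii): getting the constant exactly right so that the two $e^{\eps/2}$ contributions — one from shifting the "winner" coordinate's noise, one from the monotone degradation of all "loser" coordinates — multiply to $e^{\eps}$ rather than $e^{2\eps}$, and making this rigorous simultaneously for all $\kappa\ge 1$ (where the loss is divided by $2\kappa\Delta_{\loss}/\eps$) without an implicit union bound. The Lipschitz condition is engineered precisely so that the survival-function ratio telescopes cleanly through the product; the care needed is in choosing the right centering of the integration variable (the midpoint of the two shifts of $g_y$) so that neither side pays more than half of $\eps$.
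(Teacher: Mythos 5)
You chose to condition on the \emph{winner's} noisy value $v = g_y + F^{-1}(U_y)$, which yields $\Pr[Y=y] = \int \prod_{z\neq y} F(v-g_z)\,dH_y(v)$ — a product of CDFs $F$ over the $d-\kappa$ losers, integrated against a shifted copy of $F$. This is the crux of the difficulty you yourself flag at the end, and it is a genuine gap: the hypothesis is that $\log(1-F)$ is $1$-Lipschitz, which controls ratios of \emph{survival} functions $1-F$ but says nothing about ratios of $F$ itself. For distributions with a left boundary (e.g.\ the standard Exponential, $F(x)=0$ for $x<0$), the ratio $F(x+c)/F(x)$ is unbounded near the boundary, so there is no term-by-term multiplicative bound on $\prod_{z\neq y} F(v-g_z)$, and your step (ii) (``summing the one-sided Lipschitz slack over the product...yields another $e^{\eps/2}$'') cannot be made rigorous as stated. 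Even ignoring this, there are $d-\kappa$ loser terms, so any per-factor slack of size $\Theta(\eps/\kappa)$ would compound to something depending on $d$, not to $e^{\eps/2}$.

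The paper's proof sidesteps this entirely by conditioning on the other natural variable: the $(\kappa+1)$-th largest noisy value $x$, i.e.\ the maximum over the losers. This gives
$\Pr[Y=a] = \int \Pr[\min(\vec A) > x]\,\Pr[\max(\vec B)=x]\,dx$,
where $\Pr[\min(\vec A)>x] = \prod_{i=1}^{\kappa}\bigl(1 - F(x - \tfrac{\eps}{2\kappa\Delta}\vec a_i)\bigr)$ is a product of exactly $\kappa$ survival functions over the \emph{winners} — the objects the Lipschitz hypothesis actually controls. Combined with the shift-invariance trick (shift $\vec a$ up and $\vec b$ down by $\omega\Delta$, then add $\omega\Delta$ to everything to leave $\vec b$ unchanged and double the shift on $\vec a$), the loser-side density $\Pr[\max(\vec B)=x]$ becomes literally independent of the perturbation. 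The entire $e^{\eps}$ then comes from $\kappa$ factors each fluctuating by at most $e^{\eps/\kappa}$, with no residual contribution from the losers and no union bound. You gesture at ``midpoint centering'' and at bounding the whole integral at once, which is the right instinct, but the concrete missing ingredients are (a) conditioning on the threshold rather than the winner's value so that the Lipschitz bound applies to every factor that appears, and (b) the shift-invariance reduction that makes the loser side inert.
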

\begin{proof}

We use here the same variables as in Definition~\ref{def:lipschitz}. Let $\Delta$ be a shorthand for $\Delta_{\textsc{loss}}$. Let $a = a_1, \ldots, a_\kappa$ be an arbitrary $\kappa$-subset of $\mathbb{Y}$ and $b = \{ b_1, \ldots, b_{d-\kappa} \} = \mathbb{Y} \setminus a$ be 
the $d-k$ items missing from $a$. Let the loss $\vec{a} = [{\loss(a_1 \mid \hat{x})}, \ldots, {\loss(a_\kappa  \mid \hat{x})} ]^T$ and $\vec{b} = [{\loss(b_1  \mid \hat{x})}, \ldots, {\loss(b_{d-\kappa} \mid \hat{x})}]^T$. Let the noisy normalized loss $\vec{A} = \frac{\eps}{2 \kappa \Delta} \vec{a} + [F^{-1}(U_1), \ldots, F^{-1}(U_\kappa) ]^T$ and $\vec{B} = \frac{\eps}{2 \kappa \Delta} \vec{b} + [F^{-1}(U_{\kappa+1}), \ldots, F^{-1}(U_d) ]^T$. 

Let $Y = \{Y_1,...,Y_k\}$. Then based on minimal and maximal component values of $\vec{A}$ and $\vec{B}$ the probability $Pr[Y = a]$ is equal to:

$$\int_{-\infty}^{+\infty}  Pr[\min(\vec{A}) > x] \cdot Pr[\max(\vec{B}) = x] \, dx$$

The integration variable $x$ is the $(\kappa+1)$-largest noisy value overall. Thus, it covers all possible events where $\vec{A}$ are the $\kappa$-largest noisy values and all events are disjoint, differing at least in the $(\kappa+1)$-largest noisy value.

Let $\omega \in [-1,+1]$. Then replacing $\vec{a}$ with $\vec{a}+2\omega\Delta$ maximizes a user's impact on $Pr[Y = a]$, due to the following properties that follow from the additive noise framework:

\begin{itemize}
\item[(i.)] Monotonicity: Replacing $\vec{a}$ with $\vec{a}+\omega\Delta$ and  $\vec{b}$ with 
$\vec{b}-\omega\Delta$ maximizes a user's impact on $Pr[Y = a]$.
\item[(ii.)] Shift-Invariance: Replacing both $\vec{a}$ and $\vec{b}$ with $\vec{a}+\omega\Delta$ and $\vec{b}+\omega\Delta$ has no effect on $Pr[Y = a]$.
\end{itemize}

It is easy to see that (ii.) can be used to cancel out changes to $\vec{b}$ in (i.) by doubling down on changes to $\vec{a}$. For $\kappa = 1$, this corresponds to the claims about regular mechanisms in \cite{mckenna2020permute}.

As $Pr[\max(\vec{B}) = x]$ is independent of $\vec{a}$, it is only left to show that $Pr[\min(\vec{A}) > x]$ changes by at most a multiplicative factor $\exp(\eps)$ due to a single user.

From inverse transform sampling with $U_i \sim Unif(0,1)$ follows $Pr[ \vec{A}_i > x] = Pr[ \left(\frac{\eps}{2 \kappa \Delta} (\vec{a}_i) + F^{-1}(U_i)\right) > x]$ is for any $i \in \{1, \ldots, \kappa\}$ equal to:
\begin{align*}
 Pr\Big[ \, F^{-1}(U_i)  > \,  x-\frac{\eps}{2\kappa\Delta} \vec{a}_i \, \Big] = \, 1 \, - F\big(x-\frac{\eps}{2\kappa\Delta} \vec{a}_i\big) \\ 
\end{align*}


Replacing the normalized loss $\vec{a}$ with $\vec{a}+2\omega\Delta$ replaces $\frac{\eps}{2\kappa\Delta}\vec{a}$ with $\frac{\eps}{2\kappa\Delta}(\vec{a}+2\omega\Delta) = \frac{\eps}{2\kappa\Delta} \vec{a}+\omega \frac{\eps}{\kappa}$. This means the noisy normalized loss $\vec{A}$ is replaced with $\vec{A}+\omega \frac{\eps}{\kappa}$ and the probability $Pr[\min(\vec{A}+\omega \frac{\eps}{\kappa}) > x]$ is equal to:

$$\prod_{i = 1}^\kappa Pr[ \vec{A}_i+\omega \frac{\eps}{\kappa} > x] = \prod_{i = 1}^\kappa \left[ 1-F(x-\frac{\eps}{2\kappa\Delta}\vec{a}_i -\omega \frac{\eps}{\kappa}) \right]$$

The $1$-Lipschitz condition on $\log(1-F(x))$ implies $\frac{1-F(x)}{\exp(c)} \le 1-F(x+c) \le \exp(c)(1-F(x))$ for any $c \in \mathbb{R}_{\ge 0}$. Thus, with $c = \omega \frac{\eps}{\kappa}$ the value of $1-F(x-\frac{\eps}{2\kappa\Delta}\vec{a}_i -c)$ fluctuates at most by a multiplicative factor $\exp(c) \le \exp(\eps/\kappa)$ and $Pr[\min(\vec{A} > x]$ fluctuates at most by a multiplicative factor $\prod_{i=1}^\kappa \exp(\eps/\kappa) = \exp(\eps)$ due to $\omega \in [-1, +1]$.
\end{proof}

\section{Theoretical Comparison} \label{sec:theory}

\subsection{Comparison of Canonical for different $\gamma$}

The subset classes $\mathcal{C}_{h,t}$ have better utility for larger $h$ and smaller $t$. The parameter $\gamma$ governs how much $\CANONICAL$ prioritizes $t$  against $h$. Inversely, for small values of $\gamma$ the  mechanism improves $h$, but neglects $t$. By trading privacy loss for utility \REVISE{it}{\CANONICAL{} with smaller $\gamma$} can \REVISE{improve for $t$ as well and}{become as good as } \REVISE{catch up with }{\CANONICAL{} with} larger $\gamma$  \REVISE{values}{at minimizing $t$}:

\begin{lemma}[internal superiority] \label{lem:superior}
For $\Gamma_2 > \Gamma_1$ and $\eps_1 = ({\Gamma_2}/{\Gamma_1}) \eps_2 < \eps_2$,  $\eps_1$-DP $\CANONICAL_{\gamma = \Gamma_1}$ has superior utility to $\eps_2$-DP $\CANONICAL_{\gamma = \Gamma_2}$, i.e., for any $y,y' \in \mathbb{Y}$ 

{
\footnotesize
\begin{align*}
\frac{\left|Pr[y \mid \hat{x},\gamma = \Gamma_1, \eps = \eps_1]-Pr[y' \mid \hat{x},\gamma = \Gamma_1, \eps = \eps_1]\right|}{\left|Pr[y \mid \hat{x},\gamma = \Gamma_2, \eps = \eps_2]-Pr[y' \mid \hat{x},\gamma = \Gamma_2, \eps = \eps_2]\right|} > 1 \\
\end{align*}
}
\end{lemma}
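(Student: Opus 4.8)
The key observation is that the Canonical Lipschitz mechanism with $\kappa=1$ assigns to each subset $y \in \mathcal{C}_{h,t}$ a selection probability that depends on $\hat{x}$ only through the loss value $\loss(y \mid \vec{x}) = (1-\gamma)\top{h+1} - \gamma\top{t}$, and more precisely only through the rescaled loss $\frac{\loss(y\mid\vec x)}{-2\Delta_{\loss}/\eps} = -\frac{\eps}{2}\loss(y\mid\vec x)$ (since $\kappa=1$, $\Delta_{\loss}=1$). So the plan is: first write out explicitly, for a fixed noise distribution $F$, the probability $Pr[y \mid \hat x, \gamma, \eps]$ as a function of the collection of rescaled losses across all subsets — this is a smooth, strictly increasing function of $-\frac{\eps}{2}\loss(y\mid\vec x)$ relative to the other subsets' rescaled losses. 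Then observe that the \emph{only} place $\gamma$ and $\eps$ enter is through the products $\eps(1-\gamma)$ and $\eps\gamma$, i.e. through the pair of effective weights on $\top{h+1}$ and $\top{t}$ inside the exponentials.

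Second, I would compare the two parameter settings. With $\gamma=\Gamma_1$ and $\eps_1 = (\Gamma_2/\Gamma_1)\eps_2$, the weight on the tail term is $\eps_1 \Gamma_1 = \eps_2\Gamma_2$ — identical to the weight under $(\gamma=\Gamma_2,\eps=\eps_2)$. Meanwhile the weight on the head term is $\eps_1(1-\Gamma_1) = \eps_2\Gamma_2(1-\Gamma_1)/\Gamma_1$ versus $\eps_2(1-\Gamma_2)$; since $\Gamma_2 > \Gamma_1$ one checks $\Gamma_2(1-\Gamma_1)/\Gamma_1 > 1-\Gamma_2 \iff \Gamma_2/\Gamma_1 > (1-\Gamma_2)+\Gamma_2 = 1$, wait — more carefully, $\Gamma_2(1-\Gamma_1) > \Gamma_1(1-\Gamma_2) \iff \Gamma_2 - \Gamma_1\Gamma_2 > \Gamma_1 - \Gamma_1\Gamma_2 \iff \Gamma_2 > \Gamma_1$, which holds. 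So passing from $(\Gamma_2,\eps_2)$ to $(\Gamma_1,\eps_1)$ keeps the tail weight fixed but \emph{strictly increases} the head weight. Intuitively this sharpens the distribution: it amplifies the utility differences coming from $\top{h+1}$ while leaving those from $\top{t}$ untouched, so the gap $|Pr[y\mid\cdot] - Pr[y'\mid\cdot]|$ can only grow.

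Third — and this is where the real work sits — I need to turn "sharpening the weights" into a rigorous statement that $|Pr[y] - Pr[y']|$ strictly increases for \emph{every} pair $y, y'$, not just in aggregate. The clean route is to parametrize a one-parameter family interpolating the two weight vectors (scale the head weight up continuously from $\eps_2(1-\Gamma_2)$ to $\eps_1(1-\Gamma_1)$, tail weight fixed) and differentiate $Pr[y\mid\cdot] - Pr[y'\mid\cdot]$ with respect to that scaling parameter, showing the derivative has constant sign matching the sign of the original difference; equivalently, show $Pr[y]/Pr[y']$ is monotone in the scaling parameter whenever $\loss(y) \ne \loss(y')$ and the ratio moves away from $1$. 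For subsets with equal loss value the probabilities are equal on both sides, so the ratio of differences is handled by a limiting/continuity argument or by noting both numerator and denominator are simultaneously zero — here I'd actually want to be careful and restrict the claim to pairs with distinct losses, or interpret $0/0$ appropriately. The main obstacle I anticipate is precisely this uniformity over all $(y,y')$ and the degenerate-pair bookkeeping: the inequality $>1$ in the lemma statement presumes the denominator is nonzero, so I expect the proof to implicitly assume $Pr[y\mid\Gamma_2,\eps_2] \ne Pr[y'\mid\Gamma_2,\eps_2]$, and then the argument reduces to a single-variable monotonicity/log-convexity computation on the exponential-family form of the probabilities, which is routine once set up. An alternative, possibly cleaner, obstacle-avoiding route is to use the additive-noise representation directly: couple the two mechanisms on the same uniforms $U_y$, note that the noisy score of $y$ under $(\Gamma_1,\eps_1)$ equals that under $(\Gamma_2,\eps_2)$ plus a deterministic nonnegative shift proportional to $\top{h+1}$ (after removing a common additive constant via shift-invariance), and argue the argmax concentrates more — but making "concentrates more" precise still ultimately needs the same monotonicity lemma, so I'd lead with the exponential-family computation.
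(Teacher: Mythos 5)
Your first two steps — hold the tail weight $\eps\gamma$ fixed at $\eps_1\Gamma_1=\eps_2\Gamma_2$ while the head weight $\eps(1-\gamma)$ strictly increases, since $\Gamma_2(1-\Gamma_1)>\Gamma_1(1-\Gamma_2)$ — are precisely the paper's argument: the paper writes the ratio of rescaled loss gaps $\frac{\eps_1\left|(1-\Gamma_1)(\top{h+1}-\top{h'+1})-\Gamma_1(\top{t}-\top{t'})\right|}{\eps_2\left|(1-\Gamma_2)(\top{h+1}-\top{h'+1})-\Gamma_2(\top{t}-\top{t'})\right|}$, asserts it exceeds $1$ because ``$\eps_1/\eps_2$ scales $\Gamma_1$ to $\Gamma_2$,'' and stops. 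It never bridges from a loss-gap statement to the probability-gap statement in the lemma, and, worse, the loss-gap assertion itself is not justified.

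Your hesitation at step~3 is exactly the right instinct, and it is not mere bookkeeping: the claim genuinely fails for some pairs. In the ratio above the tail contribution is identical in numerator and denominator, while the head coefficient is strictly larger in the numerator; but $\top{h+1}-\top{h'+1}$ and $-(\top{t}-\top{t'})$ enter the difference with independent signs, so when $(1-\gamma)(\top{h+1}-\top{h'+1})$ and $\gamma(\top{t}-\top{t'})$ have the same sign they partially cancel, and \emph{raising} the head coefficient drives the net gap toward (or through) zero rather than amplifying it. Concretely, take equally spaced scores so that $\top{1}-\top{2}=\top{3}-\top{4}=1$, and compare $y\in\mathcal{C}_{0,3}$ with $y'\in\mathcal{C}_{1,4}$: at $\Gamma_1=\tfrac12$, $\eps_1=2\eps_2$, the numerator loss gap is $\eps_1\left|\tfrac12\cdot 1 -\tfrac12\cdot 1\right|=0$, forcing $\Pr[y\mid\Gamma_1,\eps_1]=\Pr[y'\mid\Gamma_1,\eps_1]$, while at $\Gamma_2=1$ the denominator gap is $\eps_2\left|0\cdot1-1\cdot1\right|=\eps_2\neq 0$; the ratio in the lemma is $0$, not $>1$. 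So if you carry out your interpolation-and-monotonicity program honestly, it will surface that the inequality cannot hold \emph{for every} pair $y,y'\in\mathbb{Y}$ as stated — it would need to exclude pairs whose head and tail contributions compete — and the paper's terse proof omits exactly the step that would have revealed this. (You also correctly flag the $0/0$ degenerate case; the example above is the sharper failure where only the numerator vanishes.) Separately, note the hypothesis $\eps_1=(\Gamma_2/\Gamma_1)\eps_2<\eps_2$ in the statement is a typo: $\Gamma_2>\Gamma_1$ forces $\eps_1>\eps_2$, which is consistent with Corollary~\ref{cor:superior} and with the mechanism spending more privacy for finer-grained utility.
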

\begin{proof}

Let $y \in \mathcal{C}_{h,t}$ and $y' \in  \mathcal{C}_{h',t'}$ (cf. Definition~\ref{def:subsetclass}). 
\REVISE{The following hold}{Then the following holds}: 
\begin{align*}
\frac{|\eps_1 \, \loss(y \mid \hat{x}, \gamma = \Gamma_1)-\eps_1 \, \loss(y' \mid \hat{x}, \gamma = \Gamma_1)|}{|\eps_2 \, \loss(y \mid \hat{x}, \gamma = \Gamma_2)-\eps_2 \,  \loss(y' \mid \hat{x}, \gamma = \Gamma_2)|} & > 1 \\
\frac{\eps_1| (1-\Gamma_1)(\top{h+1}-\top{h'+1})-\Gamma_1(\top{t}-\top{t'}) |}{\eps_2| (1-\Gamma_2)(\top{h+1}-\top{h'+1})-\Gamma_2(\top{t}-\top{t'}) |} & > 1 \\
\end{align*}

Since $\Gamma_2 > \Gamma_1$, \REVISE{}{ then }$(1-\Gamma_1) > (1-\Gamma_2)$ \REVISE{and thus}{ and } $\frac{\eps_1}{\eps_2}$ just needs to be large enough to scale $\Gamma_1$ to $\Gamma_2$. This is achieved due to $\frac{\eps_1}{\eps_2} = {\Gamma_2}/{\Gamma_1}$, which is implied by $\eps_1 = \REVISE{}{(}{\Gamma_2}/{\Gamma_1} \REVISE{}{)} \eps_2$.
\end{proof}

\begin{corollary}\label{cor:superior}
$\frac{\eps}{\gamma}$-DP $\CANONICAL_{\gamma}$ has superior utility to $\eps$-DP $\CANONICAL_{\gamma = 1}$ as shown in Lemma~\ref{lem:superior}.
\end{corollary}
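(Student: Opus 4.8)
The plan is to obtain this as the special case $\Gamma_2 = 1$, $\Gamma_1 = \gamma$ of Lemma~\ref{lem:superior}. First I would instantiate that lemma with $\eps_2 = \eps$ and $\eps_1 = (\Gamma_2/\Gamma_1)\,\eps_2 = \eps/\gamma$, so that the ``left'' mechanism in the lemma becomes $\eps_1$-DP $\CANONICAL_{\gamma = \Gamma_1} = (\eps/\gamma)$-DP $\CANONICAL_{\gamma}$ and the ``right'' mechanism becomes $\eps_2$-DP $\CANONICAL_{\gamma = \Gamma_2} = \eps$-DP $\CANONICAL_{\gamma = 1}$, which is exactly the pair being compared in the corollary.

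Next I would check the hypotheses. The requirement $\Gamma_2 > \Gamma_1$ reads $1 > \gamma$, which holds for every admissible $\gamma \in [0,1)$; and the prescribed budget relation $\eps_1 = (\Gamma_2/\Gamma_1)\,\eps_2$ is satisfied by construction, since $\eps/\gamma = (1/\gamma)\,\eps$. With these in place, Lemma~\ref{lem:superior} directly yields, for all $y, y' \in \mathbb{Y}$,
$$\frac{\bigl|Pr[y \mid \hat{x}, \gamma, \eps/\gamma] - Pr[y' \mid \hat{x}, \gamma, \eps/\gamma]\bigr|}{\bigl|Pr[y \mid \hat{x}, 1, \eps] - Pr[y' \mid \hat{x}, 1, \eps]\bigr|} > 1,$$
which is the asserted superiority of $(\eps/\gamma)$-DP $\CANONICAL_{\gamma}$ over $\eps$-DP $\CANONICAL_{\gamma = 1}$.

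I do not expect a substantive obstacle: the mathematical content lives entirely in Lemma~\ref{lem:superior}, and the corollary is a one-line specialization. The only care needed is at the degenerate endpoints of the parameter range — at $\gamma = 1$ the two mechanisms coincide (the ratio equals $1$, not $>1$), and at $\gamma = 0$ the scaled budget $\eps/\gamma$ is undefined — so I would state the claim for $\gamma \in (0,1)$. I would also double-check that the direction of the comparison between $\eps_1 = \eps/\gamma$ and $\eps_2 = \eps$ is consistent with the convention of Lemma~\ref{lem:superior}: here $\eps_1 > \eps_2$, reflecting that $\CANONICAL_\gamma$ with smaller $\gamma$ is granted a larger privacy budget, i.e.\ it trades privacy loss for utility in order to match (and exceed) $\CANONICAL_{\gamma = 1}$.
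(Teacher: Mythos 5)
Your specialization is exactly the paper's intended argument (the corollary carries no explicit proof and is obtained immediately from Lemma~\ref{lem:superior} with $\Gamma_1 = \gamma$, $\Gamma_2 = 1$, $\eps_2 = \eps$, $\eps_1 = \eps/\gamma$). Both of your caveats are well founded: the corollary implicitly needs $\gamma \in (0,1)$ (at $\gamma = 1$ the two mechanisms coincide and strict superiority fails, and at $\gamma = 0$ the budget $\eps/\gamma$ is undefined), and the ``$< \eps_2$'' appearing in the hypothesis of Lemma~\ref{lem:superior} is a sign slip --- since $\Gamma_2 > \Gamma_1$ one in fact has $\eps_1 = (\Gamma_2/\Gamma_1)\eps_2 > \eps_2$, precisely as your instantiation yields.
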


\subsection{Noise Analysis of Canonical vs Peeling }

Let $\eps = 2\Delta = 1$ without loss of generality. For the Lipschitz mechanism with Gumbel noise, \REVISE{we can exploit that}{} the maximum $G_{(n)}$ of $n$ i.i.d. standard Gumbel \REVISE{RVs}{random variables (RVs)} $G_1, \ldots, G_n$ is distributed $G_{(n)} \sim \log(n)+Gumbel(0,1)$ and the difference of two standard Gumbel RVs follows a standard Logistic RV. We seek a a specific difference that instructs how far non-top-$k$ options leap forward compared to top-$k$ ones, hence called Logistic Leap.

To obtain simpler formulas we add for $\CANONICAL$ an additional subset with the same loss value as the previously worst one to the selection domain $\mathbb{Y}$, which can only disadvantage $\CANONICAL$.

\begin{lemma}[Canonical Logistic Leap] \label{lem:canonicalleap}
Let $\mathbb{Y}$ be the selection domain with an additional (dummy) subset s.t. $|\mathbb{Y}| = {d \choose k}+1$. 
Let $y\in \mathbb{Y}$ be the subset selected by $\CANONICAL$ with parameter $\gamma \in (0, 1]$. \REVISE{It holds that}{Let}: 
\begin{align*}
X \sim  \frac{1}{\gamma} \left( {k \log(d/k)}+{\log(c_{d,k})}+\text{Logistic}(0,1) \right) \\
\end{align*}
with $0 < \ln(c_{d,k}) = \ln\left({ {d \choose k} }/{\frac{d^k}{k^k} }\right) \le k$.  Then:
\begin{align*}
Pr\left[\,\,|\OPT \cap y| > 0\,\right] &\ge Pr[X > \top{1}-\top{d}] \\
Pr\left[\,\,|\OPT \cap y| < k\,\right] &\le Pr[X < \top{k}-\top{k+1}] \\
\end{align*}
\end{lemma}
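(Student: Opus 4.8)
The plan is to analyze the Lipschitz mechanism instantiated with Gumbel noise over the utility classes $\mathcal{C}_{h,t}$ and track exactly which noisy value a "bad" output must beat and which value a "good" output must beat. Recall from Theorem~\ref{thm:topkcanonfunction} that for $y \in \mathcal{C}_{h,t}$ the loss is $\loss(y\mid\vec{x}) = (1-\gamma)\top{h+1} - \gamma\top{t}$, and since $\eps = 2\Delta = 1$ the mechanism selects the argmax over $y$ of $-\loss(y\mid\vec{x})/\gamma$ wait — more precisely of $\loss(y\mid\vec{x})/(-2\kappa\Delta/\eps) + F^{-1}(U_y)$ with $\kappa = 1$, which here is $-\loss(y\mid\vec{x}) + G_y$ where $G_y$ is standard Gumbel (after the rescaling $2\kappa\Delta/\eps = 1$). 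Using the grouping idea from Section~\ref{sec:fastlipschitz}, every subset in a fixed class $\mathcal{C}_{h,t}$ shares the same loss, so only the class-wise maximum Gumbel matters, and $\max$ of $|\mathcal{C}_{h,t}|$ i.i.d.\ standard Gumbels is $\log|\mathcal{C}_{h,t}| + G$ with $G$ standard Gumbel. So the relevant quantity per class is $-\loss_{h,t} + \log|\mathcal{C}_{h,t}| + G_{h,t}$.

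The key combinatorial step is to bound, for the "some top-$k$ item is selected" event, the best competitor. The event $|\OPT\cap y| = 0$ would require the selected subset to contain none of $\topitem{1},\dots,\topitem{k}$; but in the class decomposition every class $\mathcal{C}_{h,t}$ with $h\ge 1$ contains $\topitem{1}$, so the only way to miss all top-$k$ is essentially via the "worst-off" classes, and one wants to lower-bound the noisy score of the true top-$k$ subset ($h = k-1$, $t = k$, loss $(1-\gamma)\top{k+1}-\gamma\top{k}$, class of size $1$) against the best noisy score of any subset disjoint from $\OPT$. I would pick a single convenient representative — the class $\mathcal{C}_{0,\,\cdot}$ type subsets or rather compare against the union bound / max over all competitors — and show the gap in deterministic loss terms is at least $(1-\gamma)(\top{k+1}) - \gamma\top{k} - [\text{something}]$, then absorb the $\log|\mathcal{C}_{h,t}|$ terms using $\binom{t-1}{k-1} \le \binom{d}{k} \le \frac{d^k}{k^k} c_{d,k}$ and $\sum$-bookkeeping to get the $k\log(d/k) + \log(c_{d,k})$ term. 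The difference of the two Gumbels collapses to a standard Logistic by the stated fact, and dividing through by $\gamma$ (which appears because the effective loss scaling is $1/\gamma$ once we fix $\eps=2\Delta=1$ — a subset in $\mathcal{C}_{h,t}$ has loss $(1-\gamma)\top{h+1}-\gamma\top{t}$, so its contribution to the noisy score gap scales like $\gamma$ times the order-statistic gaps, and inverting gives the $\frac1\gamma$ prefactor) yields the claimed $X$. The second inequality (for $|\OPT\cap y| < k$) is the mirror image: failing to capture all of $\OPT$ means the selected subset sits in some $\mathcal{C}_{h,t}$ with $h < k-1$ or $t > k$, so the best such competitor's deterministic loss advantage over the true top-$k$ is controlled by $\top{k}-\top{k+1}$ (and $\le$ rather than $\ge$ because we now over-count by unioning over many competing classes), again reducing to a Logistic tail after the Gumbel subtraction and the $1/\gamma$ rescaling.

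The dummy-subset remark is there to make $|\mathbb{Y}| = \binom{d}{k}+1$ a clean power-free expression so that the $\log|\mathcal{C}|$ aggregation telescopes nicely; I would invoke it exactly where the class-size sum needs to be a round binomial identity, noting it only hurts $\CANONICAL$.

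The main obstacle I anticipate is the bookkeeping of \emph{which} competitor class gives the extremal deterministic loss in each of the two events, and showing that aggregating the $\log|\mathcal{C}_{h,t}|$ correction terms across all such classes (there are $\Theta(dk)$ of them) still telescopes/bounds cleanly into the single $k\log(d/k) + \log(c_{d,k})$ term with the right direction of inequality ($\ge$ for the first bound via picking one good representative, $\le$ for the second via a union bound). Getting the constant $c_{d,k}$ with $0 < \log c_{d,k} \le k$ requires the standard estimate $\binom{d}{k} \ge (d/k)^k$ for the lower side and $\binom{d}{k}\le (ed/k)^k$, i.e.\ $\log c_{d,k} \le k\log e = k$, for the upper side, which I would state and use without re-deriving. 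The Gumbel/Logistic algebra and the $1/\gamma$ scaling are routine once the loss-gap inequalities are pinned down.
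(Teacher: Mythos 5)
Your sketch correctly anticipates the Gumbel-to-Logistic reduction, the $\log\binom{d}{k}$ shift, and the $c_{d,k}$ bound, but it takes a genuinely harder route than the paper and contains one incorrect step. The paper does not argue at the level of classes $\mathcal{C}_{h,t}$ at all: it assigns one independent Gumbel to each of the $\binom{d}{k}$ possible subsets plus one dummy, so the maximum over the $\binom{d}{k}$ non-top-$k$ noise terms is exactly $\log\binom{d}{k}+\mathrm{Gumbel}(0,1)$, and subtracting the top-$k$'s own noise term gives $X=\log\binom{d}{k}+\mathrm{Logistic}(0,1)$ directly, with no per-class bookkeeping. The ``telescoping of $\log|\mathcal{C}_{h,t}|$ terms'' that you flag as the main obstacle is therefore a non-problem in the paper's approach, and the dummy subset exists only so that the count of non-top-$k$ noise terms is exactly $\binom{d}{k}$ rather than $\binom{d}{k}-1$, not to make class-size sums ``round.'' The concrete gap is the $1/\gamma$ prefactor: your claim that the loss ``scales like $\gamma$ times the order-statistic gaps, and inverting gives the $\frac{1}{\gamma}$ prefactor'' is not correct, since $(1-\gamma)\top{h+1}-\gamma\top{t}$ is a convex combination, not a $\gamma$-scaling, and loss gaps between classes mix $\top{h+1}$- and $\top{t}$-terms with different weights, so no clean $\top{1}-\top{d}$ or $\top{k}-\top{k+1}$ bound falls out for general $\gamma$. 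The paper instead invokes Corollary~\ref{cor:superior} to reduce to $\gamma=1$ at the cost of replacing $\eps$ by $\gamma\eps$, which (since the additive noise enters with coefficient $2\Delta/\eps$) is the same as multiplying every noise term by $1/\gamma$; your sketch never references Lemma~\ref{lem:superior}/Corollary~\ref{cor:superior}, and without that reduction the argument does not close.

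Once one works per subset at $\gamma=1$, the proof is two deterministic implications for a single random variable $X$ (the max non-top-$k$ noise minus the top-$k$'s noise): the top-$k$'s noisy score is $\top{k}$ plus its noise, every other subset's is $\top{t}$ plus its noise for some $t\ge k{+}1$, so $X>\top{1}-\top{d}$ forces the max-noise subset to overtake, while $X<\top{k}-\top{k+1}$ forces every other subset to fail to overtake. Your sketch gestures at ``the best competitor'' and a union over classes but never isolates this one random variable and its two implications, which is where the whole argument lives; the Gumbel algebra, $\log\binom{d}{k}=k\log(d/k)+\log c_{d,k}$, and $0<\log c_{d,k}\le k$ are routine and you handle them correctly.
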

\begin{proof}
\newcommand{\NN}[1]{N_{1,#1}}

 Let $\mathbf{N}$ be a $1 \times |\mathbb{Y}| $ matrix whose entries are independent noise terms:
$\mathbf{N} = \begin{psmallmatrix}
\REVISE{\color{gray} \NN{1}}{\NN{1}} & \NN{2} & \cdots & \NN{|\mathbb{Y}|} \\
\end{psmallmatrix}.
$

Without loss of generality we can fix $\gamma = 1$ and then replace $\eps$ by $\frac{\eps}{\gamma}$ as according to Corollary~\ref{cor:superior} this incurs no utility loss. Replacing $\eps$ with $\frac{\eps}{\gamma}$ is equivalent to multiplying all noise terms by $\frac{1}{\gamma}$.

 $\NN{1}$ is the noise-term received by the top-$k$, while the sub-matrix $\mathbf{S}$ of $\mathbf{N}$ with $\NN{j}$ where  $2 \le j \le |\mathbb{Y}|$ are the noise-terms received by the non-top-$k$ subsets. 
 
 Let $\NN{m} = \max(\mathbf{S})$. For $\CANONICAL_{\gamma\REVISE{\gamma_1}{=}1}$, we can define $X = \NN{m}-\NN{1}$. The non-top-$k$ subset that receives the noise term $\NN{m}$ leaps ahead of the top-$k$ subset if $X > \top{1}-\top{d}$, but fails to overtake the top-$k$ subset if $X < \top{k}-\top{k+1}$. Due to these implications, the probability inequalities in the statement hold.
 
 With $n = |\mathbb{Y}|-1 = {d \choose k}$ non-top-$k$ subset noise terms we get $X = \log( {d \choose k})+Logistic(0,1)$ and we can then write $\ln {d \choose k}$ as $k \ln(d/k)+ \ln(c_{d,k})$ with $0 < \ln(c_{d,k}) \le k$.
\end{proof}

 As $\PEELING$ removes in each round an item and we want to simplify it, we charitably remove the item with utility $\top{d}$, which results in the elimination of more noise terms assigned to non-top-$k$ items (helping $\PEELING$) as would occur due to the item selection:

\begin{lemma}[Peeling Logistic Leap] \label{lem:peelingleap}
Let $y$ be subset selected by $\PEELING$  from items $\{1, \ldots, d\} \setminus \{\topitem{d}\}$. \REVISE{It holds that}{Let}: 
\begin{align*}
X' \sim {k(\log((d-1-k)k)}+&k \cdot \text{Logistic}(0,1) \\
\end{align*}
Then:
\begin{align*}
Pr\left[\,\,|\OPT \cap y'| > 0\,\right] &\ge Pr[X' > \top{1}-\top{d}] \\
Pr\left[\,\,|\OPT \cap y'| < k\,\right] &\le Pr[X' < \top{k}-\top{k+1}] \\
\end{align*}
\end{lemma}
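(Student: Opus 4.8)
The plan is to mirror the argument of Lemma~\ref{lem:canonicalleap}, but now tracking the noise contributions across the $k$ rounds of $\PEELING$. After charitably deleting the item with utility $\top{d}$, $\PEELING$ runs $k$ independent $(\eps/k)$-DP one-shot selections without replacement over the remaining $d-1$ items, each using Gumbel noise via the Lipschitz mechanism with $\kappa = 1$. In round $r$ the pool still contains all $d-1-k+r \le d-1$ non-top-$k$ items that survived (we are removing top-$k$ items one at a time, so at the start of round $r$ at most $d-1-k$ of the items are non-top-$k$), so the maximum Gumbel noise term among the non-top-$k$ candidates in that round is distributed as $\log(d-1-k) + \mathrm{Gumbel}(0,1)$, up to scaling by $k/\eps$ coming from the per-round privacy budget $\eps/k$. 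I would define, for each round $r \in \{1,\dots,k\}$, the ``leap'' $X'_r$ as the gap between the best non-top-$k$ noisy score in that round and the noisy score of the true $r$-th ranked item, and then take $X' = \max_r X'_r$ (or a suitable single representative round) as the quantity that controls whether a non-top-$k$ item ever overtakes a genuine top-$k$ item across the whole execution.

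The key steps, in order: first, unfold $\PEELING$ into its $k$ rounds and fix the (charitable) removal of $\topitem{d}$, noting that eliminating further non-top-$k$ noise terms only shrinks the non-top-$k$ maximum in later rounds and hence only helps $\PEELING$. Second, in each round apply the Lipschitz/Gumbel identity $G_{(n)} \sim \log n + \mathrm{Gumbel}(0,1)$ with $n = d-1-k$ to the non-top-$k$ candidates, and rescale by $k$ (since each round is $(\eps/k)$-DP, the effective noise magnitude is multiplied by $k$ relative to the normalization $\eps = 2\Delta = 1$). Third, observe that a non-top-$k$ item is returned by $\PEELING$ only if in some round its noisy score beats that of the true item of that rank, which — after bounding the $k$ relevant Gumbel differences and using that a difference of two standard Gumbels is standard Logistic — is implied by $X' > \top{1}-\top{d}$; conversely $\PEELING$ returns a strict subset of the top-$k$ only if in some round a non-top-$k$ item overtakes, which fails when $X' < \top{k}-\top{k+1}$. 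Fourth, assemble the two probability inequalities exactly as in the canonical case. The stated form $X' \sim k\log((d-1-k)k) + k\cdot\mathrm{Logistic}(0,1)$ should emerge by combining the $k$ per-round $\log(d-1-k)$ terms (one leap per round, summed or maximized across rounds) together with a $\log k$ correction from the union over the $k$ rounds and the $k$-fold noise scaling, the Logistic arising from the Gumbel difference in the decisive round amplified by the factor $k$.

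The main obstacle I expect is making the ``$X' = \max_r X'_r$ across rounds'' reduction honest: in $\PEELING$ the rounds are not symmetric (the pool shrinks, and which items are top-$k$ relative to the current pool shifts after each removal), so one has to argue carefully that the worst-case leap is still dominated by the round-wise bound with $n = d-1-k$, and that the $k$-fold aggregation of per-round logistic noise can be upper-bounded by the clean single-logistic expression $k\log((d-1-k)k) + k\cdot\mathrm{Logistic}(0,1)$ rather than by a sum of $k$ logistics (which would be stochastically larger and would need a union-bound or max-of-logistics argument to collapse). I would handle this by being deliberately charitable to $\PEELING$ — as the paper already is with the $\topitem{d}$ removal — replacing the true multi-round process by the most favorable single-round surrogate that still upper-bounds the overtaking probability, so that the final inequality only needs to go one way.
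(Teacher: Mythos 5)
Your proposal assembles the right ingredients---Gumbel max-stability, the Gumbel-difference-is-Logistic identity, the $(d-1-k)k$ count of bottom noise terms, and the factor $k$ from the per-round budget $\eps/k$---but the aggregation step you describe does not actually produce the stated distribution, and the concern you flag in your final paragraph is not a technicality to be smoothed over; it is the crux. You define per-round leaps $X'_r \sim \log(d-1-k) + \text{Logistic}(0,1)$ and propose $X' = \max_r X'_r$. The Logistic family is \emph{not} closed under maxima, so $\max_r X'_r$ has no clean closed form and will not collapse to $\log\bigl((d-1-k)k\bigr)+\text{Logistic}(0,1)$; trying to import the missing $\log k$ as a ``union-bound correction across rounds'' cannot be made rigorous within your decomposition.

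The paper avoids this trap by maximizing \emph{before} differencing, not after. It lays out the $k \times (d-1-k)$ block $\mathbf{S}'$ of all the (unscaled) Gumbel noise terms that any non-top-$k$ item could receive in any round, applies Gumbel max-stability \emph{once} over the entire block to get $\max(\mathbf{S}') \sim \log\bigl((d-1-k)k\bigr) + \text{Gumbel}(0,1)$, and only then subtracts a single independent Gumbel (the noise of a top item in the argmax round, which lies in the disjoint ``top'' block and is hence independent) to land on one shifted Logistic. The factor $k$ outside is purely the $\eps \mapsto \eps/k$ noise rescaling; the $\log k$ inside the log is simply $\log\bigl((d-1-k)k\bigr) = \log(d-1-k)+\log k$ from counting $k$ rounds' worth of Gumbels in a single max. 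Neither is a union-bound artifact, and $\log(d-1-k)$ appears once, not $k$ times (so ``combining the $k$ per-round $\log(d-1-k)$ terms, summed or maximized'' misidentifies where both $k$'s come from). If you restructure to ``global max over the bottom block, then one difference,'' the rest of your outline---the implications to $\top{1}-\top{d}$ and $\top{k}-\top{k+1}$, and the charitable removal of $\topitem{d}$---lines up with the paper's proof.
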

\begin{proof}

Let $\mathbf{N}'$ be a $k \times d$ matrix whose entries are independent noise terms:

\newcommand{\NNN}[2]{N_{#2,#1}'}
\begin{align*}
\mathbf{N'} = \begin{psmallmatrix}
\NNN{1}{1}  & \NNN{2}{1}    & \cdots        & \NNN{k}{1}    & \NNN{k+1}{1}  & \cdots & \NNN{d}{1} \\
\NNN{1}{2}  & \NNN{2}{2}    & \cdots        & \NNN{k}{2}    & \NNN{k+1}{2}  & \cdots & \NNN{d}{2} \\
\vdots      & \ddots        & \vdots        & \vdots        & \vdots        & \ddots & \vdots \\
\NNN{1}{k}  & \NNN{2}{k}    & \cdots        & \NNN{k}{k}    & \NNN{k+1}{k}    & \cdots & \NNN{d}{k} \\
\end{psmallmatrix}
\end{align*}

For $\PEELING$, $N_{i,j}'$ is the noise term received by the item with score $\top{i}$ in round $j$, while  $\mathbf{S}'$ is the submatrix of $\mathbf{N}'$ (depicted as right half)  with $\NNN{j}{i}$ where $1 \le i \le k$ and $k+1 \le j \le d$  with the bottom noise terms, i.e., noise terms of items in the bottom partition ``after'' the top-k. Let $\NNN{m}{r} = \max(\mathbf{S}')$. For $\PEELING$, we can define $X' = \NNN{m}{r}-\NNN{j}{r}$. The bottom item $m$ that receives the noise term $\NNN{m}{r}$ leaps ahead of top item $j$ in round $r$ if $X' > \top{j}-\top{m}$, but fails to overtake $j$ if $X' < \top{j}-\top{m}$. Similarly, if $X' > \top{1}-\top{d} \ge \top{j}-\top{m}$ then $j$ is displaced and if $X' < \top{k}-\top{k+1} \le \top{j}-\top{m}$ the top-item $j$ is not displaced by any bottom item. The probability inequalities in the statement follow from these implications.

With $n' = (d-1-k)k$ bottom noise terms we get $X' = k\left( \log(\, (d-1-k)k \, ) + Logistic(0,1)\right)$.
\end{proof}

\begin{theorem} \label{lem:leapcomparison}
Let $X, X'$ be logistic leap \REVISE{RV}{RVs} from Lemma~\ref{lem:canonicalleap} and Lemma~\ref{lem:peelingleap}.
Then for $\frac{d}{k} = O(1)$ it holds: $$\frac{E[X']}{E[X]} =\Omega(\log k)$$
\end{theorem}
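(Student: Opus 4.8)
The plan is to directly compare the two expectations using the explicit distributional descriptions already established. From Lemma~\ref{lem:canonicalleap}, with $\gamma = 1$ (the worst case for $\CANONICAL$, which only strengthens the conclusion via Corollary~\ref{cor:superior}), we have $X \sim k\log(d/k) + \log(c_{d,k}) + \text{Logistic}(0,1)$, and since a standard logistic variable has mean $0$, this gives $E[X] = k\log(d/k) + \log(c_{d,k})$ with $0 < \log(c_{d,k}) \le k$. From Lemma~\ref{lem:peelingleap}, $X' \sim k\log((d-1-k)k) + k\cdot\text{Logistic}(0,1)$, so $E[X'] = k\log((d-1-k)k)$.

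First I would specialize to the regime $d/k = O(1)$, i.e.\ $d = \Theta(k)$, say $k \le d \le ck$ for a constant $c$. Then $\log(d/k) = O(1)$, so $E[X] = k\cdot O(1) + O(k) = O(k)$; I should also note $E[X]$ is at least a positive constant times $k$ is \emph{not} needed --- only the upper bound $E[X] = O(k)$ matters for the ratio lower bound, but I do want $E[X] > 0$, which holds since both summands are nonnegative and $k\log(d/k)$ could be zero only if $d = k$, in which case $\log(c_{d,k}) = \log\binom{k}{k} - \log 1$... here I need to be a little careful when $d=k$, but then top-$k$ selection is trivial; assuming $d > k$ strictly, $k\log(d/k) > 0$. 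For the numerator, $(d-1-k)k$: when $d = \Theta(k)$ with $d - k = \Theta(k)$ this is $\Theta(k^2)$, so $E[X'] = k\log(\Theta(k^2)) = k\cdot(2\log k + O(1)) = \Theta(k\log k)$. Hence $E[X']/E[X] = \Theta(k\log k)/O(k) = \Omega(\log k)$.

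The main obstacle I anticipate is the boundary behavior of the assumption $d/k = O(1)$: the argument of the logarithm in $E[X']$ is $(d-1-k)k$, which degenerates (becomes $\le 0$, making the lemma vacuous or the expectation ill-defined) when $d \le k+1$. So the clean statement really needs $d - k = \Theta(k)$, or at least $d - k$ bounded below by a constant times $k$; I would state this mild strengthening of ``$d/k = O(1)$'' explicitly (or interpret $d/k = O(1)$ as ``$d = \Theta(k)$ with $d-k$ comparable to $k$'', which is the intended reading since otherwise top-$k$ out of $d \approx k$ items is essentially trivial). Under that reading the computation is routine. I would close by remarking that the bound is in fact tight in this regime --- $E[X']/E[X] = \Theta(\log k)$ --- and that the $k\cdot\text{Logistic}$ scaling in $X'$ versus the single unscaled $\text{Logistic}$ in $X$ reflects exactly the $\Omega(\log k)$ overhead that sequential composition imposes on $\PEELING$ relative to the composition-free $\CANONICAL$.
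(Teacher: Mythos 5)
Your approach matches the paper's: with $\gamma=1$ (worst case, justified by Corollary~\ref{cor:superior}), take expectations using $E[\mathrm{Logistic}(0,1)]=0$, then compare $E[X]=k\log(d/k)+\log(c_{d,k})$ against $E[X']=k\log((d-1-k)k)$ under $d/k=O(1)$. The paper's own proof phrases this through terse $O(\cdot)$ rewrites of the random variables and is, if anything, less careful than yours: writing $X'=O(k\log k + kL)$ gives only an upper bound, whereas the ratio requires a \emph{lower} bound on $E[X']$, which the paper leaves implicit. Your explicit $E[X']=\Theta(k\log k)$, $E[X]=O(k)$ computation makes the directionality explicit and is the more rigorous rendering of the same argument.

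Your boundary observation is a genuine catch: the argument of the logarithm in $E[X']$ is $(d-1-k)k$, which is $\le 0$ when $d\le k+1$, so the proof (and Lemma~\ref{lem:peelingleap}) silently requires $d>k+1$ and the paper never states this. You slightly overstate the needed repair, though. You don't need $d-k=\Theta(k)$: for the lower bound $E[X']=\Omega(k\log k)$ it suffices that $(d-1-k)k\ge k$, i.e.\ $d\ge k+2$, which already gives $\log((d-1-k)k)\ge\log k$. The stronger $d-k=\Theta(k)$ is sufficient but not necessary; a single added hypothesis $d\ge k+2$ (natural, since $d\le k+1$ makes top-$k$ selection degenerate anyway) closes the gap.
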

\begin{proof}
Since $\gamma\in (0,1]$, $\frac{1}{\gamma} = O(1)$. Then, 
for $\frac{d}{k} = O(1)$, we rewrite $X$ via $L \sim Logistic(0,1)$ as $X = O(k+L)$ and $X' = O(k \log(k) +kL)$. As $E[L] = 0$, the claim then follows due to $\log k = \Omega(\log k)$ and $k = \Omega(\log k)$.
\end{proof}

\subsection{Utility Loss Bounds for Canonical vs Peeling}

Based on standard utility guarantees for the exponential mechanism one can derive:

\begin{theorem} \label{thm:utilcomparison} 
Let $Y_1, \ldots, Y_k$ be the selected set by $\CANONICAL$ with $\gamma \in (0,1]$ supposing $\frac{1}{\gamma} = O(1)$. Let $Y_1', \ldots, Y_k'$ be the outputted set by $\PEELING$.
Let $T = \argmin_{i \in \{Y_1, \ldots, Y_k \}} \top{i}$ and $T' = \argmin_{i \in \{Y_1', \ldots, Y_k' \}} \top{i}$. 

Let $\alpha \in (0, 0.1]$ be the failure rate. Then with at least probability $1-\alpha$ it holds that $ \top{T} < \top{k}+\frac{2\Delta }{\gamma \eps} \mathcal{E} $ and $ \top{T'} < \top{k}+\frac{2\Delta }{\eps} \mathcal{E}'$
with utility loss terms:
\begin{align*}
\mathcal{E} = \left( k \ln(d/k)+\ln \frac{1}{\alpha}+ k \right) \\
\mathcal{E}' = \left( k \ln(d k) +k \ln \frac{1}{\alpha}- \frac{6}{100} k \right)  \\
\end{align*}

Also, for $\frac{d}{k} = O(1)$, it holds that  ${\mathcal{E}'}/{\mathcal{E}} = \Omega(\log k )$.
\end{theorem}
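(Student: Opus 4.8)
The plan is to instantiate both mechanisms with Gumbel noise, so that $\CANONICAL$ becomes a single exponential-mechanism call and $\PEELING$ becomes $k$ exponential-mechanism calls, and then to invoke the textbook exponential-mechanism utility bound: an $\eps_0$-DP exponential mechanism over a domain $\mathcal{R}$ under a loss of sensitivity $\Delta$ outputs a result whose loss exceeds the minimum by more than $\frac{2\Delta}{\eps_0}\big(\log|\mathcal{R}|+\log\tfrac1\alpha\big)$ with probability at most $\alpha$. Concretely I would (1) derive the $\CANONICAL$ bound from one such call over the $\binom{d}{k}$ $k$-subsets, (2) derive the $\PEELING$ bound from $k$ calls, one per round, and (3) compare $\mathcal{E}$ and $\mathcal{E}'$ asymptotically.

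\textbf{Step 1: $\CANONICAL$.} With $\kappa=1$ and Gumbel noise, $\CANONICAL_\gamma$ is exactly the exponential mechanism over the $k$-subsets under the loss of Theorem~\ref{thm:topkcanonfunction}, which has sensitivity $\Delta$ (in raw-score units) and equals $(1-\gamma)\top{h+1}-\gamma\top{t}$ on $\mathcal{C}_{h,t}$. By Corollary~\ref{cor:superior} it suffices to bound $(\gamma\eps)$-DP $\CANONICAL_{\gamma=1}$, whose loss on $\mathcal{C}_{h,t}$ is simply $-\top{t}$; the minimum loss is $-\top{k}$ (attained by the true top-$k$, class $\mathcal{C}_{k-1,k}$), and the worst selected score $\top{T}$ of an output in $\mathcal{C}_{h,t}$ equals $\top{t}$. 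Applying the exponential-mechanism bound with $\eps_0=\gamma\eps$ and $|\mathcal{R}|=\binom{d}{k}$ gives, with probability $\ge 1-\alpha$, that $\top{k}-\top{T}\le\frac{2\Delta}{\gamma\eps}\big(\log\binom{d}{k}+\log\tfrac1\alpha\big)$ (equivalently the bound of the statement). Finishing then only needs the standard estimate $\log\binom{d}{k}\le k\log\tfrac{ed}{k}=k\log\tfrac dk+k$ (i.e.\ $\log c_{d,k}\le k$, as in Lemma~\ref{lem:canonicalleap}), which yields the advertised $\mathcal{E}=k\log\tfrac dk+\log\tfrac1\alpha+k$.

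\textbf{Step 2: $\PEELING$.} Round $j$ runs an $(\eps/k)$-DP exponential mechanism (report-noisy-max with Gumbel noise) over the $d-(j-1)$ not-yet-chosen items with loss equal to the negated score, of sensitivity $\Delta$. The structural fact I would use: after $j-1<k$ rounds at most $j-1$ items are removed, so at least $k-(j-1)\ge 1$ of the true top-$k$ items survive, hence the maximum of the remaining scores is $\ge\top{k}$ in \emph{every} round, giving one common benchmark. Applying the exponential-mechanism bound per round with failure budget $\alpha/k$, privacy $\eps/k$, and $\log(d-j+1)\le\log d$, then union-bounding over the $k$ rounds, gives with probability $\ge1-\alpha$ that every selected score, in particular $\top{T'}$, is within $\frac{2k\Delta}{\eps}\big(\log d+\log k+\log\tfrac1\alpha\big)=\frac{2\Delta}{\eps}\big(k\log(dk)+k\log\tfrac1\alpha\big)$ of $\top{k}$, i.e.\ $\mathcal{E}'=k\log(dk)+k\log\tfrac1\alpha$ up to the additive $-\tfrac6{100}k$. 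Recovering that refinement is the delicate part: one must replace the generic $e^{-t}$ tail by the exact Logistic tail of the report-noisy-max gap (the difference between the winning Gumbel among the surviving top items and the overall winning Gumbel is Logistic with location $\log\tfrac{\#\text{bottom items}}{\#\text{surviving top items}}$), exploit that at least $k-j+1$ top items survive round $j$, and optimize the thresholds across the $k$ rounds; careful bookkeeping there shaves the claimed $\tfrac6{100}k$. I expect this tail-and-budget optimization over rounds to be the main obstacle — everything else on the $\PEELING$ side and all of $\CANONICAL$ is routine once the exponential-mechanism lemma is in hand.

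\textbf{Step 3: the ratio.} For $\tfrac dk=O(1)$ and constant $\alpha,\gamma$ (so $\tfrac1\gamma=O(1)$) we have $\log(dk)=\Theta(\log k)$ — since $dk\ge k^2$ and $d=O(k)$ — while $\log\tfrac dk=O(1)$; hence $\mathcal{E}'=\Theta(k\log k)$ and $\mathcal{E}=\Theta(k)$. Making the constants explicit, $\mathcal{E}'\ge 2k\log k-\tfrac6{100}k\ge k\log k$ for $k\ge 2$ (using $dk\ge k^2$ and $\log\tfrac1\alpha\ge0$), and $\mathcal{E}\le\big(\log\tfrac dk+1\big)k+\log\tfrac1\alpha\le ck$ for a constant $c$ depending on the $O(1)$ bound on $\tfrac dk$ and on $\alpha$; therefore $\mathcal{E}'/\mathcal{E}\ge\tfrac{\log k}{c}=\Omega(\log k)$, which is the claimed comparison.
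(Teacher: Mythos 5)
Your Steps~1 and~3 match the paper's proof closely: the $\CANONICAL$ bound is obtained by plugging the $\gamma=1$ canonical loss over the $\binom{d}{k}$ subsets into Lemma~\ref{lem:emguarantees}, passing to general $\gamma\in(0,1]$ via Corollary~\ref{cor:superior}, and bounding $\log\binom{d}{k}\le k\log(d/k)+k$ (this is Lemma~\ref{lem:canonutil}); and the asymptotic comparison is the same $\mathcal{E}=O\bigl(k+\log\tfrac1\alpha\bigr)$ versus $\mathcal{E}'=O\bigl(k\log k+k\log\tfrac1\alpha\bigr)$ argument for $d/k=O(1)$.

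Your Step~2 gets the dominant terms right but misidentifies where the $-\tfrac{6}{100}k$ correction comes from. An $\alpha/k$ union bound over the $k$ rounds with the common benchmark $-\loss(\OPT)\ge\top{k}$ indeed reproduces $k\log(dk)+k\log\tfrac1\alpha$ exactly as in Lemma~\ref{lem:peelutil}, but the paper does \emph{not} then replace generic $e^{-t}$ tails by exact Logistic tails, track surviving top items round by round, or optimize thresholds across rounds. Instead it exploits that the $k$ rounds' noise draws are independent and swaps the \Bonferroni{} budget $\alpha/k$ for the \Sidak{} correction $\alpha'=1-(1-\alpha)^{1/k}$, the per-round budget whose $k$-fold product of independent successes equals $1-\alpha$ exactly. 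Writing $\alpha'=(\alpha/k)\,r_{\alpha,k}$ and using Lemma~\ref{lem:bonferroni} (which gives $1<r_{\alpha,k}\le\ln\tfrac1{1-\alpha}/\alpha<1.06$ for $\alpha\le 0.1$, hence $\log r_{\alpha,k}<\tfrac{6}{100}$) produces the correction in a few lines. So the ``main obstacle'' you anticipated is a red herring: the \Sidak{}/\Bonferroni{} bookkeeping is substantially lighter than the round-by-round tail optimization you sketched, and it is the one idea you should take from the paper's version. (Incidentally, that route actually gives the correction $-k\log r_{\alpha,k}$, which, since $\log r_{\alpha,k}<\tfrac{6}{100}$, is not quite the stated $-\tfrac{6}{100}k$; this does not affect the $\Omega(\log k)$ conclusion, but it does mean you should not spend effort trying to reproduce the exact constant by tail optimization.)
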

\begin{proof}

The proof follows from several Theorems and Lemmas in Appendix~\ref{sec:proofs}. For $\CANONICAL$:

\begin{itemize}
\item Lemma~\ref{lem:emguarantees} adopts standard theorems for the exponential mechanism to obtain general utility guarantees (instantiated by Lipschitz mechanism with $F^{-1}$ from the Gumbel distribution).
\item Lemma~\ref{lem:canonutil} plugs the loss function with $\gamma = 1$ over the ${d \choose k}$ subsets from Theorem~\ref{thm:topkcanonfunction}  into Lemma~\ref{lem:emguarantees} to obtain the inequality for $\top{T}$ as in the claim. It is then generalized to $\gamma \in (0, 1]$ via Corollary~\ref{cor:superior}.
\end{itemize}

For $\PEELING$, Lemma~\ref{lem:peelutil} plugs the score function over the $d$ items into Lemma~\ref{lem:emguarantees}, but uses a reduced failure rate $\alpha'$ s.t. $k$ selections \REVISE{still have}{have} a joint \REVISE{failure rate ${(\alpha')}^k \le \alpha$}{success rate ${(1-\alpha)}^k \ge 1-\alpha$}. 

Due to independence one can here use the \Sidak{} correction $\alpha' = 1-(1-\alpha)^{1/k}$ for family wise error rates of hypothesis tests . As this leads to terms that complicate comparisons, $\alpha'$ is rewritten via the \Bonferroni{} correction $\alpha/k$ and a ratio between both corrections $r_{\alpha,k}$ is used to restore the \Sidak{} correction. We derive in Lemma~\ref{lem:bonferroni} that the \Bonferroni{} correction is as expected a very good approximation and that $r_{\alpha,k} \le \REVISE{(}{}{\ln(\frac{1}{1-\alpha})}/{\alpha}$ which even for $\alpha < 0.1$ is smaller than $1.06$ such that $\log(r_{\alpha,k}) < \log(1.06) < \frac{6}{100}$. From that then follows the inequality for $T'$ in the claim. For $\frac{d}{k} = O(1)$, $\mathcal{E}=O\left( k + \ln \frac{1}{\alpha} \right)$ and $\mathcal{E}'=O\left( k \ln(k) +k \ln \frac{1}{\alpha} \right)$. Thus, $\frac{\mathcal{E'}}{\mathcal{E}}=\Omega(\log k)$.
\end{proof}

\section{Related Work}\label{sec:related}

\newcommand{\exponentialmechanism}{EM}
\newcommand{\permuteandflip}{P\&F}

The report-noisy-max mechanism \cite{dwork2013algorithmic} adds Laplace noise to utility values and then selects the item with the \REVISE{max}{maximal} noisy value. Other popular mechanisms can be formulated in a similar way, i.e., 
by adding instead Gumbel noise \cite{durfee2019practical} one gets the exponential mechanism (EM{})  
and by adding Exponential noise \cite{ding2021permute} one gets the permute-and-flip (\permuteandflip{}) mechanism. In this work, we extend these results and unification efforts via the proposed Lipschitz mechanism. We model it as a single mechanism rather than a framework or family of mechanisms as the DP proof is independent of instantiations. We show that in the context of the Lipschitz mechanism asymmetric sensitivities~ \cite{dong2020optimal} can be reduced to ordinary sensitivities, generalizing  results on \REVISE{count-based}{monotonic} functions \cite{dwork2013algorithmic,mckenna2020permute} that are treated to have sensitivity $\frac{1}{2}$. Due to its generality, the Lipschitz  mechanism also instantiates oneshot variants that select the $\kappa$ largest noisy values \cite{qiao2021oneshot}, e.g., the oneshot variant of \permuteandflip{} \cite{mckenna2020permute} did not have a DP  proof although it promises the best utility amongst oneshot mechanisms.

The joint \exponentialmechanism{} \cite{joseph2021joint} is a mechanism that directly selects $k$-subsets based on a loss function akin to canonical loss.   Aside from the problem definition, the joint  \exponentialmechanism{} paper employs a loss function that counts how many users are needed to change the utility values of the subset to match the utility values of the top-$k$. Catching up with the top-$k$ may not displace all items of the top-$k$ and instead yields a mix of top-$k$ and subset items. In contrast, the canonical loss function requires all missing top-$k$ items to be displaced by subset items, which is desirable as it requires more users (cf. Lemma~\ref{lemma:jointcanonical} in Appendix~\ref{sec:proofs}). The joint \exponentialmechanism{} can be sampled in $\tilde{O}(dk^3)$ time, whereas all proposed methods with the canonical loss function require only $\tilde{O}(dk)$ time. The authors of \cite{joseph2021joint} mention as a caveat of joint \exponentialmechanism{} that it may be difficult to avoid exponentially large values in the matrix multiplications that are needed to compute loss value multiplicities. In contrast, the canonical loss value multiplicities ($|\mathcal{C}_{h,t}|$ from Definition~\ref{def:subsetclass}) are binomial coefficients which can be computed with simple methods in the log-space to avoid large values.

Previous works did not theoretically compare direct subset selection with composition methods, but have shown that canonical loss functions offer general utility guarantees \cite{asi2020near,asi2020instance,medina2020duff} if they are plugged into the \exponentialmechanism{} \cite{asi2020instance}. Our motivation to generalize beyond the \exponentialmechanism{} are results for \permuteandflip{} which show it  consistently improves  upon the \exponentialmechanism{} \cite{mckenna2020permute}. Our experiments also indicate for the Lipschitz mechanism  that using $F^{-1}$ from the Exponential distribution leads to best utility (cf. Figure~\ref{fig:lipschitz} in Appendix~\ref{sec:repl}). Last, there are several works \cite{chaudhuri2014large,uai_21,cesar2021bounding} that focus on approximate DP \cite{dwork2014algorithmic,nissim}, i.e, $(\eps,\delta)$-DP with $\delta > 0$.  This work focuses on pure $\eps$-DP, leaving the $\delta > 0$ related question open.

\section{Empirical Comparison}

We compared four top-$k$ mechanisms:

\begin{itemize}
\item $\PEELING$ \cite{durfee2019practical} is one-by-one $\frac{\eps}{k}$-DP selection without replacement via Exp. Mechanism 
\item $\ONESHOT_{\text{Exp}}$ is a oneshot variant \cite{qiao2021oneshot} of permute-and-flip \cite{mckenna2020permute}.
\item $\CANONICAL$ from Section~\ref{sec:canonicallipschitz} (cf. Appendix~\ref{sec:suppimpl} for details) with $\gamma = \frac{1}{2}$ and $F^{-1}$ from Gumbel distribution.
\item $\CANONICAL_{\gamma = 1}$ as $\CANONICAL$ except with $\gamma = 1$.
\end{itemize}

We did not compare to the Joint Exponential Mechanism \cite{joseph2021joint}, due to its prohibitive  cost for large $k$. Apart from its runtime, it is in any case very similar to $\CANONICAL$ (see  Section~\ref{sec:related}). We implemented all methods in Python\FORK{}{ (our code is in the  supplementary material).}{.} More details \REVISE{on the setup are}{} in Section~\ref{sec:repl} of  the Appendix \FORK{and on \url{https://github.com/shekelyan/dptopk}}{}.

\subsection{Top-$k$: Real-World data}

In this experiment, $\vec{x} \in \mathbb{R}^d$ is based on half a million users who rated $d = 17700$ movies in the Netflix prize dataset (see Introduction) between $1$ and $5$. Each component $\vec{x}_i$ is equal to the number of users that gave the movie $i$ a 5/5 rating such that the sensitivity is $\frac{1}{2}$ (cf. Theorem~\ref{thm:asymmetric} in Appendix). As can be seen in Figure~\ref{fig:netflix} the new methods  $\CANONICAL$ / $\CANONICAL_{\gamma = 1}$ almost certainly return the correct top-$1000$ with a privacy budget of $\eps \le 1$, whereas classical methods require an up to $81 \times$ larger privacy budget to achieve the same feat. For $k = 100$, the privacy budget of our methods is \REVISE{smaller by $\times 34$}{$34 \times$ smaller} and for $k = 10$ it is \REVISE{by $\times 6$}{$6 \times$ smaller}. In \REVISE{}{the }Appendix, we report similar results for five additional real-world datasets (cf. Table~\ref{tab:datasets} and Figures~\ref{fig:patent},\ref{fig:searchlogs},\ref{fig:medcost},\ref{fig:income},\ref{fig:hepth}). In conclusion, the new methods show vast improvements.

\subsection{Top-$k$: Synthetic Data}

\begin{figure}
\centering
\SINGLEDOUBLE{\includegraphics[width=0.75\textwidth]{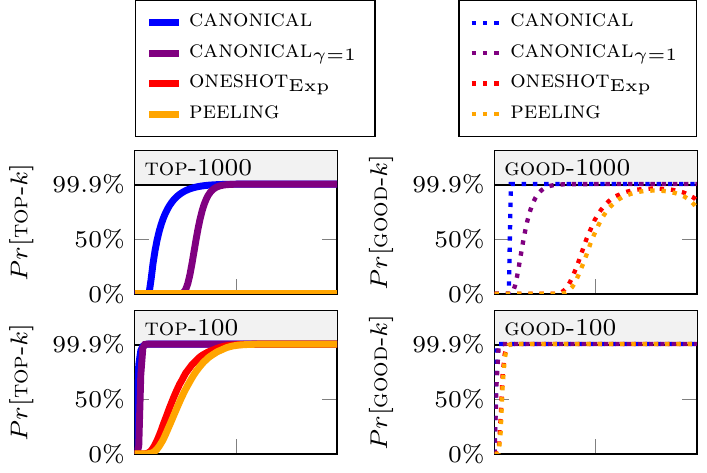}}{\includegraphics[width=0.45\textwidth]{plots/z4top1000.pdf}}
\caption{For a fixed privacy budget $\eps = 1$, the data distribution of $\vec{x} \in \mathbb{R}^d$ with $d = 10^4$ is varied from uniform values ($s = 0$) to linear values ($s = 1$) for $k \in \{10,100,1000\}$. \label{fig:zipfA}}
\vspace{0.25cm}
\centering
\SINGLEDOUBLE{\includegraphics[width=0.75\textwidth]{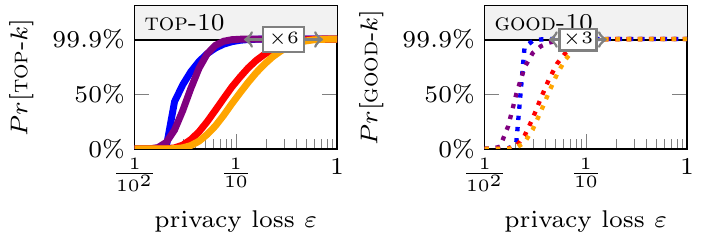}}{\includegraphics[width=0.45\textwidth]{plots/f4good10.pdf}}
\caption{Follow-up to Figure~\ref{fig:zipfA} with $s = 0.04$ and varying $\eps$ (for $\eps = 1$ and $s \ge 0.04$ all compared mechanisms have $Pr[\GOOD{10}] \ge Pr[\TOP{10}] \ge 99.9\%$) \label{fig:zipfB}.}
\end{figure}
\vspace{-0.25cm}
We generated $\vec{x} \in \mathbb{R}^d$ with $d = 10^4$ directly by Zipf's law, i.e., $f(i) \propto {i^s}$ and $\vec{x}_i = (1.5 \cdot 10^8) f(i)$. 
Figure~\ref{fig:zipfA} shows the results for varying parameter $s$ of Zipfian distribution, which controls how challenging the distribution is; scores are uniform when $s=0$. $\CANONICAL$ and $\CANONICAL_{\gamma = 1}$ appear clearly superior to the existing mechanisms, being able to sample the top-$100$ or top-$100$ with a probability many times larger. For top-$10$, the difference is not obvious for $\eps=1$, but it becomes much larger for smaller $\eps$ (see Figure~\ref{fig:zipfB}).
Besides top-$k$, we considered a ``good''-$k$ subset as one that replaces at most half of the top-$k$ elements with elements from the top-$\lfloor \frac{3}{2}k \rfloor$ without touching the top-$\lceil \frac{1}{100}k \rceil$, i.e., $y \in \mathcal{C}_{h,t}$ with $h \ge \frac{1}{100}k$ and $t \le \frac{3}{2}k$ (cf. Definition~\ref{def:subsetclass}). The new mechanisms again outperform classical ones, particularly for large $k$ and small $\eps$ which is more demanding, but with a smaller margin than for top-$k$.

\subsection{Runtimes}


\newcommand{\microsec}[1]{\ensuremath{#1 \,\mu\text{s}}}

In our experiments with $2776$ runtime measurements\footnote{Running on a laptop with 3.2Ghz Apple M1 processor and Python {\tt{3.99}} interpreter using one single thread/core. The reported runtimes do not include sorting the data vector $\vec{x}$, which in our experiments took less than \microsec{(d \log_2 d)}  ($\microsec{1} = 10^{-6}$ secs).}, $\CANONICAL$ was as fast as $\PEELING$ (average $< 1$s) and $\CANONICAL_{\gamma=1}$ was as fast as $\ONESHOT$ (average $< 3$ms). 
As all methods are simple, the recorded runtimes can be predicted with $< 80\%$ relative error by \microsec{(d \cdot k)} for $\tilde{O}(dk)$ methods $\PEELING$ / $\CANONICAL$ and by \microsec{(d)}  for $\tilde{O}(d)$ methods $\ONESHOT$ / $\CANONICAL_{\gamma = 1}$. 

\section{Discussion}

We investigated three questions in this work. If it is possible to unify existing discrete selection mechanisms with an additive noise framework, if it is possible to operate selection mechanisms efficiently over subsets as an immediate selection domain, and if there are theoretical differences between approaches that select a $k$-subset directly or independently in $k$ steps (\textsc{Peeling}). The Lipschitz mechanism conclusively answers the first question, the Canonical Lipschitz mechanism (\CANONICAL{}) is itself an affirmative answer to the second question, and our analysis shows an $\Omega( \log k)$ factor improvement of \CANONICAL{} (our direct approach) over \textsc{Peeling} when $\frac{d}{k}=O(1)$, i.e., when $d$ does not grow faster than $k$. Experimental results also indicate clear practical benefits, i.e., being able to quickly and reliably obtain high utility subsets with a far smaller privacy budget. 

Open questions include if the Lipschitz mechanism could be generalized to \REVISE{continuous}{non-finite} selection domains, if other loss functions over subsets could yield similar efficiency and utility results, and if the theoretical analysis could be extended beyond Gumbel noise, which is used to instantiate the \REVISE{E}{e}xponential mechanism. Specifically, it appears all noise distributions perform fairly similarly. Also,  it would be interesting to see if unification, efficiency, and theoretical results in the same spirit could be achieved for other differential privacy (DP) notions such as approximate DP.

\bibliography{literature}
\bibliographystyle{icml2022}

\newpage
\appendix

\section{Appendix}

\subsection{Implementation} \label{sec:suppimpl}

\begin{theorem}\REVISE{[implementation]}{} \label{thm:implementation}
The Canonical Lipschitz Mechanism for top-$k$ can be sampled in $O(dk)$ time \REVISE{if all $d$ scores are pre-sorted}{for $d$ pre-sorted scores}.
\end{theorem}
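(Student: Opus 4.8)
The plan is to turn the Lipschitz mechanism of Definition~\ref{def:lipschitz} --- instantiated with $\kappa=1$, the canonical loss of Theorem~\ref{thm:topkcanonfunction} (which has sensitivity $\Delta_{\loss}=1$), and $\mathbb{Y}$ equal to the family of all $k$-subsets --- into an explicit four-phase sampler and then count its arithmetic operations. \emph{Phase~1} reads the pre-sorted input to obtain the order statistics $\top{1}\ge\cdots\ge\top{d}$ together with the index list $\topitem{1},\dots,\topitem{d}$, in $O(d)$ time. \emph{Phase~2} enumerates the utility classes: by Definition~\ref{def:subsetclass} these are indexed by a pair $(h,t)$ with $h$ taking $O(k)$ values and $t$ taking $O(d)$ values, so there are $O(dk)$ of them and every $k$-subset lies in exactly one. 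Traversing this grid in snake order, I maintain with $O(1)$ multiplicative updates both the shared loss value $\ell_{h,t}=(1-\gamma)\top{h+1}-\gamma\top{t}$ and the class multiplicity $c_{h,t}=|\mathcal{C}_{h,t}|$, which by Definition~\ref{def:subsetclass} is a binomial coefficient (neighbouring classes differ in $c$ by a ratio of two integers, and $c_{h,t}$ may be kept in log-space, as in the footnote, to avoid overflow). At each pair I draw a single standard uniform $U_{h,t}$ and, using that $U_{h,t}^{1/c_{h,t}}$ has the law of the maximum of $c_{h,t}$ i.i.d.\ standard uniforms, form the noisy loss $v_{h,t}=-\tfrac{\eps}{2}\,\ell_{h,t}+F^{-1}\!\big(U_{h,t}^{1/c_{h,t}}\big)$ in $O(1)$ time (for the Gumbel choice this is simply $\log c_{h,t}+F^{-1}(U_{h,t})$). \emph{Phase~3} keeps only the running maximiser $(h^\star,t^\star)=\argmax_{(h,t)}v_{h,t}$, so the auxiliary space stays $O(1)$ on top of the $O(k)$ output. \emph{Phase~4} outputs the head $\{\topitem{1},\dots,\topitem{h^\star}\}$, the tail $\topitem{t^\star}$, and a uniformly random body of the prescribed size $k-1-h^\star$ drawn from $\{\topitem{h^\star+1},\dots,\topitem{t^\star-1}\}$, which costs $O(k)$ time and space, e.g.\ by Floyd's subset-sampling algorithm. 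Altogether the phases run in $O(d)+O(dk)+O(1)+O(k)=O(dk)$ time.

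For correctness I would invoke the grouping argument of Section~\ref{sec:fastlipschitz}. Since $F^{-1}$ is strictly increasing and all members of a class $\mathcal{C}_{h,t}$ share the same rescaled loss $-\tfrac{\eps}{2}\ell_{h,t}$, only the member receiving the largest noise term within that class can ever attain the global $\argmax$ over $\mathbb{Y}$ of the noisy loss; hence it is lossless to generate, per class, just that largest term, whose law is exactly $F^{-1}(U_{h,t}^{1/c_{h,t}})$ by the order-statistics identity above. Conditioned on $\mathcal{C}_{h,t}$ being the overall winner, the winning subset is uniform over $\mathcal{C}_{h,t}$ by exchangeability of the within-class noise terms, and Phase~4 samples precisely this law, since within $\mathcal{C}_{h,t}$ the head and tail are fixed while the body ranges uniformly over all admissible subsets of $\{\topitem{h+1},\dots,\topitem{t-1}\}$. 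Thus the sampler reproduces the output distribution of Definition~\ref{def:lipschitz} within the claimed time.

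The main obstacle is bookkeeping rather than anything deep. First, the class enumeration must match Definition~\ref{def:subsetclass} on its boundary cases --- the admissible ranges of $h$ and $t$, the special rule that $t=k$ occurs only with $h=k-1$, and the empty-body case --- so that the classes are genuinely disjoint and exhaustive and the $c_{h,t}$ are the correct binomial coefficients; here a routine hockey-stick identity recovers the total $\sum_h c_{h,t}=\binom{t-1}{k-1}$ from the footnote and confirms nothing is double-counted. Second, one must make the ``$O(1)$ per class'' cost honest under the precision requirement: the $c_{h,t}$ can be astronomically large, so both the incremental update of $c_{h,t}$ and the $c_{h,t}$-th root must be carried out in log-space (e.g.\ with $F^{-1}(p)=-\log(-\log p)$, for which $\log c_{h,t}$ is a telescoping sum of logarithms) or via an arbitrary-precision library; granted that, each class contributes a constant number of elementary operations and the $O(dk)$ bound follows.
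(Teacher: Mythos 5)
Your proposal is correct and takes essentially the same approach as the paper's proof: both enumerate the $O(dk)$ utility classes $\mathcal{C}_{h,t}$, maintain the binomial multiplicities $|\mathcal{C}_{h,t}|$ via $O(1)$ multiplicative updates (in log-space for precision), use the order-statistic identity $F^{-1}(U^{1/|\mathcal{C}_{h,t}|})$ to generate each class's maximal noise term, take the running argmax, and then sample a uniformly random subset from the winning class. You spell out a few things the paper leaves implicit --- the disjoint/exhaustive partition check, the $O(k)$ body sampling via Floyd's algorithm, and the exchangeability argument for within-class uniformity --- but the decomposition and the $O(dk)$ count are the same.
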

\begin{proof}

\REVISE{The Canonical Lipschitz mechanism can be implemented to release a random subset from the}{The mechanism releases a subset from } class $\mathcal{C}_{H,T}$ \REVISE{}{(cf. Definition~\ref{def:subsetclass})} with $(H,T)$ equal to:

$$\argmax_{(h,t)} \left\{ \frac{ \,(1-\gamma)\top{h+1}-(\gamma)\top{t} }{-{2/ \eps}}  +F^{-1}( {U_{h,t}}^{1/|\mathcal{C}_{h,t}|} ) \right\}$$

with $|\mathcal{C}_{k-1,k}| = 1$ and $|\mathcal{C}_{h,t}| = {{t-h-2}  \choose {k-1-h}}$ for $h \in \{0, \ldots, k-1\}, t \in \{k+1, \ldots, d\}$. 

The binomial coefficients for each $t$ can be computed in $O(k)$ by starting with $h = k-1$ and ${{t-h-2} \choose {0}} = 1$  and decreasing $h$ until $h = 0$. In each step, $h$ decreases by $1$ and ${a \choose b} = \frac{a}{b}{ {a-1} \choose {b-1}}$ can be used to update the binomial coefficient. The number of subsets in each class $\mathcal{C}_{h,t}$ is equal to the distinct number of possibilities for the body $\mathcal{B}$, which is equal to the number of $(k-1-h)$-subsets out of $|\mathcal{B}| = |\{\topitem{h+2},\ldots, \topitem{t-1}\}|=({t-h-2})$ items (counted as $1$ if $|\mathcal{B}| = 0$). All subsets in the class $\mathcal{C}_{h,t}$ from Definition~\ref{def:subsetclass} have the same loss value. As the time complexity of implementations  hinges upon the distinct number of loss values, one can see that there is the class $\mathcal{C}_{k-1,k}$ with the top-$k$ and otherwise $h \in \{0, \ldots, k-1\}$ and $t \in \{k+1, \ldots, d\}$. Thus, the total number of classes is %

$1+k(d-k) = 1+dk-k^2 = O(dk)$.
\end{proof}

\REVISE{}{
The time can be reduced to $O(d)$ for $\gamma=1$ as only each $t \in \{k, \ldots, d\}$ has a distinct loss value. In this case $\sum_{h = 0}^{k-1} |\mathcal{C}_{h,t}| = {{t-1} \choose {k-1}}$ which can be also computed via $O(1)$ time updates by considering that for $n,k \in \mathbb{N}$:

\begin{center}
\begin{tabular}{lcl}
\toprule
 ${{n} \choose {n}} = { {k} \choose {k}} = 1$ & \ \ & ${{n} \choose {1}} = {{n} \choose {n-1}} = n$ \\
\midrule
${ {n} \choose {k-1}}= \frac{k}{n-k+1} {n \choose k }$ & &
 ${ {n} \choose {k+1}}= \frac{n-k}{k+1} {n \choose k }$ \\
\midrule
${ {n-1} \choose {k}}= \frac{n-k}{n} {n \choose k }$ & &
 ${ {n+1} \choose {k}}= \frac{n+1}{n+1-k} {n \choose k }$ \\
\midrule
 ${ {n-1} \choose {k-1}}= \frac{k}{n} {n \choose k }$ & &
 ${ {n+1} \choose {k+1}}= \frac{n+1}{k+1} {n \choose k }$ \\
\bottomrule
\end{tabular}
\end{center}
}
\SetKwInput{KwInput}{Input}
\SetKwInput{KwOutput}{Output}

\begin{algorithm2e}
\DontPrintSemicolon
\REVISE{}{\KwInput{scoring function $f : \mathbb{N} \times \mathbb{X} \rightarrow \mathbb{R}$ over $d$ items, subset size $k \in \{1, \ldots, d-1\}$ and  $\eps \in \mathbb{R}_{\ge 0}$}}

Let $\Delta_{f}$ be the sensitivity of $f$. \;
Let $\vec{x}_i = f(i \mid \hat{x})/\Delta_f$ for any $i \in \{1, \ldots, d\}$. \;

\REVISE{}{Let $\vec{x}_{[1]} \ge \ldots \ge \vec{x}_{[d]}$ sort $\vec{x}$ \tcp{in $O(d\log d)$}}

\If{$\gamma < 1$}{

\REVISE{\tcc{$\CANONICAL$ ($\gamma \in [0,1]$) with $O(dk)$ runtime}}{
\tcc{$\CANONICAL$ sampling}
\tcc{in $O(dk)$ time and $O(1)$ space}
}

Let $U_{h,t} \sim Unif(0,1)$ be i.i.d. for any $h \in \{0, \ldots, k-1\}, t \in \{k, \ldots, d\}$ \;
Initiate $H = k-1$ and $T = k$. \;
Let $\eps_1 = (1-\gamma)\eps$ and $\eps_2 = \gamma \cdot \eps$ \REVISE{}{s.t. $\eps_1+\eps_2 = \eps$}. \;
\REVISE{Set $H = k-1$, $T = k$ and}{Initiate } $v = \frac{\eps_2-\eps_1}{2} \top{k} + F^{-1}(U_{H,T})$ \;
\ForEach{$t \in \{k+1, \ldots, d\}$}{
    \REVISE{}{Initiate} $m = 0$\hfill \tcp{$m = \log {{t-(h+2)} \choose {k-(h+1)}}$ } 
    
    \REVISE{}{Initiate} $h = k-1$ \;
    \While{$h \ge 0$}{
        \If{$k-(h+1) > 0$}{
            \REVISE{Set}{Update} $m = m+ \REVISE{\log(\,t-(h+1)\,)-\log(\,k-(h+1)\,)}{\log \frac{t-(h+1}{k-(h+1)} } $ \;
        }
        \REVISE{
        Let $v' = { \frac{\eps_2}{2} \top{t}  - \frac{ \eps_1}{2}\top{h+1} + F^{-1}({U_{h,t}}^{\exp(-m)})}$ \;}{
            Let $X = F^{-1}({U_{h,t}}^{\exp(-m)})$ \;
            Let $v' = \frac{\eps_2}{2} \top{t}  - \frac{\eps_1}{2} \top{h+1} + X$ \;
        }
        \If{$v' > v$}{
            \REVISE{Set}{Update} $v = v'$, $H = h$ and $T = t$ \;
        }
       \REVISE{}{Update} $h = h-1$\;
    }
}
Report random subset in $\mathcal{C}_{H,T}$.
} \Else{

\REVISE{
\tcc{$\CANONICAL_{\gamma = 1}$ with $O(d)$ runtime}
}{
\tcc{$\CANONICAL_{\gamma = 1}$ sampling}
\tcc{ in $O(d)$ time and $O(1)$ space}
}

Let $U_\REVISE{i}{t} \sim Unif(0,1)$ be i.i.d. for \REVISE{any}{} $\REVISE{i}{t} \in \{\REVISE{1}{k}, \ldots, d\}$. \;
Initiate $T = k$ and $v = \top{k} + F^{-1}(U_{T})$ \;
Initiate $m = 0$ \hfill\tcp{$m = \log {{t-1} \choose {k-1}}$ }
Initiate $t = k+1$ \;
\While{$t \le d $}{
    \REVISE{Set}{Initiate}  $m = m+ \REVISE{\log(\,t-1\,)-\log(\, (t-1)-(k-1)\, )}{\log \frac{t-1}{(t-1)-(k-1)}} $ \;
    \If{ $(t-1) \ge (k-1)$ }{
    
        \REVISE{
    
        Let $v' = \top{t} \eps_2 - \top{h+1} \eps_1{ \eps \cdot \top{t}} + F^{-1}({U_t}^{\exp(-m)})$ \;
        
        }{  Let $X = F^{-1}({U_t}^{\exp(-m)})$ \; Let $v' = \frac{\eps}{2} \top{t} + X$\;}
        
        \If{$v' > v$}{
            \REVISE{Set}{Update} $v = v'$ and $T = t$ \;
        }
    }
    \REVISE{Set}{Update} $t = t+1$ \;
}
\If{$T = k$}{ 
    Report $\{1, \ldots, k\}$ \;
} \Else {
    Report random subset in $(\mathcal{C}_{0,T} \cup \mathcal{C}_{1,T} \cup \ldots \cup \mathcal{C}_{k-1,T})$.
}

}
\caption{Canonical Lipschitz Mechanism for top-$k$ \REVISE{}{(with $F^{-1}$ for additive noise generation and $\gamma \in [0,1]$ )} \label{alg:canonical}}
\end{algorithm2e}

\begin{algorithm2e}
\DontPrintSemicolon
\REVISE{}{\KwInput{scoring function $f : \mathbb{N} \times \mathbb{X} \rightarrow \mathbb{R}$ over $d$ items, subset size $\kappa \in \{1, \ldots, d-1\}$ and  $\eps \in \mathbb{R}_{\ge 0}$}}
Let $\vec{x}_i = f(i \mid \hat{x})/\Delta_f$ for any $i \in \{1, \ldots, d\}$. \;

Let $U_\REVISE{i}{t} \sim Unif(0,1)$ be i.i.d. for any $\REVISE{i}{t} \in \{\REVISE{1}{k}, \ldots, d\}$. \;

Maintain \REVISE{an initally empty}{} heap structure \REVISE{with the}{for} to\REVISE{o}{p}-$\kappa$ based on scores \;

\ForEach{$i \in \{1, \ldots, d\}$}{

    Let $Z_i = \frac{\eps}{ 2\kappa\Delta_f}\vec{x}_i + F^{-1}(U_i)$ \;
    Consider $i$ for top-$\kappa$ based on noisy score $Z_i$ \;
}

Report top-$\kappa$.
\caption{\REVISE{General}{(General)} Lipschitz Mechanism \label{alg:lipschitz} \REVISE{}{(with $F^{-1}$ for additive noise generation)}}
\end{algorithm2e}

\REVISE{To compute $F^{-1}( U^{\exp(-m)} ) $ one can either use arbitrary precision libraries, or in case that $F^{-1}$ is the inverse distribution function of the Gumbel distribution one can exploit:}{When instantiating the exponential mechanism, subset probabilities 
$Pr(y \in \mathcal{C}_{h,t}) \propto \exp( \frac{ \,(1-\gamma)\top{h+1}-(\gamma)\top{t} }{-{2/ \eps}} ) $. For sampling large numbers can be avoided by taking $F^{-1}( U^{\exp(-m)} )$ into the logspace:}

\begin{lemma}
Let $F^{-1}(p) = -\log(-\log(p))$, \REVISE{then:
$$F^{-1}( U^{\exp(-m)} ) = m + F^{-1}(U)$$}{then $F^{-1}( U^{\exp(-m)} ) = m + F^{-1}(U)$}
\end{lemma}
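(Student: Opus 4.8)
The plan is to prove the identity $F^{-1}(U^{\exp(-m)}) = m + F^{-1}(U)$ directly by substituting the Gumbel inverse CDF $F^{-1}(p) = -\log(-\log(p))$ and simplifying, using only elementary properties of $\log$ and $\exp$.

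First I would compute the left-hand side: with $p = U^{\exp(-m)}$ we have $\log(p) = \exp(-m)\log(U)$, so $-\log(-\log(p)) = -\log(-\exp(-m)\log(U)) = -\log(\exp(-m)) - \log(-\log(U)) = m - \log(-\log(U))$. Then I would observe that $-\log(-\log(U)) = F^{-1}(U)$, which yields exactly $m + F^{-1}(U)$. The only technical point worth a word is that $-\log(U) > 0$ almost surely for $U \sim \mathrm{Unif}(0,1)$, so the quantities inside the logarithms are positive and the manipulations are valid; also $\exp(-m) > 0$ lets us split the log of the product.

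Since the argument is a two-line chain of equalities, there is no real obstacle — the "hard part" is merely being careful that each logarithm has a positive argument (which holds on the support of $U$) and that the law of $U^{\exp(-m)}$ is used only through the pointwise substitution $p = U^{\exp(-m)}$ into $F^{-1}$. I would present it as a short displayed computation:
\[
F^{-1}\!\left(U^{\exp(-m)}\right) = -\log\!\left(-\log\!\left(U^{\exp(-m)}\right)\right) = -\log\!\left(\exp(-m)\,(-\log U)\right) = m - \log(-\log U) = m + F^{-1}(U).
\]
That completes the proof, and I would remark that this is why, for Gumbel noise, generating the maximal noise term of a group of size $|\mathcal{C}_{h,t}|$ reduces to adding $m = \log|\mathcal{C}_{h,t}|$ to a single standard Gumbel draw, so all computations stay in log-space and large binomial coefficients never need to be exponentiated.
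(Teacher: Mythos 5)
Your computation is correct and is essentially identical to the paper's: substitute the Gumbel inverse CDF, pull the exponent out of the inner logarithm, split the log of the product, and simplify $-\log(\exp(-m)) = m$. The added remark that $-\log U > 0$ almost surely is a harmless extra justification but does not change the argument.
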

\begin{proof}
\begin{align*}
F^{-1}( U^{\exp(-m)} ) &= \, -\log(-\log(U^{\exp(-m)})) \\
&= \, -\log(-\log(U) \exp(-m)) \\
&= \, -\log(-\log(U))-\log( \exp(-m)) \\
&= \, -\log(-\log(U))-(-m) \\
&= \, m-\log(-\log(U)) \\
&= \, m+F^{-1}(U) \\
\end{align*}
\end{proof}

\subsection{Replicability and Additional Experimental Results} \label{sec:repl}

Datasets are described in Table~\ref{tab:datasets}. The sensitivities of all featured (count-based) datasets are presumed to be $\Delta = \frac{1}{2}$ via the shifting trick (cf. Theorem~\ref{thm:asymmetric}).%
\REVISE{In addition to $\textsc{TOP}_k$, we distinguished two classes of high utility subsets based on the largest top-$h$ and the worst rank $h$ contained in a subset, as in Definition~\ref{def:subsetclass}. The first is $\textsc{GREAT}_k$ which mandates that every subset $y$ in this class contains at most top-$\lceil 0.1k \rceil$ items and the rest must come from the top-$\lfloor 1.1k \rfloor$, i.e. $y\in \mathcal{C}_{h,t}$ with $h\geq 0.1k$ and $t\leq 1.1k$. For example, for $k =  100$  this class mandates that every subset $y$ in it contains the top-$10$ items and the remaining $90$ items from the top-$110$. The second is $textsc{GOOD}_k$ which mandates that every subset $y$ in it must contain at least $\lceil 0.01k \rceil$ items from the top-$k$ and the rest from the top-$\lfloor 1.5k \rfloor $, i.e., $y\in $ with  $\mathcal{C}_{h,t}$ with $h\geq 0.01k$ and $t\leq 1.5k$. For example, for $k=100$ this class mandates that every subset $y$ in it contains at least the top-$1$ item and the remaining $99$ items from the top-$150$. 

\begin{align*}
\textsc{TOP}_k(y) &\Leftrightarrow y \in \mathcal{C}_{k-1,k} \Leftrightarrow y \in \OPT(\vec{x}) \\
\textsc{GREAT}_k(y) &\Leftrightarrow y \in \mathcal{C}_{h,t} \text{ with } { \, h \ge 0.1k \,} \text{ and }{ \, t \, \le 1.1k \, } \\
\textsc{GOOD}_k(y) &\Leftrightarrow y \in \mathcal{C}_{h,t} \text{ with } { \, h \ge 0.01k \,}  \text{ and } {  t \, \le 1.5k \, } \\%
\end{align*} }{

We aggregate $\mathcal{C}_{h,t}$ classes from Definition~\ref{def:subsetclass} into high utility predicates:

\begin{align*}
\textsc{TOP}\REVISE{_k}{}(y) &\Leftrightarrow y \in \mathcal{C}_{k-1,k} \Leftrightarrow y \in \OPT(\vec{x}) \\
\textsc{GREAT}\REVISE{_k}{}(y) &\Leftrightarrow y \in \mathcal{C}_{h,t} \text{ with } { \, h \ge \frac{1}{10}k \,} \text{ and }{ \, t \, \le k+\frac{k}{10} \, } \\
\textsc{GOOD}\REVISE{_k}{}(y) &\Leftrightarrow y \in \mathcal{C}_{h,t} \text{ with } { \, h \ge \frac{1}{100}k \,}  \text{ and } {  t \, \le k+\frac{k}{2} \, } \\%
\end{align*}

The predicate $\textsc{TOP}\REVISE{_k}{}(y)$ mandates $y$ to be the exact top-$k$. The predicate $\textsc{GREAT}\REVISE{_k}{}(y)$ mandates for $y$ the inclusion of the top-$\lceil \frac{1}{10}k \rceil$ and exclusion of of items outside of top-$\lfloor \frac{11}{10}k \rfloor$. For $k =  100$ this means inclusion of all top-$10$ items and the remaining $90$ items must come from top-$110$. The predicate $\textsc{GOOD}\REVISE{_k}{}(y)$ mandates for $y$ the inclusion of the top-$\lceil \frac{1}{100}k \rceil$ and exclusion of items outside of top-$\lfloor \frac{3}{2}k \rfloor$. 
}

Workflow of how each plot (with $\eps$ in $x$-axis) is generated:

\REVISE{
\begin{itemize}
\item We take a single vector $\vec{x}$ as an input to all mechanisms and fix a subset size $k$ (repeated for different $k \in \{10,100,1000\}$). 
\item We fix different values of $\eps$. For each value of $\eps$ we either calculate the probability distribution when possible ($\CANONICAL$, $\CANONICAL_{\gamma = 1})$ or estimate probabilities via Monte Carlo methods (using ideas from Section~\ref{sec:fastlipschitz}) with $10000$ generated subset classes ($\ONESHOT,\PEELING$) for each considered $\eps$ value. Additionally, we generate one subset for runtime measurements and to validate the probabilities.
\item Based on that we calculate for each $\eps$ the probabilities of selecting a subset that is \textsc{TOP}-$k$, \textsc{GREAT}-$k$ , \textsc{GOOD}-$k$ or neither (any \textsc{TOP}-$k$ is also \textsc{GREAT}-$k$ and any \textsc{GREAT}-$k$ is also \textsc{GOOD}-$k$). 
\item We plot (each plot has a fixed $k$ and $\vec{x}$ except Figure~\ref{fig:zipfA} and Figure~\ref{fig:zipfB} where the synthetic data distribution is varied) for each $\eps$ ($x$-value) the probability ($y$-value) of each considered utility class.
\end{itemize}
}{
\begin{itemize}
\item The vector $\vec{x}$ is fixed for one of the datasets and given as input to all mechanisms.
\item The subset size $k \in \{10,100,1000\}$ is fixed 
\item The privacy loss $\eps \in \mathbb{R}_{\ge}$ is then varied with sufficient precision for plotting purposes
\item For each data point $\vec{x},k,\eps$ either the probability distribution over $\mathcal{C}_{h,t}$ is computed ($\CANONICAL$, $\CANONICAL_{\gamma = 1})$ or probabilities are estimated via Monte Carlo methods (using ideas from Section~\ref{sec:fastlipschitz}) with $10000$ generated subset classes ($\ONESHOT,\PEELING$). Additionally, a subset is sampled for runtime measurements and validation purposes.
\item Each plot (for fixed $\vec{x},k$) then shows along the $x$-axis the $\eps$ (except Figure~\ref{fig:zipfA} and Figure~\ref{fig:zipfB} where the synthetic data distribution is varied) and along the $y$-axis the probability of either \textsc{TOP}-$k$, \textsc{GREAT}-$k$ or \textsc{GOOD}-$k$.
\end{itemize}
}

Implementation details:

\begin{itemize}
\item $\ONESHOT_{\text{Exp}}$ is implemented as the $\eps$-DP mechanism in Algorithm~\ref{alg:lipschitz} with $\kappa = k$.
\item \REVISE{$\PEELING$ samples the $\frac{\eps}{k}$-DP mechanism$k$ times without replacement as in Algorithm~\ref{alg:lipschitz} with $\kappa = 1$.}{$\PEELING$ samples $k$ times without replacement from the $\frac{\eps}{k}$-DP mechanism in Algorithm~\ref{alg:lipschitz} with $\kappa = 1$.}
\item $\CANONICAL$ as the $\eps$-DP mechanism in Algorithm~\ref{alg:canonical} with $\gamma = \frac{1}{2}$.
\item $\CANONICAL_{\gamma = 1}$ as the $\eps$-DP mechanism in Algorithm~\ref{alg:canonical} with $\gamma = 1$.
\end{itemize}

The inverse distribution function $F^{-1}$ used in the approaches (always standard distribution parameters):

\begin{tabular}{rll}
\toprule
Approach & $F^{-1}(p)$ & Distribution \\
\midrule
$\ONESHOT_{\text{Exp}}$ & $-\log(\REVISE{p}{1-p})$ & Exponential \\
$\PEELING$ & $-\log(-\log(p))$ & Gumbel \\
$\CANONICAL$ & $-\log(-\log(p))$ & Gumbel \\
$\CANONICAL_{\gamma = 1}$ & $-\log(-\log(p))$ & Gumbel \\
\bottomrule
\end{tabular}

Additional plots:

\begin{itemize}
\item Figure~\ref{fig:lipschitz} compares the Lipschitz mechanism with $\kappa = k \in \{1,10,100,1000\}$ for different choices of $F^{-1}$.

\item Figure~\ref{fig:netflixaddition} supplements Figure~\ref{fig:netflix} with \REVISE{$\textsc{GREAT}_k$}{$\textsc{GREAT}$-$k$} results.
\item Figures~\ref{fig:patent}, \ref{fig:searchlogs}, \ref{fig:medcost}, \ref{fig:income}, \ref{fig:hepth} replicate results from the paper for five additional datasets. \REVISE{The requires $\eps$ to reach good utility is sometimes large due to the small number of users participating in some of these datasets and is unrelated to the methods.}{Some datasets lack a sufficient number of users to reach good utility with any $\eps$-DP methods with $\eps < 1$.} In some rare instances the top-$k$ can be one out of many arbitrary subsets (we break ties arbitrarily for top-$k$), because we do not modify $\vec{x}$ to break ties between uniform values.
\end{itemize}

\begin{table*}
{
\begin{tabular}{lll}
\toprule
{\bf Dataset} $\vec{x} \in \mathbb{R}^d$ & {\bf Size $d$} & {\bf Description} (of data from which $\vec{x}$ is extracted)  \\
\midrule
$\textsc{netflix}$ & \REVISE{17700}{17770} & \SINGLEDOUBLE{\small}{}Netflix movie ratings (we count how many \REVISE{Netflix users who}{users} gave each movie a 5/5 rating) \\
$\textsc{patent}$ & 4096 & \SINGLEDOUBLE{\small}{}
Citation network among a subset of US
patents\\
$\textsc{searchlogs}$ & 4096 & \SINGLEDOUBLE{\small}{}Query logs for query ``Obama'' issued from Jan. 1, 2004 to Aug. 9, 2009.  \\
$\textsc{medcost}$ & 4096 & \SINGLEDOUBLE{\small}{}Personal medical
expenses based on a national home and hospice care survey \REVISE{from 2007}{}\\
$\textsc{income}$ & 4096 & \SINGLEDOUBLE{\small}{} ``Personal Income'' attribute of the IPUMS American community survey data \REVISE{from 2001-2011}{}\\
$\textsc{hepth}$ & 4096 & \SINGLEDOUBLE{\small}{}Citation network among high energy physics pre-prints
on arXiv \\
\bottomrule
\end{tabular}
}
\caption{Featured real-world data sets. 
The first dataset is the Netflix Prize dataset~\cite{netflix_data_paper}. We used as input to all approaches a data vector $\vec{x}\in \mathbb{R}^d$ with $d$ scores.  
The other datasets and their corresponding data vectors were obtained from~\cite{mckenna2020permute}. 
\label{tab:datasets}}
\end{table*}

\begin{figure*}
\centering
\begin{tabular}{lr}
Top-$1$ selection with $\kappa = k = 1$ & Top-$k$ oneshot selection with $\kappa = k \in \{10,100,1000\}$ \\[0.25cm]
\includegraphics[trim=5.25cm 19cm 11.5cm 4.25cm, clip,width=0.45\textwidth]{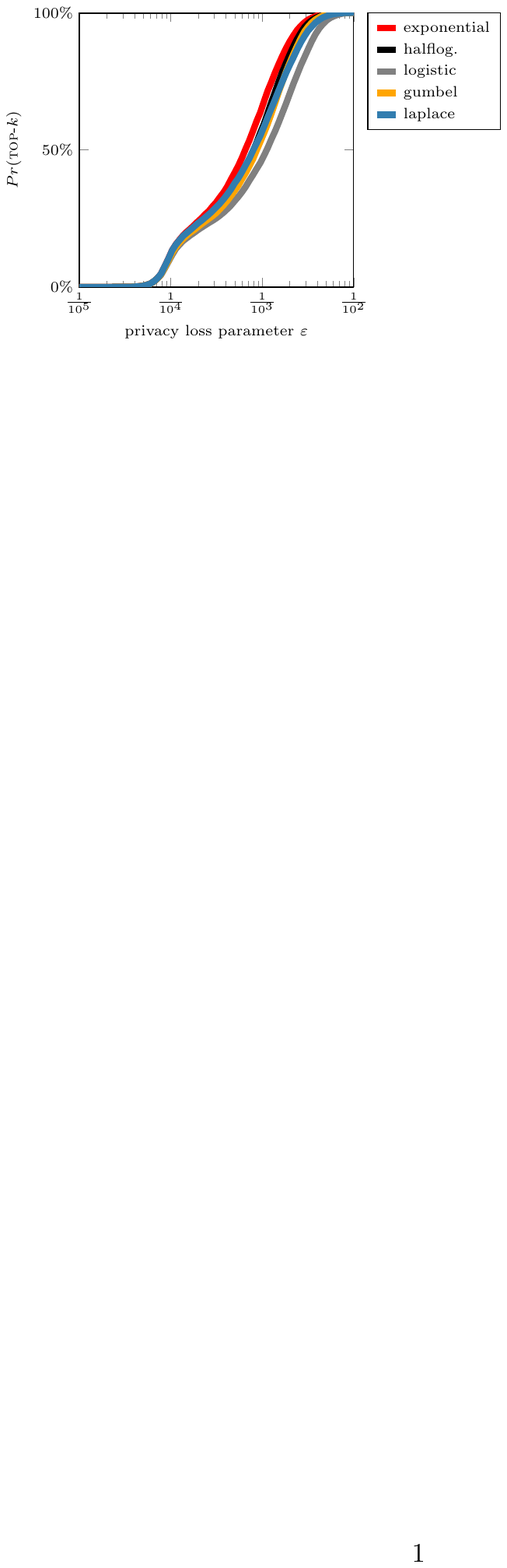} & \includegraphics[width=0.55\textwidth]{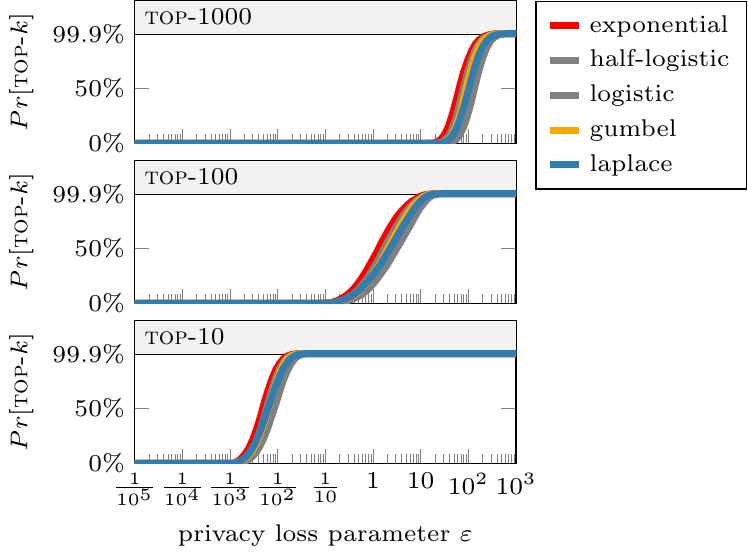}
\end{tabular}
\caption{Lipschitz mechanism with different noise distributions ($\vec{x} = \textsc{netflix} \in \mathbb{R}^{17770},k = \kappa \in \{1,10,100,1000\}, \eps \in \mathbb{R}_{> 0}$). Different noise distribution instantiate many selection mechanisms from the literature (cf. Section~\ref{sec:supplipschitz}) \label{fig:lipschitz}}
\end{figure*}

\newpage

\begin{figure*}
\includegraphics[width=0.95\textwidth]{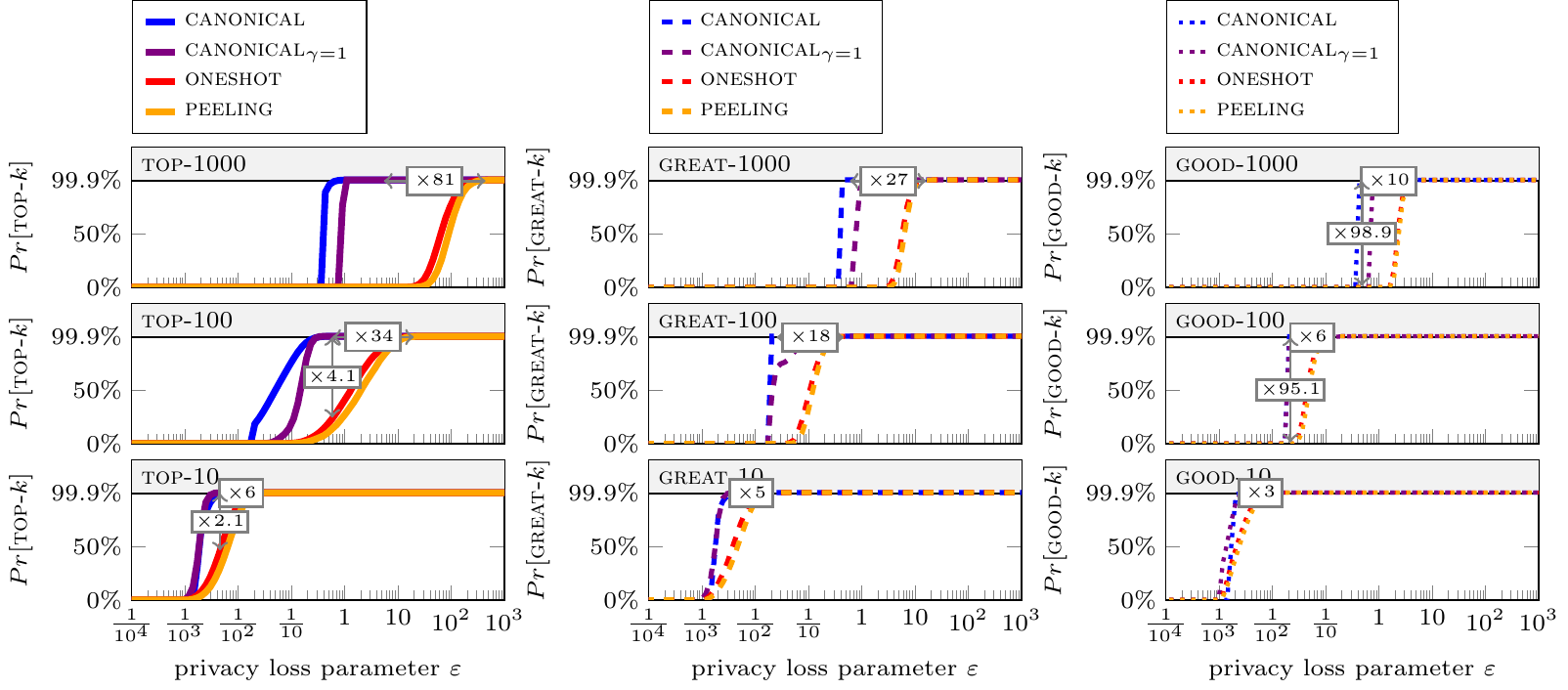}
\caption{Comparison ($\vec{x} = \textsc{netflix} \in \mathbb{R}^{17700},k \in \{10,100,1000\}, \eps \in \mathbb{R}_{> 0}$). }\label{fig:netflixaddition}
\end{figure*}

\begin{figure*}
\includegraphics[width=0.95\textwidth]{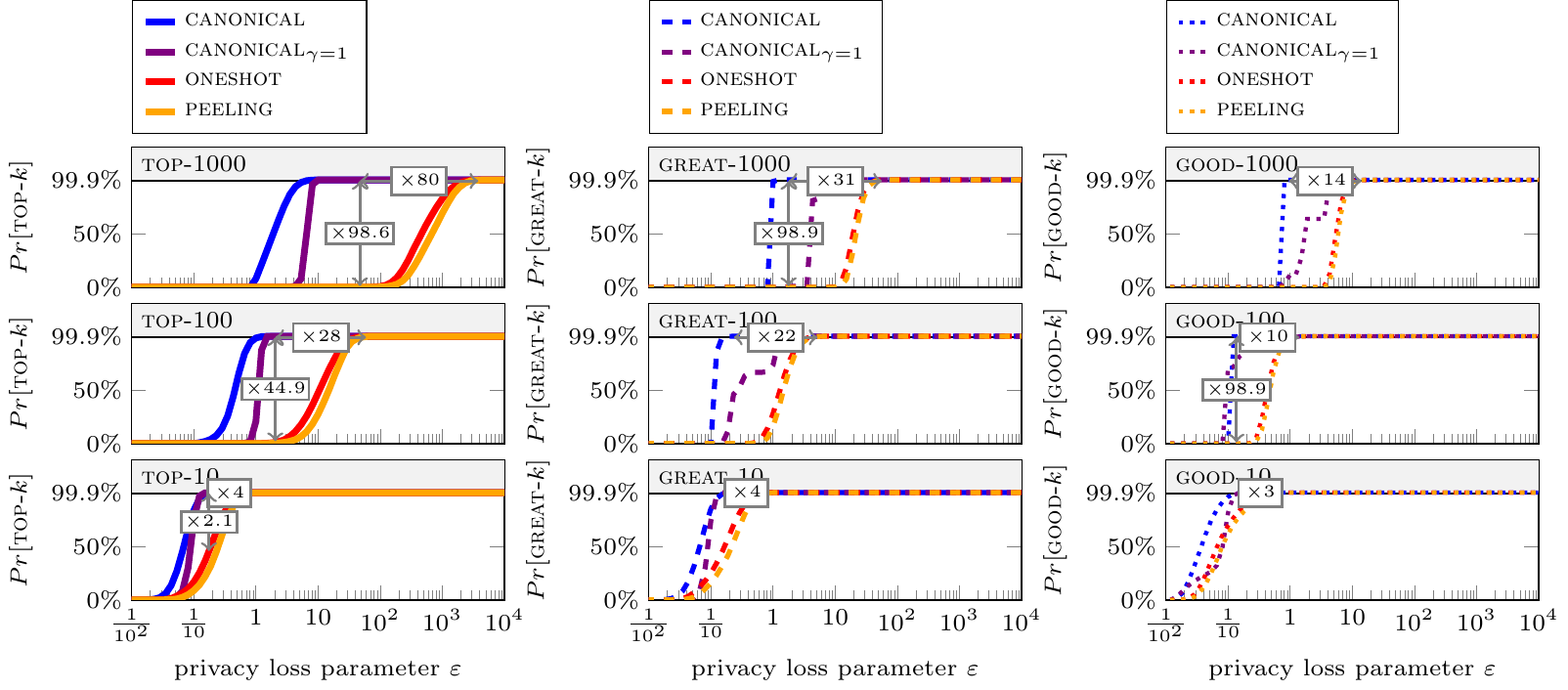}
\caption{Comparison ($\vec{x} = \textsc{patent} \in \mathbb{R}^{4096},k \in \{10,100,1000\}, \eps \in \mathbb{R}_{> 0}$)}\label{fig:patent}
\end{figure*}

\begin{figure*}
\includegraphics[width=0.95\textwidth]{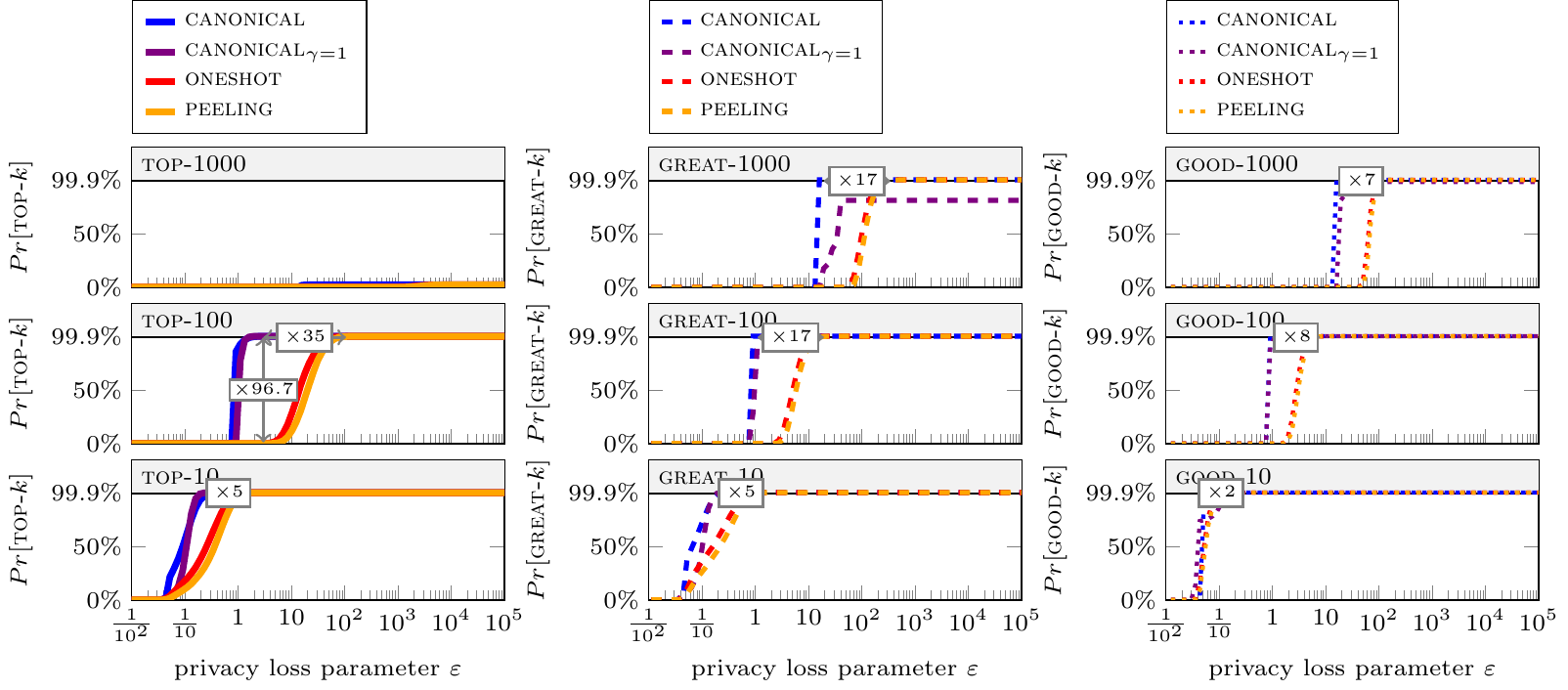}
\caption{Comparison ($\vec{x} = \textsc{searchlogs} \in \mathbb{R}^{4096},k \in \{10,100,1000\}, \eps \in \mathbb{R}_{> 0}$)}\label{fig:searchlogs}
\end{figure*}

\begin{figure*}
\includegraphics[width=0.95\textwidth]{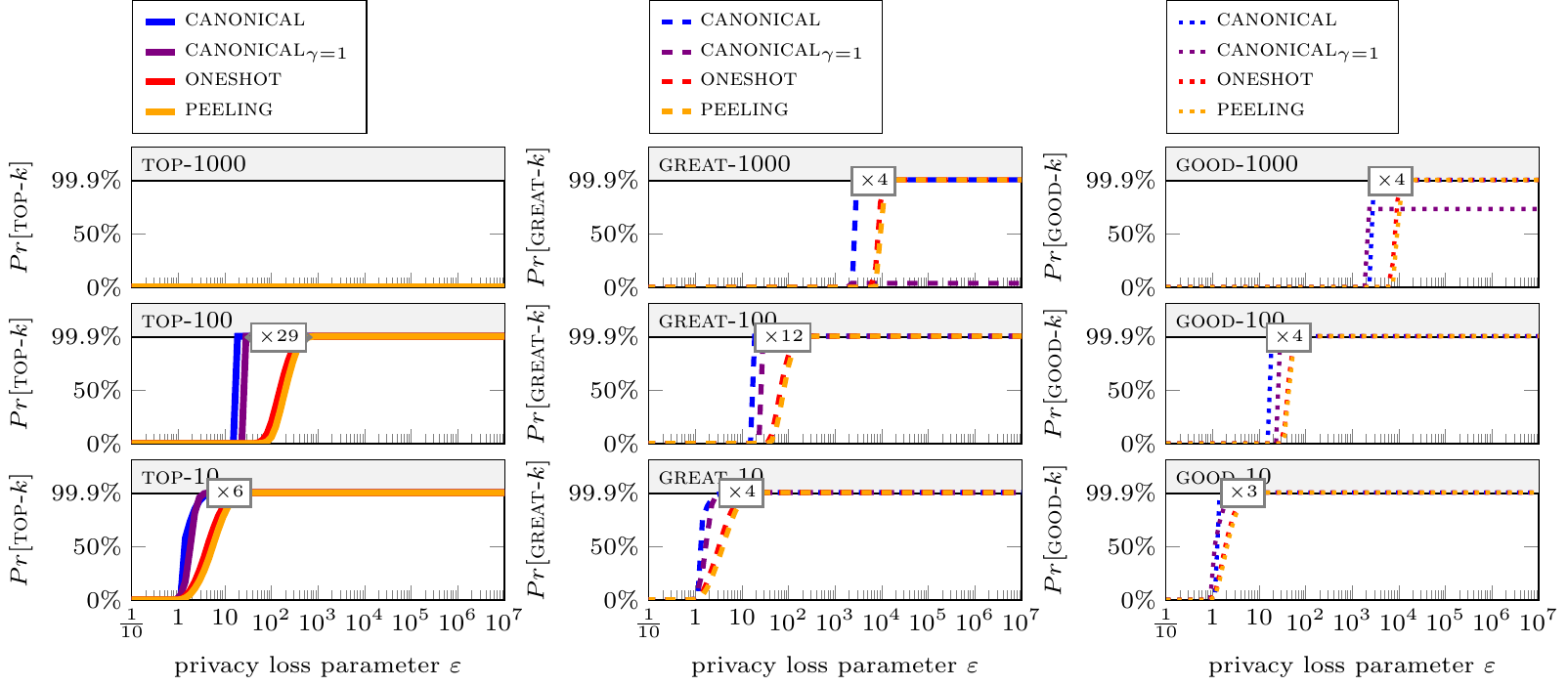}
\caption{Comparison ($\vec{x} = \textsc{medcost} \in \mathbb{R}^{4096},k \in \{10,100,1000\}, \eps \in \mathbb{R}_{> 0}$)}\label{fig:medcost}
\end{figure*}

\begin{figure*}
\includegraphics[width=0.95\textwidth]{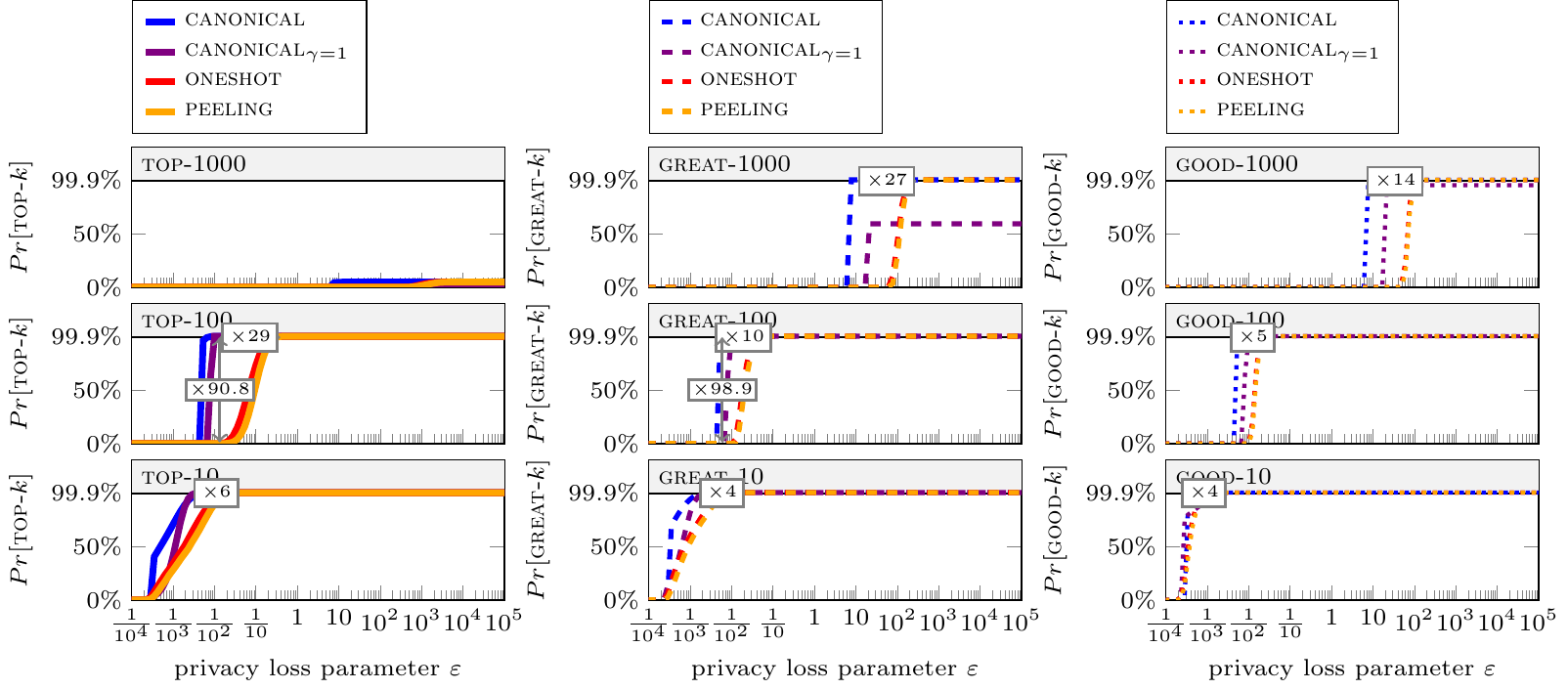}
\caption{Comparison ($\vec{x} = \textsc{income} \in \mathbb{R}^{4096},k \in \{10,100,1000\}, \eps \in \mathbb{R}_{> 0}$)}\label{fig:income}
\end{figure*}

\begin{figure*}
\includegraphics[width=0.95\textwidth]{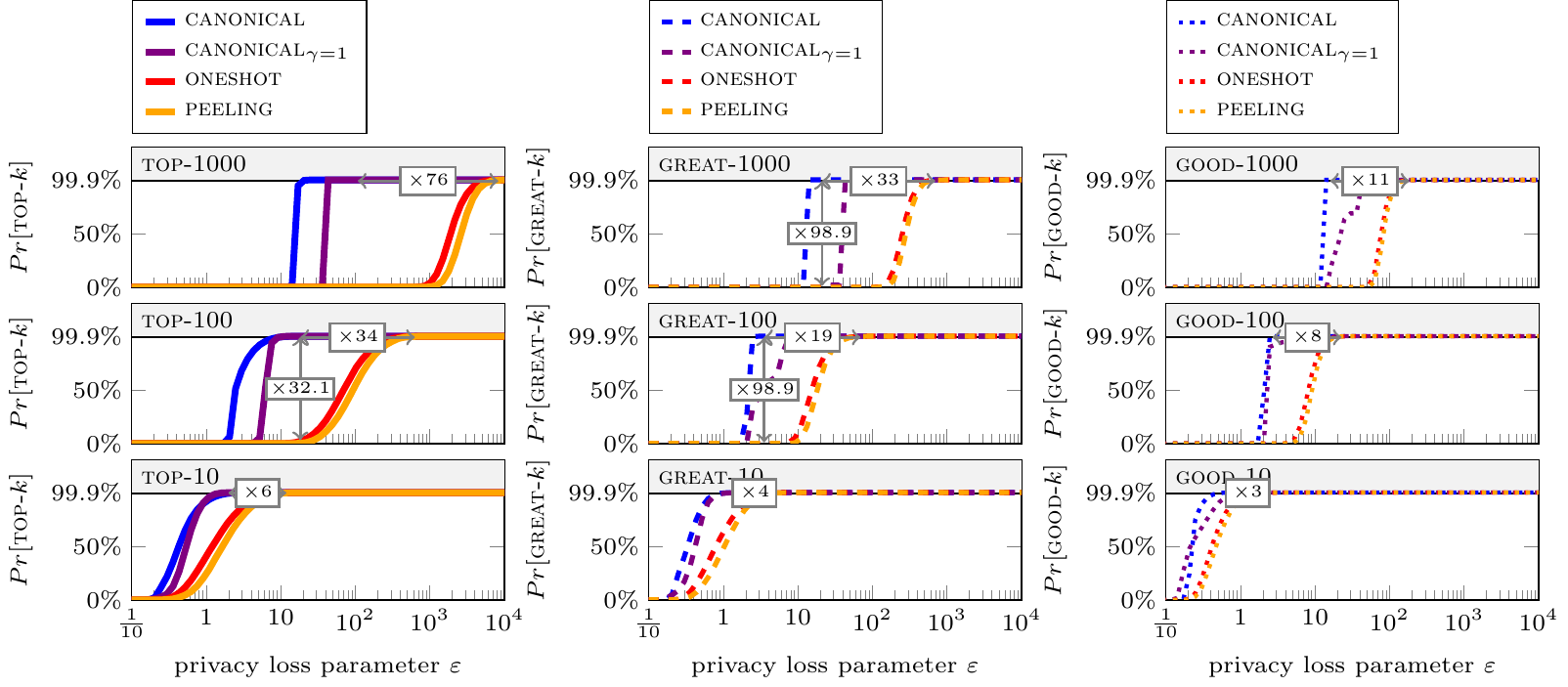}
\caption{Comparison ($\vec{x} = \textsc{hepth} \in \mathbb{R}^{4096},k \in \{10,100,1000\}, \eps \in \mathbb{R}_{> 0}$)}\label{fig:hepth}
\end{figure*}

\clearpage

\subsection{The Lipschitz Mechanism: Overview } \label{sec:supplipschitz}

The standard Exponential, Gumbel, Laplace, Logistic and Half-Logistic distribution are examples of distributions that satisfy the Lipschitz property mandated in the Lipschitz mechanism:

\begin{itemize}

\item Theorem~\ref{lem:lipschitzexponential}: The standard Exponential distribution with $F(x) = 1-\exp(-x)$ for $x \in \mathbb{R}_{\ge 0}$, $F(x) = 0)$ for $x \in \mathbb{R}_{< 0}$ and $F^{-1} = -\ln(1-p)$ for $p \in [0,1)$ satisfies
the Lipschitz condition from the Lipschitz mechanism.

\item Theorem~\ref{lipschitzgumbel}: The standard Gumbel distribution with $F(x) = \exp(- \exp(-x))$ for $x \in \mathbb{R}$ and $F^{-1} = -\ln( -\ln(p) )$ for $p \in [0,1)$ satisfies
the Lipschitz condition from the Lipschitz mechanism.

\item Theorem~\ref{lem:lipschitzlaplace}: The standard Laplace distribution with $F(x) = \frac{1}{2} \exp(-x)$ for $x \in \mathbb{R}_{\ge 0}$ and $F(x) =  1-\frac{1}{2} \exp(-|x|)$ for $x \in \mathbb{R}_{< 0}$ and $F^{-1}(p) = \sgn(p-\frac{1}{2}) \ln(1-|2p-1|)$ for $p \in [0,1)$ satisfies
the Lipschitz condition from the Lipschitz mechanism.

\item Theorem~\ref{lem:lipschitzhalflogistic}: The standard Half-Logistic distribution with $F(x) = \frac{1-\exp(-x)}{1+\exp(-x)}$ for $x \in \mathbb{R}_{\ge 0}$ and $F(x) = 0$ for $x \in \mathbb{R}_{< 0}$ and $F^{-1} = \ln(1+p)-\ln(1-p)$ for $p \in [0,1)$ satisfies
the Lipschitz condition from the Lipschitz mechanism.

\item Theorem~\ref{lem:lipschitzlogistic}: The standard Logistic distribution with $F(x) = \frac{1}{1+\exp(-x)}$ for $x \in \mathbb{R}$ and $F^{-1} = -\ln(\, p/(1-p)\, )$ for $p \in [0,1)$ satisfies
the Lipschitz condition from the Lipschitz mechanism.

\end{itemize}

For some of these distributions the Lipschitz instantiates popular mechanisms from the literature:

\begin{itemize}

\item Theorem~\ref{lem:gumbelem}: The Lipschitz mechanism with $F^{-1}(p) = -\log(-\log(p))$ from the standard Gumbel distribution instantiates \REVISE{}{the }exponential mechanism (for $\kappa = 1$)
and peeling technique \cite{durfee2019practical} (for $\kappa = k$).

\item Theorem~\ref{lem:laplacernm}: The Lipschitz mechanism with $F^{-1}(p) = \sgn(p-\frac{1}{2}) \ln(1-|2p-1|)$ from \REVISE{}{the }standard Laplace distribution instantiates/matches report-noisy max mechanism \cite{dwork2014algorithmic} (for $\kappa = 1$)
and oneshot Laplace mechanism \cite{qiao2021oneshot} (for $\kappa = k$).
\item Theorem~\ref{lem:exponentialpf}: The Lipschitz mechanism with $F^{-1}(p) = -\log(1-p)$ from \REVISE{}{the }standard Exponential distribution instantiates permute-and-flip \cite{mckenna2020permute}.

\end{itemize}

\subsection{Exponential Lipschitz Mechanism: Permute-And-Flip Mechanism}

The Lipschitz mechanism is $\eps$-DP when adding exponentially distributed noise:

\begin{theorem}[Lipschitz condition: Exponential distribution] \label{lem:lipschitzexponential}
Let $F(x) = 1-\exp(-x)$ and $x,c \in \mathbb{R}$.

\REVISE{Then $$| \log(1-F(x))-\log(1-F(x+c))| \le c$$.}{Then $| \log(1-F(x))-\log(1-F(x+c))| \le |c|$.}
\end{theorem}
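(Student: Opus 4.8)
The plan is to make the substitution explicit and observe that $\log(1-F(x))$ is simply an affine function of $x$, hence $1$-Lipschitz with Lipschitz constant exactly $1$. First I would note that $1-F(x) = \exp(-x)$, so that $\log(1-F(x)) = -x$ for every $x \in \mathbb{R}$. Then for any $x, c \in \mathbb{R}$,
$$\log(1-F(x)) - \log(1-F(x+c)) = -x - \bigl(-(x+c)\bigr) = c,$$
and taking absolute values yields $|\log(1-F(x)) - \log(1-F(x+c))| = |c|$, which is certainly $\le |c|$. In fact equality holds here, which is precisely the sense (noted after Definition~\ref{def:lipschitz}) in which the exponential distribution satisfies the mandated Lipschitz condition tightly.

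The only point requiring any care — rather than a genuine obstacle — is the domain convention for $F$. If one works with the truncated exponential CDF, i.e.\ $F(x) = 0$ and hence $1 - F(x) = 1$ for $x < 0$, then $\log(1-F(x)) = -\max(x,0)$ rather than $-x$. In that case I would instead invoke the elementary fact that $t \mapsto \max(t,0)$ is $1$-Lipschitz, so that $|\max(x,0) - \max(x+c,0)| \le |c|$, giving the same bound. Either way the computation is immediate and there is no hard step; the statement is essentially a sanity check that the exponential noise used throughout the paper is admissible in the Lipschitz mechanism.
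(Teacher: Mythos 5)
Your proof is correct and is essentially the same direct computation as the paper's, just stated more transparently: the paper derives $\exp(c) = \frac{1-F(x)}{1-F(x+c)}$ through a short chain of manipulations, whereas you simply observe $\log(1-F(x)) = -x$ and subtract. Your second paragraph, handling the convention $F(x)=0$ for $x<0$ via the $1$-Lipschitzness of $t\mapsto\max(t,0)$, actually addresses a small gap the paper leaves implicit — the overview in Appendix~\ref{sec:supplipschitz} does define the exponential CDF with that truncation, while this theorem statement silently drops it — so your remark is a genuine (if minor) improvement in rigor.
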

\begin{proof}
\begin{align*}
\exp(c) &= {\exp(-x-(-x-c) )}  \\
\exp(c) &= \frac{\exp(-x)}{\exp(-x-c)}  \\
\exp(c) &= \frac{1-(1-\exp(-x))}{1-(1-\exp(-x-c))}  \\
\exp(c) &= \frac{1-F(x)}{1-F(x+c)}  \\
\end{align*}
From the last equality we obtain: 
$$|\ln(1-F(x))-\ln(1-F(x+c))|\leq c.$$
\end{proof}

The Lipschitz condition follows for $c \in \mathbb{R}$ if it is met for $c \ge 0$:

\begin{lemma} \label{lem:lipschitzmirror}
Let $F(x)$ be a strictly increasing function.

If $\frac{1-F(x)}{1-F(x+c)} \le \exp(c)$ for $c \ge 0$, then for $x,c \in \mathbb{R}$ follows:

$$| \log(1-F(x))-\log(1-F(x+c))| \le \REVISE{c}{|c|}$$
\end{lemma}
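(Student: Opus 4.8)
The plan is to deduce the symmetric bound for an arbitrary $c \in \mathbb{R}$ from the one-sided hypothesis by exploiting the monotonicity of $x \mapsto \log(1-F(x))$. I would write $g(x) := \log(1-F(x))$, which is well-defined wherever $1-F(x) > 0$ (the implicit standing assumption for the distribution functions under consideration); since $F$ is strictly increasing, $1-F$ is strictly decreasing and hence $g$ is strictly decreasing there.

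First I would put the hypothesis into additive form: taking logarithms of $\frac{1-F(x)}{1-F(x+c)} \le \exp(c)$ gives $g(x) - g(x+c) \le c$ for every $c \ge 0$. Combining this with the lower bound $g(x) - g(x+c) \ge 0$, which holds for $c \ge 0$ precisely because $g$ is decreasing, yields $0 \le g(x) - g(x+c) \le c$, i.e. $|g(x) - g(x+c)| \le c = |c|$ whenever $c \ge 0$.

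Then I would handle $c < 0$ by a change of variables: set $d := -c > 0$ and $y := x + c$, so that $x = y + d$. Applying the already-established nonnegative-increment case at the point $y$ with increment $d$ gives $|g(y) - g(y+d)| \le d$, and since $g(y) - g(y+d) = g(x+c) - g(x)$ and $d = |c|$, this is exactly $|g(x) - g(x+c)| \le |c|$. Together with the previous paragraph this covers all $c \in \mathbb{R}$.

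There is essentially no hard step here: the only points requiring care are that every quantity $1-F(\cdot)$ appearing inside a logarithm stays strictly positive, and that the sign bookkeeping in the $c<0$ reduction is done correctly. The strict monotonicity of $F$ does all the real work, since it is what pins $g(x)-g(x+c)$ between $0$ and $c$ and thereby upgrades the one-sided hypothesis to a two-sided Lipschitz estimate.
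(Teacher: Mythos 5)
Your proof is correct and follows essentially the same route as the paper's: handle $c\ge 0$ by combining the hypothesis with the monotonicity of $1-F$ (you phrase this additively via $g=\log(1-F)$, the paper multiplicatively via ratios), then reduce $c<0$ to the nonnegative case by the substitution $c'=-c$, $x'=x+c$. The only cosmetic difference is that you carry out the argument in log-space throughout, which slightly streamlines the bookkeeping.
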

\begin{proof}

\textbf{case $c \ge 0$:} As $F(x)$ is strictly increasing it follows that $\exp(-c) \le 1 \le \frac{1-F(x)}{1-F(x+c)} $. Also, from the statement we know that 
$\frac{1-F(x)}{1-F(x+c)} \le \exp(c)$ for $c\geq 0$. Thus, 
$\exp(c)\leq \frac{1-F(x)}{1-F(x+c)}\leq \exp(c)$, which implies 
$$| \log(1-F(x))-\log(1-F(x+c))| \le \REVISE{c}{|c|}.$$

\textbf{case $c < 0$:} Let $c' = -c \ge 0$ and $x' = x+c$.

\begin{align*}
1 \le \frac{1-F(x')}{1-F(x'+c')} & \le \exp(c') \\
1 \le \frac{1-F(x+c)}{1-F(x+c-c)} & \le \exp(-c) \\
1 \le \frac{1-F(x+c)}{1-F(x)} & \le \exp(-c) \\
\exp(c) \le \frac{1-F(x)}{1-F(x+c)} & \le 1 \le \exp(-c) \\
\end{align*}
from which we obtain 
$$c\le \ln(1-F(x))-\ln(1-F(x+c)) \le -c$$
and $| \log(1-F(x))-\log(1-F(x+c))| \le \REVISE{c}{|c|}.$

\end{proof}

The Lipschitz mechanism with $k = 1$ instantiates the Permute-And-Flip mechanism \cite{mckenna2020permute}  when adding exponentially distributed noise (confirming the results of \cite{ding2021permute}):

\begin{theorem}[Permute-and-Flip via Exponential Noise] \label{lem:exponentialpf}
If $F^{-1}(p) = -\ln(1-p)$ then $Pr[Y = y]$ matches the Permute-and-Flip mechanism.
\end{theorem}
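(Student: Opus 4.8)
The plan is to show that the two mechanisms induce the \emph{same} probability mass function on $\mathbb{Y}$, by reducing each of them to one and the same integral; this is the calculation underlying the equivalence of \cite{ding2021permute}. First I would normalize notation: put $c_y = \frac{\eps}{2\Delta_{\loss}}\,\bigl(-\loss(y\mid\hat x)\bigr)$, $m = |\mathbb{Y}|$, and $c^* = \max_{y}c_y$, so that with $\kappa = 1$ the Lipschitz mechanism outputs $Y = \argmax_{y}\{\,c_y + F^{-1}(U_y)\,\}$. Since $F^{-1}(p) = -\ln(1-p)$ gives $1-F(x) = e^{-x}$ for $x\ge 0$ (and $1-F(x)=1$ for $x<0$), the noise terms $E_y := F^{-1}(U_y)$ are i.i.d.\ standard exponentials with $\Pr[E_y \le s] = (1-e^{-s})$ for $s\ge 0$. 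On the other side, the permute-and-flip mechanism of \cite{mckenna2020permute}, run on utilities $u(y) = -\loss(y\mid\hat x)$ with sensitivity $\Delta_{\loss}$, scans the items in uniformly random order and, upon reaching item $i$, accepts it with probability $p_i = \exp\!\bigl(\tfrac{\eps}{2\Delta_{\loss}}(u(i)-u^*)\bigr) = e^{c_i - c^*}$ (note $p_i = 1$ exactly when $c_i = c^*$, so the process always terminates).

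Second, I would compute each law in closed form. For the Lipschitz side, condition on the value $w = c_j + E_j$ of the winning noisy score: the exponential density contributes $e^{c_j}e^{-w}\,dw$ on $w\ge c_j$, each competitor contributes $\Pr[E_i < w-c_i] = (1-e^{c_i-w})$ on $w\ge c_i$, and the product vanishes unless $w\ge c_i$ for all $i\ne j$, which together with $w\ge c_j$ forces $w\ge c^*$; hence
\[
\Pr[Y = j] \;=\; e^{c_j}\int_{c^*}^{\infty} e^{-w}\,\prod_{i\ne j}\bigl(1-e^{c_i-w}\bigr)\,dw.
\]
For the permute-and-flip side, conditioning on the random order, ``output $j$'' splits over the set $S\subseteq\mathbb{Y}\setminus\{j\}$ of items rejected before $j$; using $\Pr[\text{exactly }S\text{ precedes }j] = \frac{|S|!\,(m-1-|S|)!}{m!} = \int_0^1 t^{|S|}(1-t)^{m-1-|S|}\,dt$ together with the binomial-theorem collapse $\sum_{S}\prod_{i\in S}t(1-p_i)\prod_{i\notin S\cup\{j\}}(1-t) = \prod_{i\ne j}(1-tp_i)$, this becomes
\[
\Pr[\text{P\&F outputs }j] \;=\; p_j\int_0^1 \prod_{i\ne j}\bigl(1-t\,p_i\bigr)\,dt.
\]
Finally, the substitution $t = e^{-(w-c^*)}$ (so $t\in(0,1]\leftrightarrow w\in[c^*,\infty)$ and $dt = -e^{-(w-c^*)}\,dw$) together with $p_i = e^{c_i-c^*}$ turns the second integral into the first one, while $p_j = e^{c_j-c^*}$ supplies the leftover prefactor $e^{c_j}$; hence $\Pr[Y=j] = \Pr[\text{P\&F outputs }j]$ for every $j$.

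The step I expect to be the main obstacle is the permute-and-flip combinatorics: establishing the Beta-function representation $\frac{|S|!\,(m-1-|S|)!}{m!} = \int_0^1 t^{|S|}(1-t)^{m-1-|S|}\,dt$ for the event ``$S$ precedes $j$'' and carrying out the binomial collapse cleanly, and — more delicately — checking that the identity degrades gracefully when several items attain $c^*$. On the noise side uniqueness of the $\argmax$ is automatic, whereas on the permute-and-flip side the mass must be split among the maximizers; this comes out of the same formula because each such competitor $i$ contributes the factor $1-tp_i = 1-t$ to $\prod_{i\ne j}(1-tp_i)$, matching the factor $1-e^{c^*-w}$ on the noise side. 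The remaining work (the two conditioning computations and the change of variables) is routine once the integrands are in place; one should also note $1-tp_i\ge 0$ throughout, since $p_i\le 1$ and $t\le 1$, so all integrals are of nonnegative integrands.
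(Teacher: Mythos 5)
Your proposal is correct, but it takes a genuinely different route from the paper. The paper argues by a rejection-sampling coupling: it rewrites $\argmax_y\{q_y - \ln(1-U_y)\}$ as $\argmin_y\{(1-U_y)e^{-q_y}\}$, interprets $R_y := (1-U_y)e^{-q_y}$ as a uniform draw from $[0,e^{-q_y}]$, observes that any $R_y > e^{-q^*}$ (with $q^* = \max_y q_y$) can never be the minimum — a ``rejection'' with probability exactly $1-e^{q_y - q^*}$, matching the P\&F flip — and that, conditional on acceptance, the accepted $R_y$'s are i.i.d.\ uniform on $[0,e^{-q^*}]$, hence exchangeable, so the minimizer is a uniformly random accepted item. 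Picking a uniformly random accepted item is then identified with scanning in random order and taking the first acceptance, which is literally P\&F. You instead derive closed-form expressions for both laws — $\Pr[Y=j] = e^{c_j}\int_{c^*}^\infty e^{-w}\prod_{i\ne j}(1-e^{c_i-w})\,dw$ on the noise side and, via the Beta-integral $\frac{|S|!(m-1-|S|)!}{m!}=\int_0^1 t^{|S|}(1-t)^{m-1-|S|}dt$ and the binomial collapse, $p_j\int_0^1\prod_{i\ne j}(1-tp_i)\,dt$ on the P\&F side — and match them by the substitution $t=e^{c^*-w}$. Both arguments are sound. The paper's coupling is shorter and more revealing (it exhibits a single sample-path mechanism that realizes both processes simultaneously), while yours is more mechanical but delivers the explicit common PMF and makes no appeal to the exchangeability step, which the paper asserts rather tersely; your Beta-integral encoding of the random order is a self-contained replacement for that step. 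You also handle the case of ties at $c^*$ explicitly, which the paper leaves implicit.
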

\begin{proof}

Let $q_y = -\frac{\eps}{2\Delta} \loss(y)$ for a $\loss$ function with sensitivity $\Delta$.

\begin{align*}
Y = &\argmax_{\y \in \mathbb{Y}} \{ q_\y-\ln(1-U_\y) \}\\
 = &\argmax_{\y \in \mathbb{Y}}  \{ \exp(q_\y-\ln(1-U_\y)) \}\\
 = &\argmax_{\y \in \mathbb{Y}} \{ \frac{\exp(q_\y)}{(1-U_\y))} \}\\
 = &\argmin_{\y \in \mathbb{Y}} \{ {(1-U_\y)}\exp(-q_\y) \}\\
\end{align*}

This means for each $\y$ a (uniform) random number $R_\y$ between $0$ and $\exp(-q_\y)$ is drawn and the smallest is selected. Let $q_{*} = \max_{\y \in \mathbb{Y}} q_\y$. Then any $R_\y > \exp(-q_{*})$ will be certainly rejected and all non-rejected ones have the same probability of being the smallest. Let $N$ be the number of accepted elements, such that $N = \sum_{z \in \mathcal{Y} \setminus \{y\}} B_z$ is a Poisson Binomial random variable where each summation term $B_z$ is a Bernoulli random variable with success probability $1-Pr[Y \neq z]$. Then $Pr[Y = y | N] = \frac{1-Pr[Y \neq y]}{N+1}$. The rejection probability $Pr[Y \neq y]$ is then:

\begin{align*}
Pr[ Y \neq y ) =& Pr[R_\y > \exp(-q_{*})] \\
 =& Pr[ (1-U_\y)\exp(-q_\y) > \exp(-q_{*})] \\
 =& Pr[ (1-U_\y) > \exp(q_\y-q_{*})] \\
 =& Pr[ U_\y < 1-\exp(q_\y-q_{*})] \\
 =&  1-\exp( q_\y-q_{*}) \\
\end{align*}

Selecting a random non-rejected item is equivalent to selecting the first non-rejected item if items are in a random order. This matches Permute-and-Flip, which goes through items in random order, rejects each item $y$ with probability  $1-\exp( q_\y-q_{*})$ and then selects the first item that does not get rejected.

\end{proof}

\subsection{Gumbel Lipschitz Mechanism: Exponential Mechanism}

The Lipschitz mechanism is $\eps$-DP when adding Gumbel distributed noise terms $F^{-1}(p) = -\log(1-p)$:

\begin{theorem}[Lipschitz condition: Gumbel distribution] \label{lipschitzgumbel}
Let $F(x) = \exp(-\exp(-x))$ and $x,c \in \mathbb{R}$. Then:

$$| \log(1-F(x))-\log(1-F(x+c))| \le \REVISE{c}{|c|}$$
\end{theorem}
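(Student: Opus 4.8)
The plan is to use Lemma~\ref{lem:lipschitzmirror}, which says it is enough to verify the one-sided bound $\frac{1-F(x)}{1-F(x+c)} \le \exp(c)$ for every $x \in \mathbb{R}$ and every $c \ge 0$. Substituting $F(x) = \exp(-\exp(-x))$, writing $a = \exp(-x)$ and noting $\exp(-(x+c)) = a\exp(-c)$, the target becomes $\frac{1-\exp(-a)}{1-\exp(-a\exp(-c))} \le \exp(c)$. Putting $b = a\exp(-c)$, so that $0 < b \le a$ and $\exp(c) = a/b$, this rearranges to $\frac{1-\exp(-a)}{a} \le \frac{1-\exp(-b)}{b}$; hence it suffices to show that $t \mapsto \frac{1-\exp(-t)}{t}$ is non-increasing on $(0,\infty)$.

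First I would establish that monotonicity by a derivative computation: $\frac{d}{dt}\frac{1-\exp(-t)}{t} = \frac{(t+1)\exp(-t)-1}{t^2}$, which is $\le 0$ exactly when $(t+1)\exp(-t) \le 1$, i.e.\ when $1+t \le \exp(t)$. So the whole argument reduces to the elementary inequality $\exp(t) \ge 1+t$ for all real $t$ (immediate from convexity of $\exp$, the tangent line at $0$). Plugging this in gives the monotonicity, hence the one-sided ratio bound, hence—via Lemma~\ref{lem:lipschitzmirror}, which handles $c < 0$ by a reflection—the full two-sided inequality $|\log(1-F(x))-\log(1-F(x+c))| \le |c|$.

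As a cross-check I would note an alternative direct route: bound $|g'(x)|$ for $g(x) = \log(1-F(x))$. With $u = \exp(-x)$ one gets $g'(x) = -\frac{u\exp(-u)}{1-\exp(-u)}$, and $|g'(x)| \le 1$ is again equivalent to $(u+1)\exp(-u) \le 1$, i.e.\ to $\exp(u) \ge 1+u$; integrating $|g'|\le 1$ over $[x,x+c]$ then yields the claim. There is no genuine obstacle here: the only nontrivial ingredient is the standard exponential inequality, and the remaining work is substitution bookkeeping plus the appeal to Lemma~\ref{lem:lipschitzmirror}. The one point to be careful about is the range of the substituted variable—$u = \exp(-x)$ sweeps out all of $(0,\infty)$ as $x$ ranges over $\mathbb{R}$—so the inequality must (and does) hold for every positive argument, not merely near $0$; and $1-F(x+c) > 0$ for all finite $c$ since $F < 1$ strictly, so no division-by-zero issue arises.
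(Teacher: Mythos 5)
Your proof is correct, and it takes a genuinely different route from the paper. Both use Lemma~\ref{lem:lipschitzmirror} to reduce to the one-sided bound $\frac{1-F(x)}{1-F(x+c)} \le \exp(c)$ for $c \ge 0$, but from there they diverge. The paper sets $\alpha = 1-F(x)$ and $k = \exp(c)$ and invokes Lemma~\ref{lem:bonferroni} --- the Šidák-vs.-Bonferroni comparison $\frac{1-(1-\alpha)^{1/k}}{\alpha/k} \ge 1$ --- which in turn rests on Lemma~\ref{lem:explim} (Bernoulli's inequality and $e^x \ge 1+x$). You instead substitute $a = e^{-x}$, $b = a e^{-c}$ to rewrite the target as the monotonicity statement $\frac{1-e^{-a}}{a} \le \frac{1-e^{-b}}{b}$ for $0 < b \le a$, and establish that $t \mapsto \frac{1-e^{-t}}{t}$ is non-increasing by a derivative computation reducing to $e^t \ge 1+t$. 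Both proofs thus bottom out in the same elementary exponential inequality, but yours gets there directly, while the paper's detour through the Bonferroni lemma has the advantage of reusing a lemma it needs anyway for the utility-loss analysis of $\PEELING$ (Lemma~\ref{lem:peelutil}). Your bookkeeping is all sound: the derivative $\frac{d}{dt}\frac{1-e^{-t}}{t} = \frac{(t+1)e^{-t}-1}{t^2}$ is correct, the range observation that $u = e^{-x}$ sweeps $(0,\infty)$ is the right thing to check, and the alternative route via bounding $|g'(x)| \le 1$ for $g = \log(1-F)$ is a legitimate and arguably cleaner way to phrase the same computation.
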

\begin{proof}

Due to Lemma~\ref{lem:lipschitzmirror} one can presume without loss of generality that $c \ge 0$.

Let $\alpha = 1-F(x) \in [0,1]$ and $k = \exp(c) \ge 1$, then from Lemma~\ref{lem:bonferroni} follows that:

\begin{align*}
\frac{1-(1-\alpha)^{1/k}}{\alpha/k} & \ge 1 \\
\frac{\alpha/k}{1-(1-\alpha)^{1/k}} & \le 1 \\
\frac{\alpha}{1-(1-\alpha)^{1/k}} & \le k \\
\frac{1-F(x)}{1-F(x)^{\exp(-c)}} & \le  \exp(c) \\
\frac{1-F(x)}{1-\exp(-\exp(-x))^{\exp(-c)}} & \le  \exp(c) \\
\frac{1-F(x)}{1-\exp(-\exp(-x) \exp(-c))} & \le  \exp(c) \\
\frac{1-F(x)}{1-\exp(-\exp(-x-c))} & \le  \exp(c) \\
\frac{1-F(x)}{1-F(x+c)} & \le  \exp(c) \\
\end{align*}

The claim follows due to Lemma~\ref{lem:lipschitzmirror}.

\end{proof}

The Lipschitz mechanism with $\kappa = k = 1$ instantiates the Exponential Mechanism \cite{mcsherry2007mechanism} when adding the Gumbel distributed noise terms $F^{-1}(p) = -\log(-\log(p))$ (for $\kappa = k > 1$ it matches the Peeling technique using the Exponential Mechanism \cite{durfee2019practical}):

\begin{theorem}[Exponential Mechanism via Gumbel trick]  \label{lem:gumbelem}
If $F^{-1}(p) = -\ln(-\ln(p))$ for the Lipschitz mechanism with $k = 1$, then $Pr[Y = y] \propto \exp( \frac{-\loss(\y)}{\eps/2\Delta} )$.
\end{theorem}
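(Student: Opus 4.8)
The plan is to carry out the classical ``Gumbel-max trick'' computation. With $\kappa = k = 1$ the Lipschitz mechanism outputs $Y = \argmax_{\y \in \mathbb{Y}}\{ q_\y + G_\y \}$, where $q_\y = \loss(\y \mid \hat{x})\,/\,(-2\Delta/\eps)$ is the rescaled negated loss appearing in Definition~\ref{def:lipschitz} (writing $\Delta$ for $\Delta_{\loss}$), and $G_\y = F^{-1}(U_\y) = -\ln(-\ln(U_\y))$ for i.i.d.\ $U_\y \sim Unif(0,1)$. By inverse transform sampling each $G_\y$ is a standard Gumbel variable, with CDF $F(t) = \exp(-\exp(-t))$ and density $f(t) = \exp(-t-\exp(-t))$. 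Since $\mathbb{Y}$ is finite and the $G_\y$ are continuous, the $\argmax$ is almost surely unique, so $Pr[Y = \y]$ is well defined and the arbitrary tie-breaking is irrelevant.

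First I would condition on the value of the winning noise term and use independence of the $G_z$:
$$Pr[Y = \y \mid G_\y = g] = \prod_{z \neq \y} Pr[\, G_z \le g + q_\y - q_z \,] = \prod_{z \neq \y} \exp\bigl(-\exp(-(g+q_\y-q_z))\bigr) = \exp\bigl(-\exp(-g)\exp(-q_\y) \textstyle\sum_{z \neq \y}\exp(q_z)\bigr).$$
Then, writing $S = \sum_{z \in \mathbb{Y}} \exp(q_z)$ and noting $\exp(-q_\y)\sum_{z \neq \y}\exp(q_z) = \exp(-q_\y)S - 1$, integrating against $f$ collapses nicely, because the $\exp(-\exp(-g))$ factor in $f$ exactly supplies the ``missing'' $z=\y$ term:
$$Pr[Y = \y] = \int_{-\infty}^{\infty} \exp(-g-\exp(-g))\,\exp\bigl(-\exp(-g)(\exp(-q_\y)S - 1)\bigr)\,dg = \int_{-\infty}^{\infty} \exp(-g)\,\exp\bigl(-\exp(-g)\exp(-q_\y)S\bigr)\,dg.$$
The substitution $u = \exp(-g)$ turns this into $\int_0^{\infty} \exp(-u\,\exp(-q_\y)S)\,du = \exp(q_\y)/S$.

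Hence $Pr[Y = \y] = \exp(q_\y) \,/\, \sum_{z \in \mathbb{Y}}\exp(q_z) \propto \exp(q_\y) = \exp\bigl(\loss(\y\mid\hat x)/(-2\Delta/\eps)\bigr)$, which is precisely the exponential mechanism's distribution for utility $-\loss$ with sensitivity $\Delta$, as claimed. For the case $\kappa = k > 1$ I would note that reporting the indices of the $k$ largest noisy values is equivalent to repeatedly taking the $\argmax$ above, deleting the winner, and recursing on the remaining items; by the same integral applied recursively (equivalently, by memorylessness of the Gumbel maxima), conditioned on the set already selected the next draw again follows the exponential-mechanism law over the remaining items, which is exactly the peeling technique of \cite{durfee2019practical}.

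The computation is essentially routine; the only point needing care is the bookkeeping in the integration step — recognizing that the extra $\exp(-\exp(-g))$ coming from the winner's Gumbel density combines with the product over $z \neq \y$ into a single clean exponential in $\exp(-g)$ that integrates in closed form after $u = \exp(-g)$. The $\kappa>1$ claim is slightly more delicate, since it concerns the joint law of the order statistics of $\{q_z + G_z\}_{z}$; the cleanest justification is that for Gumbel noise the ``top-$k$ by noisy value'' law factorizes into successive without-replacement exponential-mechanism draws, which one can either cite or verify by the recursion just described.
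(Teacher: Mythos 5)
Your proof is correct. It computes $Pr[Y=y]$ by conditioning on the value $g$ of the winner's Gumbel noise, multiplying the Gumbel CDFs of all competitors, and performing the closed-form integral after the substitution $u = \exp(-g)$ — the textbook Gumbel-max argument.

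The paper takes a slightly different route: it first rewrites the $\argmax$ as
$\argmin_{\y}\{-\ln(U_\y)/\lambda_\y\}$ with $\lambda_\y = \exp(-\eps\,\loss(\y)/(2\Delta))$, recognizing each term as an exponential random variable with rate $\lambda_\y$, and then invokes a separate ``exponential clocks'' lemma (Lemma~\ref{lem:expclocks}) stating that the minimum of independent exponentials lands on $y$ with probability $\lambda_\y/\sum_z\lambda_z$. Both end in essentially the same integral (the paper's is parametrized by $x = \exp(-g)/\lambda_\y$ rather than by $g$), so the content is equivalent. What the paper's reformulation buys is modularity and unification: the argmin-over-exponentials step mirrors the argmin rewrite used in its Permute-and-Flip proof (Theorem~\ref{lem:exponentialpf}), and the clock lemma is a clean, reusable statement. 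Your direct conditioning is arguably shorter and more self-contained for the single Gumbel case; it also requires you to observe that the winner's own Gumbel density $\exp(-g-\exp(-g))$ ``fills in'' the missing $z=\y$ factor, which is exactly the bookkeeping point you correctly flag as the one spot needing care. Your parenthetical remark about $\kappa>1$ recovering \PEELING{} via successive without-replacement exponential-mechanism draws is likewise correct, though not required by the statement, which is restricted to $k=1$.
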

\begin{proof}

\REVISE{

Let $q_y =  \frac{-\loss(\y)}{\eps/2\Delta}$.
\begin{align*}
Y = &\argmax_{\y \in \mathbb{Y}} \{ \REVISE{q_y}{\log(\lambda_y)}-\ln( -\ln(U_\y) ) \} \\
=&\argmax_{\y \in \mathbb{Y}} \{ \exp( \REVISE{q_y}{\log(\lambda_y)}-\ln( -\ln(U_\y) ) ) \} \\
=&\argmax_{\y \in \mathbb{Y}} \{ \exp( \REVISE{q_y}{\log(\lambda_y)} ) / -\ln(U_\y) \} \\
=&\argmin_{\y \in \mathbb{Y}} \{ -\ln(U_\y) / \exp( \REVISE{q_y}{\log(\lambda_y)} )  \} \\
=&\argmin_{\y \in \mathbb{Y}} \{ -\ln(U_\y) / \lambda_\y  \} \\
\end{align*}
}{
Let $\mathbb{Y}$ be the selection domain and $\lambda_y = \exp(\frac{-\loss(\y)}{\eps/2\Delta})$ for any $y \in \mathbb{Y}$. Then:
\begin{align*}
Y = &\argmax_{\y \in \mathbb{Y}} \{ \frac{-\loss(\y)}{\eps/2\Delta}-\ln( -\ln(U_\y) ) \} \\
= &\argmax_{\y \in \mathbb{Y}} \{ {\log(\lambda_y)}-\ln( -\ln(U_\y) ) \} \\
=&\argmax_{\y \in \mathbb{Y}} \{ \exp( {\log(\lambda_y)}-\ln( -\ln(U_\y) ) ) \} \\
=&\argmax_{\y \in \mathbb{Y}} \{ \exp( {\log(\lambda_y)} ) / -\ln(U_\y) \} \\
=&\argmin_{\y \in \mathbb{Y}} \{ -\ln(U_\y) / \exp( {\log(\lambda_y)} )  \} \\
=&\argmin_{\y \in \mathbb{Y}} \{ -\ln(U_\y) / \lambda_\y  \} \\
\end{align*}
}

\REVISE{
It then follows that $Pr[Y = y] \propto \exp(q_y)$ as each $\argmin$ candidate is an exponential random variable with rate $\lambda_\y = \exp( q_\y )$. A proof for this property can be found in the following Lemma~\ref{lem:expclocks}.}{

Each $\argmin$ term $-\ln(U_\y) / \lambda_\y$ is then an exponential random variable with rate $\lambda_\y = \exp(\frac{-\loss(\y)}{\eps/2\Delta})$ and $Pr[Y = y] \propto \lambda_y$. A proof for this property can be found in the following Lemma~\ref{lem:expclocks}.}{

}
\end{proof}

For independent events with exponentially distributed time delays, each event's probability of preceding the others is proportional to their rate. This well-known property has for instance been used to prove the Gumbel Trick \cite{balog2017lost}:

\begin{lemma}[Exponential clocks] \label{lem:expclocks}
Let $U_\y$ be i.i.d. $U_\y \sim Unif(0,1)$ and $\mathbb{Y}$ be a finite set. Let $Y =\argmin_{\y \in \mathbb{Y}} \{ -\ln(U_\y) / \lambda_\y  \}$ supported over $\mathbb{Y}$. Then $Pr[Y = y] \propto \lambda_\y$.
\end{lemma}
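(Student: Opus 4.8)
The plan is to compute the probability $Pr[Y=y]$ directly by integrating over the value of the minimizing term. First I would recall the distributional fact that if $U \sim Unif(0,1)$, then $-\ln(U)/\lambda$ is an exponential random variable with rate $\lambda$, i.e. with CDF $1-e^{-\lambda t}$ for $t\ge 0$ and density $\lambda e^{-\lambda t}$. So the problem reduces to the standard ``competing exponentials'' computation: given independent $E_y \sim \mathrm{Exp}(\lambda_y)$ for $y\in\mathbb{Y}$, show $Pr[\argmin_y E_y = y] = \lambda_y / \sum_{z\in\mathbb{Y}} \lambda_z$, which is manifestly $\propto \lambda_y$.

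The key steps, in order: (1) translate $-\ln(U_y)/\lambda_y$ into $E_y\sim\mathrm{Exp}(\lambda_y)$ and note mutual independence is preserved since the $U_y$ are i.i.d.; (2) write $Pr[Y=y]$ as the probability that $E_y < E_z$ for all $z\ne y$, and express this as
$$Pr[Y=y] = \int_0^\infty \lambda_y e^{-\lambda_y t}\!\!\prod_{z\ne y}\! Pr[E_z > t]\,dt = \int_0^\infty \lambda_y e^{-\lambda_y t}\!\!\prod_{z\ne y}\! e^{-\lambda_z t}\,dt;$$
(3) combine the exponentials into $\exp(-t\sum_{z\in\mathbb{Y}}\lambda_z)$ and integrate to get $\lambda_y / \sum_{z\in\mathbb{Y}}\lambda_z$; (4) conclude $Pr[Y=y]\propto\lambda_y$, noting ties occur with probability zero since the joint law of the $E_y$ is absolutely continuous, so $\argmin$ is a.s. well-defined and the probabilities sum to one.

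I do not expect a genuine obstacle here — this is a textbook computation — but the one point worth stating carefully is the interchange of the product and the integral (justified because everything is nonnegative, or by Tonelli) and the a.s. uniqueness of the minimizer so that the events $\{Y=y\}$ partition the sample space up to a null set; that is what makes ``$\propto$'' sharpen to exact normalization by $\sum_z\lambda_z$. Finiteness of $\mathbb{Y}$ is used both to make the product finite and to ensure $\sum_z\lambda_z<\infty$.
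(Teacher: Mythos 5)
Your proposal follows the same route as the paper's proof: recognize $-\ln(U_y)/\lambda_y$ as $\mathrm{Exp}(\lambda_y)$, condition on the minimizer's value, and integrate $\lambda_y e^{-\lambda_y t}\prod_{z\ne y}e^{-\lambda_z t}$ over $t\in[0,\infty)$ to get $\lambda_y/\sum_z\lambda_z$. The added remarks on almost-sure uniqueness of the minimizer and Tonelli are correct housekeeping but not a different argument.
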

\begin{proof}
Let $E_\y =-\ln(U_\y) / \lambda_\y$. 

As this is how one would generate an exponential random variable with rate $\lambda_y$, it follows that the density $Pr[E_y = x]= \lambda_y \exp(-x \lambda_y)$, cumulative $Pr[E_y \le x] = 1-\exp(-\lambda_y x)$ and complementary cumulative $Pr[E_y > x] = \exp(-\lambda_y x)$. The probability from the claim can then be written as:
\begin{align*}
 & Pr[Y = y] = Pr[E_y = \min_{z \in \mathcal{Y}} E_z] \\
= & \int_{0}^\infty Pr[E_y = x] \prod_{z \in \mathcal{Y} \setminus \{y\}} Pr[E_z > x] \, dx \\
= & \int_{0}^\infty \lambda_y \exp(-x\lambda_y) \prod_{z \in \mathcal{Y} \setminus \{y\}}  \exp(-x\lambda_z) \, dx\\
= &  \int_{0}^\infty \lambda_y \prod_{z \in \mathcal{Y}}  \exp(-x\lambda_z) \, dx\\
= &  \lambda_y \int_{0}^\infty \exp(-x \sum_{z \in \mathcal{Y}} \lambda_z ) \, dx\\
= & \lambda_y / \sum_{z \in \mathcal{Y}} \lambda_z
\end{align*}

The last equality follows from $\int a\exp(-bx) dx = -\frac{a}{b} \exp(-bx) + c$ for any $a,b,c \in \mathbb{R}$ with $b > 0$. Let $a = \lambda_y$ and $b = \sum_{z \in \mathcal{Y}} \lambda_z$. Then $\int_0^\infty a\exp(-bx) dx = (\lim_{x \rightarrow \infty} -\frac{a}{b} \exp(-bx) + c)-  (-\frac{a}{b} \exp( -b  \cdot 0) + c) = c-(-\frac{a}{b} +c) = \frac{a}{b} = \lambda_y / \sum_{z \in \mathcal{Y}} \lambda_z$.  Hence, $Pr[Y = y] \propto \lambda_\y$.

\end{proof}

\subsection{Laplace Lipschitz Mechanism: Report Noisy Max Mechanism}

\begin{theorem} \label{lem:laplacernm}
Trivially, the Lipschitz mechanism with $k = 1$ instantiates the Report Noisy Max mechanism \cite{dwork2014algorithmic} and the Oneshot Laplace Mechanism \cite{qiao2021oneshot} when adding Laplace distributed noise.
\end{theorem}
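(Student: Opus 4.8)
The plan is to unwind Definition~\ref{def:lipschitz} with $\kappa = 1$ (resp.\ $\kappa = k$) and observe that, once the noise distribution is fixed to the standard Laplace distribution, the mechanism is syntactically identical to Report Noisy Max (resp.\ $\ONESHOT_{\text{Laplace}}$), up to the harmless rescaling that converts the loss sensitivity $\Delta_{\loss}$ and the privacy parameter $\eps$ into a Laplace scale parameter. Since the statement is flagged as trivial, the whole argument is bookkeeping; there is no genuine obstacle.

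First I would recall from Appendix~\ref{sec:supplipschitz} that the standard Laplace distribution has inverse c.d.f.\ $F^{-1}(p) = \sgn(p - \tfrac12)\ln(1 - |2p-1|)$ and that it satisfies the mandated $1$-Lipschitz condition (Theorem~\ref{lem:lipschitzlaplace}), so the Lipschitz mechanism is well-defined and $\eps$-DP for this choice. By inverse transform sampling, if $U_y \sim \mathrm{Unif}(0,1)$ then $F^{-1}(U_y)$ is a standard Laplace random variable, and the family $\{F^{-1}(U_y)\}_{y \in \mathbb{Y}}$ is i.i.d.\ standard Laplace.

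Next I would substitute this into the mechanism's output rule. With $\kappa = 1$ the output is $\argmax_{y\in\mathbb{Y}} \{ \loss(y\mid\hat x)/(-2\Delta_{\loss}/\eps) + F^{-1}(U_y) \}$; multiplying every bracketed term by the positive constant $2\Delta_{\loss}/\eps$ leaves the $\argmax$ unchanged and turns each term into $-\loss(y\mid\hat x) + L_y$ with $L_y \sim \mathrm{Laplace}(0, 2\Delta_{\loss}/\eps)$ i.i.d. Treating $-\loss(\cdot\mid\hat x)$ as the score (equivalently, starting from a score function with sensitivity $\Delta$ under the shifting convention giving $\Delta_{\loss} = \tfrac12$, cf.\ Theorem~\ref{thm:asymmetric}), this is precisely Report Noisy Max: perturb each score with independent Laplace noise of the prescribed scale and return the index of the maximum. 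For the oneshot claim I would repeat the identical argument with $\kappa = k$: the output becomes $\kargmax{k}{y\in\mathbb{Y}} \{ -\loss(y\mid\hat x) + L_y \}$ with $L_y \sim \mathrm{Laplace}(0, 2k\Delta_{\loss}/\eps)$ i.i.d., which is exactly the $\ONESHOT_{\text{Laplace}}$ mechanism of \cite{qiao2021oneshot}, the factor $k$ in the scale being the usual cost of releasing $k$ indices simultaneously.

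The only care needed is to match the noise-scale parameterization and the sign/monotonicity conventions used in \cite{dwork2014algorithmic} and \cite{qiao2021oneshot}, and to note that ties in the $\argmax$ and in the top-$k$ are broken arbitrarily in both formulations, so the induced output distributions coincide exactly rather than merely up to a null event.
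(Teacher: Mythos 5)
Your argument is correct and is exactly the routine unpacking that the paper's word ``Trivially'' stands for: the paper itself prints no proof for Theorem~\ref{lem:laplacernm}, precisely because (unlike the Gumbel case in Theorem~\ref{lem:gumbelem} or the Exponential case in Theorem~\ref{lem:exponentialpf}) Report-Noisy-Max and $\ONESHOT_{\text{Laplace}}$ are already defined as additive-noise mechanisms, so there is no nontrivial equivalence to establish. Your computation (multiplying the bracket by the positive constant $2\kappa\Delta_{\loss}/\eps$, noting this preserves $\argmax$ and turns $F^{-1}(U_y)$ into $\mathrm{Laplace}(0, 2\kappa\Delta_{\loss}/\eps)$) and your caveat about matching scale conventions via the shifting trick of Theorem~\ref{thm:asymmetric} (giving $\Delta_{\loss}=\tfrac12$ for monotone scores, hence scale $1/\eps$) are both right and are the only points that need any care.
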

As the Lipschitz condition limits how fast a distribution function can change, satisfying the Lipschitz condition is inherited by doubled/mirrored distribution:

\begin{lemma} \label{lem:lipschitzdouble}
Let $F(x)$ be a strictly increasing function that satisfies the Lipschitz condition $|\log{(1-F(x))}-\log{(1-F(x+c))}| \le \REVISE{c}{|c|}$.

Let $F_{double}(x) = \begin{cases} \frac{F(-x)}{2} & x < 0 \\ \frac{1}{2}+\frac{F(x)}{2} & x \ge 0 \end{cases}$.

Then $F_{double}(x)$ also satisfies the Lipschitz condition

$|\log{(1-F_{double}(x))}-\log{(1-F_{double}(x+c))}| \le \REVISE{c}{|c|}$.

\end{lemma}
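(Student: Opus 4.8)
The plan is to reduce to the case $c\ge 0$ via Lemma~\ref{lem:lipschitzmirror} and then bound the survival ratio. Since $F$ is strictly increasing with $F(0)=0$ (the only regime in which $F_{double}$ is a genuine distribution function), $F_{double}$ is strictly increasing and continuous, so by Lemma~\ref{lem:lipschitzmirror} it suffices to show
\[\frac{1-F_{double}(x)}{1-F_{double}(x+c)}\le\exp(c)\qquad\text{for all }x\in\mathbb{R},\ c\ge 0.\]
I would first record the closed form of the survival function, namely $1-F_{double}(x)=\tfrac12\,(1-F(x))$ for $x\ge 0$ and $1-F_{double}(x)=\tfrac12\,(1+F(-x))$ for $x<0$, and then split into three cases according to the signs of $x$ and $x+c$ (note $x\le x+c$).

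If $0\le x\le x+c$, the factors $\tfrac12$ cancel and the ratio equals $\frac{1-F(x)}{1-F(x+c)}$, which is $\le\exp(c)$ directly from the Lipschitz hypothesis on $F$. If $x\le x+c<0$, I would set $u=-(x+c)\ge 0$, so $-x=u+c$, and the ratio becomes $\frac{1+F(u+c)}{1+F(u)}$; substituting $F(u+c)\le 1-\exp(-c)\,(1-F(u))$ (the hypothesis on $F$) reduces the claim to $2-\exp(c)-\exp(-c)\le(\exp(c)-\exp(-c))\,F(u)$, which holds because the left-hand side equals $-(\exp(c/2)-\exp(-c/2))^2\le 0$ while the right-hand side is $\ge 0$ for $c\ge 0$ and $F(u)\ge 0$.

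The remaining case $x<0\le x+c$ is the crux. Writing $a=-x>0$ and $b=x+c\ge 0$ (so $a+b=c$), the ratio is $\frac{1+F(a)}{1-F(b)}$, and I would bound it by factoring as $(1+F(a))\cdot\frac{1}{1-F(b)}$. For the second factor, the hypothesis applied with ``$x=0$, $c=b$'' together with $F(0)=0$ gives $\frac{1}{1-F(b)}=\frac{1-F(0)}{1-F(b)}\le\exp(b)$. For the first factor I would prove the auxiliary bound $1+F(a)\le\exp(a)$: with $g(a)=-\log(1-F(a))$ the hypothesis gives $g(a)-g(0)\le a$ while $g(0)=0$, hence $g(a)\le a$, so $1+F(a)=2-\exp(-g(a))\le 2-\exp(-a)\le\exp(a)$, the last step being $\exp(a)+\exp(-a)\ge 2$. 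Multiplying the two factors yields $\frac{1+F(a)}{1-F(b)}\le\exp(a)\exp(b)=\exp(c)$, completing all three cases; Lemma~\ref{lem:lipschitzmirror} then delivers the two-sided bound $|\log(1-F_{double}(x))-\log(1-F_{double}(x+c))|\le|c|$.

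I expect the straddling case to be the main obstacle, since it is the only one that does not reduce to the hypothesis by a change of variables and is where the two tails of $F_{double}$ genuinely mix. The load-bearing step there is the elementary inequality $1+F(a)\le\exp(a)$ — equivalently the convexity fact $\exp(a)+\exp(-a)\ge 2$ — and it is also the only place where $F(0)=0$ is used, which is consistent with the construction failing to produce a distribution function when $F(0)>0$.
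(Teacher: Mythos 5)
Your proof is correct and, for the hardest case, is more rigorous than the paper's own. Both arguments invoke Lemma~\ref{lem:lipschitzmirror} to reduce to $c\ge 0$ and split on the signs of $x$ and $x+c$, and in the two non-straddling cases the treatments amount to the same algebra (the paper routes through Lemma~\ref{lem:ratioclosertoone}, ``adding a positive constant to numerator and denominator''; you substitute the survival bound directly and reduce to $2-e^c-e^{-c}\le 0$). The substantive difference is the straddling case $x<0\le x+c$. The paper's treatment of it is a one-sentence appeal to monotonicity, asserting that $1-F_{double}(x)$ is ``strictly increasing'' in $x$ --- which is backwards, since it is a survival function and hence decreasing --- and concluding that the ratio is maximized at the boundary subcases $x=0$ or $x+c=0$, without establishing that the ratio is actually monotone (or even unimodal) on the interval. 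Your factorization $c=a+b$ with $a=-x>0$, $b=x+c\ge 0$, and the separate bounds $\frac{1}{1-F(b)}\le e^b$ (the Lipschitz hypothesis applied at zero) and $1+F(a)\le 2-e^{-a}\le e^a$ (Lipschitz at zero plus $e^a+e^{-a}\ge 2$) give a self-contained bound at every interior point rather than only the endpoints. You are also right to flag the role of $F(0)=0$: it is never stated as a hypothesis but is both what makes $F_{double}$ a bona fide distribution function (under the intended reading $F_{double}(x)=\frac{1-F(-x)}{2}$ for $x<0$, which you implicitly adopted and which the paper's displayed definition appears to garble by dropping a ``$1-$'') and exactly what the straddling bound requires --- if $F(0)>0$ the composite jumps at the origin and the claimed $1$-Lipschitz property fails.
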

\begin{proof}

Due to Lemma~\ref{lem:lipschitzmirror} one can presume without loss of generality that $c \ge 0$.

Case $x \ge 0$ (which implies $x+c \ge 0$) where $F_{double}(x) = \frac{1}{2}+\frac{F(x)}{2}$ :

\begin{align*}
\frac{ 1-F(x) }{ 1-F(x+c) } & \le \exp(c) \\
\frac{ \frac{1}{2}-\frac{F(x)}{2} }{ \frac{1}{2}-\frac{F(x+c)}{2} } &\le \exp(c)\\
\frac{ 1-(\frac{1}{2}+\frac{F(x)}{2}) }{ 1-(\frac{1}{2}+\frac{F(x+c)}{2}) } &\le \exp(c)\\
\frac{1-F_{double}(x)}{1-F_{double}(x+c)} &\le \exp(c) \\
\end{align*}
The latter implies 

$$\log(1-F_{double}(x))-\log(1-F_{double}(x+c))\le \REVISE{c}{|c|}.$$ 
Furthermore, for $x\geq 0$, $F_{double}(x)$ is strictly increasing. Thus,  
$\log(1-F_{double}(x))\geq \log(1-F_{double}(x+c))$ and $F_{double}(x)$ satisfies the Lipschitz condition. 

Case $x < 0$ and $x+c < 0$ where $F_{double}(x) = \frac{F(-x)}{2}$.

\begin{align*}
\exp(-c) \le \frac{ 1-{F(-x)} }{ 1-{F(-x-c)} } & \le  \exp(c) \\
 \exp(-c) \le \frac{ 2-{F(-x)} }{ 2-{F(-x-c)} } & \le  \exp(c) \\
 \exp(-c) \le \frac{ 1-{F(-x)/2} }{ 1-{F(-x-c)/2} } & \le  \exp(c) \\
 \exp(-c) \le \frac{1-F_{double}(x)}{1-F_{double}(x+c)} & \le  \exp(c) \\
\end{align*}

In the second step it is exploited that adding the same positive value to numerator and denominator can only move the ratio  closer to $1$ (see Lemma~\ref{lem:ratioclosertoone}).

Case $x < 0$ and $x+c \ge 0$:

In this case $1-F_{double}(x)$ is strictly increasing, whereas $1-F_{double}(x+c)$ is strictly decreasing with $x$. Thus, the ratio $\frac{1-F_{double}(x)}{1-F_{double}(x+c)}$ cannot be larger than for the cases $x = 0$ or $x+c = 0$ that have already been covered.
\end{proof}

The Lipschitz mechanism is $\eps$-DP when adding Laplace distributed noise:

\begin{theorem}[Lipschitz condition: Laplace distribution] \label{lem:lipschitzlaplace}

Let $F(x) = 1-\exp(-x)$ and $x,c \in \mathbb{R}$.

Let $F_{double}(x) = \begin{cases} \frac{F(-x)}{2} & x < 0 \\ \frac{1}{2}+\frac{F(x)}{2} & x \ge 0 \end{cases}$.

Then $| \log(1-F_{double}(x))-\log(1-F_{double}(x+c))| \le \REVISE{c}{|c|}$.

\end{theorem}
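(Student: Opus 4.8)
The plan is to obtain this statement as an immediate corollary of the two preceding results, Theorem~\ref{lem:lipschitzexponential} and Lemma~\ref{lem:lipschitzdouble}. First I would record that $F(x) = 1-\exp(-x)$ is strictly increasing on all of $\mathbb{R}$, since its derivative $\exp(-x)$ is everywhere positive; this is the structural hypothesis needed to apply Lemma~\ref{lem:lipschitzdouble}. Second, by Theorem~\ref{lem:lipschitzexponential} this same $F$ satisfies the Lipschitz condition $|\log(1-F(x))-\log(1-F(x+c))| \le |c|$ for all $x,c \in \mathbb{R}$, so both hypotheses of Lemma~\ref{lem:lipschitzdouble} are met.

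The key observation is then that the piecewise definition of $F_{double}$ appearing in the statement here is \emph{verbatim} the one in Lemma~\ref{lem:lipschitzdouble}. Hence I would simply invoke that lemma with this choice of $F$: it yields directly that $|\log(1-F_{double}(x))-\log(1-F_{double}(x+c))| \le |c|$, which is exactly the claim. No further case analysis is required on our end, because the sign analysis on $x$ and $x+c$, the monotonicity arguments, and the reduction to $c \ge 0$ (via Lemma~\ref{lem:lipschitzmirror}) have all already been carried out inside the proof of Lemma~\ref{lem:lipschitzdouble}.

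The remaining step is purely cosmetic: to remark that this $F_{double}$ is the cumulative distribution function of the standard Laplace distribution, so that the theorem justifies the entry for Laplace noise and the inverse $F^{-1}(p) = \sgn(p-\tfrac12)\ln(1-|2p-1|)$ listed in Section~\ref{sec:supplipschitz}. This is a routine verification of the two branches and has no bearing on the inequality. Accordingly I do not anticipate any real obstacle here — the theorem is essentially a one-line consequence of the two cited results, and the only thing worth being careful about is confirming that $F(x)=1-\exp(-x)$ satisfies the \emph{strictly increasing} requirement of Lemma~\ref{lem:lipschitzdouble}, which it does.
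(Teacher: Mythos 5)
Your proposal is correct and takes essentially the same route as the paper's own two-sentence proof: both simply cite Theorem~\ref{lem:lipschitzexponential} for the Lipschitz property of the exponential $F$ and then pass through Lemma~\ref{lem:lipschitzdouble} to conclude for $F_{double}$. Your extra remark that $F(x)=1-\exp(-x)$ is strictly increasing is a small but legitimate point of hygiene that the paper leaves implicit when invoking Lemma~\ref{lem:lipschitzdouble}.
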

\begin{proof}
The exponential distribution satisfies the Lipschitz condition (see Theorem~\ref{lem:lipschitzexponential}) and the Laplace distribution satisfies  the Lipschitz condition by being the double exponential distribution (see Lemma~\ref{lem:lipschitzdouble}).
\end{proof}

\subsection{Logistic Lipschitz Mechanism}

The Lipschitz mechanism is $\eps$-DP when adding Halflogistic distributed noise:

\begin{theorem}[Lipschitz condition: Halflogistic distribution] \label{lem:lipschitzhalflogistic}
Let $F(x) = \frac{1-\exp(-x)}{1+\exp(-x)}$ and $x,c \in \mathbb{R}$. Then $| \log(1-F(x))-\log(1-F(x+c))| \le \REVISE{c}{|c|}$.
\end{theorem}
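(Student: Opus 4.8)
The plan is to follow the same two-step pattern used for the Gumbel and Laplace cases (Theorems~\ref{lipschitzgumbel} and~\ref{lem:lipschitzlaplace}): first invoke Lemma~\ref{lem:lipschitzmirror} to reduce to the case $c \ge 0$, then verify the one-sided bound $\frac{1-F(x)}{1-F(x+c)} \le \exp(c)$ by a direct algebraic computation. Lemma~\ref{lem:lipschitzmirror} also requires $F$ to be strictly increasing, which is immediate since $F'(x) = \frac{2e^{-x}}{(1+e^{-x})^2} > 0$ for all $x \in \mathbb{R}$.

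The core computation is short. Writing $1 - F(x) = 1 - \frac{1-e^{-x}}{1+e^{-x}} = \frac{2e^{-x}}{1+e^{-x}}$, one gets
\begin{align*}
\frac{1-F(x)}{1-F(x+c)} = \frac{2e^{-x}/(1+e^{-x})}{2e^{-x-c}/(1+e^{-x-c})} = \exp(c)\cdot\frac{1+e^{-x-c}}{1+e^{-x}}.
\end{align*}
For $c \ge 0$ we have $e^{-x-c} \le e^{-x}$, hence the trailing factor $\frac{1+e^{-x-c}}{1+e^{-x}}$ lies in $(0,1]$, which gives $\frac{1-F(x)}{1-F(x+c)} \le \exp(c)$. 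Applying Lemma~\ref{lem:lipschitzmirror} then yields $|\log(1-F(x)) - \log(1-F(x+c))| \le |c|$ for all $x,c \in \mathbb{R}$.

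An equally short alternative avoids Lemma~\ref{lem:lipschitzmirror} entirely: since $\log(1-F(x)) = \log 2 - x - \log(1+e^{-x})$ is differentiable with
\begin{align*}
\frac{d}{dx}\log(1-F(x)) = -1 + \frac{e^{-x}}{1+e^{-x}} = -\frac{1}{1+e^{-x}},
\end{align*}
whose absolute value $\frac{1}{1+e^{-x}}$ is strictly below $1$ for every $x$, the function is $1$-Lipschitz by the mean value theorem.

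There is essentially no hard step here; the only point to be careful about is that the statement uses the analytic formula $F(x) = \frac{1-e^{-x}}{1+e^{-x}}$ on all of $\mathbb{R}$ rather than clipping it to $0$ on $\mathbb{R}_{<0}$ as the half-logistic CDF proper would be. This is harmless: $1 - F(x) = \frac{2e^{-x}}{1+e^{-x}} > 0$ for every real $x$, so $\log(1-F(x))$ is well defined throughout and the bound holds globally — which is exactly what the DP proof of Theorem~\ref{thm:lipschitz} requires, since there $1-F$ is evaluated at arbitrarily shifted arguments.
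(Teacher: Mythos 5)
Your main argument is essentially the paper's own: reduce to $c\ge 0$, simplify $1-F(x)=\frac{2e^{-x}}{1+e^{-x}}$, and bound the ratio. The only organizational difference is that you factor the ratio cleanly as $\exp(c)\cdot\frac{1+e^{-x-c}}{1+e^{-x}}$ and bound the trailing factor by $1$ directly, whereas the paper combines everything into the single fraction $\frac{e^{-x}+e^{-2x-c}}{e^{-x-c}+e^{-2x-c}}$ and appeals to Lemma~\ref{lem:ratioclosertoone} (adding the same positive term to numerator and denominator moves the ratio toward $1$) together with Theorem~\ref{lem:lipschitzexponential} for the "base" ratio $\exp(c)$. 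You are also slightly more careful than the paper in explicitly invoking Lemma~\ref{lem:lipschitzmirror} for the sign reduction, which the paper leaves implicit here. Your second, derivative-based argument is a genuinely different route: it bypasses both Lemma~\ref{lem:lipschitzmirror} and Lemma~\ref{lem:ratioclosertoone} by computing $\frac{d}{dx}\log(1-F(x)) = -\frac{1}{1+e^{-x}}$ and noting its magnitude is bounded by $1$, so $1$-Lipschitzness follows at once from the mean value theorem. This is shorter and arguably illuminating, since it amounts to bounding the hazard rate $\frac{F'(x)}{1-F(x)}$ by $1$, a uniform criterion that would also streamline the Exponential, Gumbel, Laplace and Logistic checks; the paper's choice to reuse Lemma~\ref{lem:ratioclosertoone} and chain back to the exponential case instead buys a purely algebraic, calculus-free presentation. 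Your closing remark on the unclipped versus clipped form of the half-logistic CDF is also a sound observation and does not affect correctness.
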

\begin{proof}

\begin{align*}
& \frac{1-F(x)}{1-F(x+c)} \\
=& \frac{1-\frac{1-\exp(-x)}{1+\exp(-x)} }{1 - \frac{1-\exp(-x-c)}{1+\exp(-x-c)} } \\
=& \frac{ \frac{1+\exp(-x)-1+\exp(-x)}{1+\exp(-x)} }{\frac{1+\exp(-x-c)-1+\exp(-x-c)}{1+\exp(-x-c)}} \\
=& \frac{ \frac{2\exp(-x)}{1+\exp(-x)} }{\frac{2\exp(-x-c)}{1+\exp(-x-c)}} \\
=& \frac{ \frac{\exp(-x)}{1+\exp(-x)} }{\frac{\exp(-x-c)}{1+\exp(-x-c)}} \\
=& \frac{{1+\exp(-x-c)}}{1+\exp(-x)} \cdot \frac{\exp(-x) }{\exp(-x-c)} \\
=& \frac{{\exp(-x)+\exp(-2x-c)}}{\exp(-x-c)+\exp(-2x-c)}\\
\end{align*}

The additive term $\exp(-2x-c) > 0$ in numerator and denominator only moves the ratio closer to $1$ (see Lemma~\ref{lem:ratioclosertoone}). As $\exp(-x)$ is the distribution function of the exponential distribution, the claim then follows via Theorem~\ref{lem:lipschitzexponential}.

\end{proof}

\begin{theorem}[Lipschitz condition: Logistic distribution] \label{lem:lipschitzlogistic}
Let $F(x) = \frac{1}{1+\exp(-x)}$ and $x,c \in \mathbb{R}$. Then $| \log(1-F(x))-\log(1-F(x+c))| \le \REVISE{c}{|c|}$.
\end{theorem}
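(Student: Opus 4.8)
The plan is to mirror the proof given for the Halflogistic distribution in Theorem~\ref{lem:lipschitzhalflogistic}: reduce the logistic tail ratio to the exponential one by absorbing a common positive additive term, and then invoke the already-established exponential case (Theorem~\ref{lem:lipschitzexponential}) together with the mirroring lemma (Lemma~\ref{lem:lipschitzmirror}).

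First, by Lemma~\ref{lem:lipschitzmirror} it suffices to prove $\frac{1-F(x)}{1-F(x+c)} \le \exp(c)$ for $c \ge 0$ (here $F$ is strictly increasing, so the lemma's hypothesis form applies). Next I would rewrite the tail as $1-F(x) = 1-\frac{1}{1+\exp(-x)} = \frac{\exp(-x)}{1+\exp(-x)}$ and form
$$\frac{1-F(x)}{1-F(x+c)} = \frac{\exp(-x)\,(1+\exp(-x-c))}{\exp(-x-c)\,(1+\exp(-x))} = \frac{\exp(-x)+\exp(-2x-c)}{\exp(-x-c)+\exp(-2x-c)}.$$
For $c \ge 0$ the numerator dominates the denominator, so this ratio is at least $1$; hence adding the common positive term $\exp(-2x-c)$ to numerator and denominator can only move it closer to $1$, i.e.\ decrease it (Lemma~\ref{lem:ratioclosertoone}), giving $\frac{1-F(x)}{1-F(x+c)} \le \frac{\exp(-x)}{\exp(-x-c)} = \exp(c)$. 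The bound on the residual ratio $\frac{\exp(-x)}{\exp(-x-c)}$ is exactly the exponential case, since $\exp(-x) = 1-F_{\mathrm{exp}}(x)$ and Theorem~\ref{lem:lipschitzexponential} gives $\frac{1-F_{\mathrm{exp}}(x)}{1-F_{\mathrm{exp}}(x+c)} \le \exp(c)$. A final application of Lemma~\ref{lem:lipschitzmirror} promotes this from $c \ge 0$ to all $c \in \mathbb{R}$, which is the claim.

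I do not anticipate any real obstacle; the only care point is the sign bookkeeping when invoking Lemma~\ref{lem:ratioclosertoone} — one must verify that for $c\ge 0$ the ratio before absorbing $\exp(-2x-c)$ is $\ge 1$ (equivalently $\exp(-x)\ge\exp(-x-c)$), so that "moving toward $1$" is the desired direction (namely, an upper bound by $\exp(c)$). A fully self-contained alternative would be to note $\log(1-F(x)) = -x-\log(1+\exp(-x))$, whence $\log(1-F(x))-\log(1-F(x+c)) = c + \log\frac{1+\exp(-x-c)}{1+\exp(-x)}$, and then check directly that the logarithmic term lies in $[-c,0]$ for $c\ge 0$; but routing through the exponential case is shorter and keeps the appendix uniform in style.
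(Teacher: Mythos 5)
Your proof is correct, and it takes a genuinely different route from the paper's. The paper disposes of this theorem in one line by invoking Lemma~\ref{lem:lipschitzdouble} (``doubling'' a half-distribution) together with Theorem~\ref{lem:lipschitzhalflogistic}, asserting that the Logistic is the double of the Half-Logistic. You instead compute the tail ratio $\frac{1-F(x)}{1-F(x+c)}$ directly, reduce it to $\frac{\exp(-x)+\exp(-2x-c)}{\exp(-x-c)+\exp(-2x-c)}$, and close with Lemma~\ref{lem:ratioclosertoone} and the exponential case; your final expression is in fact identical to the one the paper reaches in the Half-Logistic proof, because $1-F_{\mathrm{HalfLog}}$ and $1-F_{\mathrm{Logistic}}$ differ only by a factor of $2$ that cancels in the ratio. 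Your route buys two things. First, it is self-contained and reuses only Lemmas~\ref{lem:lipschitzmirror} and~\ref{lem:ratioclosertoone} plus Theorem~\ref{lem:lipschitzexponential}, so it does not depend on the doubling construction at all. Second, it is actually more robust than the paper's argument: the definition of $F_{double}$ in Lemma~\ref{lem:lipschitzdouble} uses $F_{double}(x)=\tfrac{F(-x)}{2}$ on the negative branch, which is decreasing there (and discontinuous at $0$), so it is not a CDF and in particular $F_{double}(\text{Half-Logistic})$ does not equal the Logistic CDF $\tfrac{1}{1+\exp(-x)}$; the correct construction would use $\tfrac{1-F(-x)}{2}$ on that branch. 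Your direct computation sidesteps that issue entirely. The alternative you sketch at the end -- writing $\log(1-F(x))=-x-\log(1+\exp(-x))$ and bounding the residual logarithmic term in $[-c,0]$ via the same ratio lemma -- is also valid and is essentially the same estimate in a different dress.
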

\begin{proof}
Follows from Lemma~\ref{lem:lipschitzdouble} and Theorem~\ref{lem:lipschitzhalflogistic} as the Logistic distribution is defined as the ``double'' Halflogistic distribution.
\end{proof}
\clearpage

\subsection{Additional Theorems and Proofs} \label{sec:proofs}

\begin{lemma}[canonical loss function] \label{thm:canonicalloss}

Let $\mathbb{Y}$ be some discrete-valued output domain, $\y \in \mathbb{Y}$ 
and $\OPT^{-1}(y)$ comprise any score vectors $\vec{x}$ s.t. $y \in \OPT(\vec{x})$, where $\OPT(\vec{x})$ is the optimal $k$-subset. Let $\vec{x}$ have sensitivity $\Delta_x$.

Then the function $\loss$ defined in the following has sensitivity $\Delta_{\textsc{loss}} = 1$:

$$\loss(y \mid \vec{x}) =  \min_{\, \vec{v} \in \OPT^{-1}(y) \,} \|\vec{x}-\vec{v}\|_\infty$$

\end{lemma}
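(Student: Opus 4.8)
The plan is to recognise $\loss(y \mid \vec{x})$ as simply the distance, in the $\ell_\infty$ norm, from the point $\vec{x}$ to the \emph{fixed} set $S := \OPT^{-1}(y) \subseteq \mathbb{R}^d$, so that the whole statement reduces to the elementary fact that a distance-to-a-set map is $1$-Lipschitz, together with the hypothesis that a single user perturbs $\vec{x}$ by at most $\Delta_x$ in $\ell_\infty$. First I would check that $S$ is nonempty and that the infimum is well defined: the vector equal to $1$ on the coordinates $y_1,\dots,y_k$ and $0$ elsewhere has its $k$ largest entries exactly at the indices of $y$, hence lies in $S$ (and if $d=k$ then $S = \mathbb{R}^d$), so $\loss(y\mid\vec{x})$ is a finite nonnegative number; moreover $S = \{\vec v : \min_{j \in y} v_j \ge \max_{j \notin y} v_j\}$ is closed, so the minimum is attained (this last point is not even needed, the argument below works verbatim with the infimum).

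The core step is the Lipschitz estimate. For any $\vec{x},\vec{x}' \in \mathbb{R}^d$ and any $\vec v \in S$, the triangle inequality for $\|\cdot\|_\infty$ gives $\|\vec{x}-\vec v\|_\infty \le \|\vec{x}-\vec{x}'\|_\infty + \|\vec{x}'-\vec v\|_\infty$. Taking the infimum over $\vec v \in S$ on both sides yields $\loss(y\mid\vec{x}) \le \|\vec{x}-\vec{x}'\|_\infty + \loss(y\mid\vec{x}')$, and swapping the roles of $\vec{x}$ and $\vec{x}'$ gives $|\loss(y\mid\vec{x}) - \loss(y\mid\vec{x}')| \le \|\vec{x}-\vec{x}'\|_\infty$.

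Finally I would instantiate this with $\vec{x} = \vec{x}(\hat a)$ and $\vec{x}' = \vec{x}(\hat b)$ for a pair $\hat a, \hat b \in \mathbb{X}$ whose user sets are at Hamming distance $1$. Since each coordinate $\vec{x}_j = f(j\mid\cdot)/\Delta_f$ has sensitivity $1$, i.e. $\vec{x}$ has $\ell_\infty$-sensitivity $\Delta_x = 1$, we obtain $\|\vec{x}(\hat a)-\vec{x}(\hat b)\|_\infty \le 1$ and therefore $|\loss(y\mid\vec{x}(\hat a)) - \loss(y\mid\vec{x}(\hat b))| \le 1$, which is exactly $\Delta_{\textsc{loss}} = 1$. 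I do not expect a genuine obstacle here: the only two points that need a line of care are the nonemptiness (so the defining infimum is finite) and the bookkeeping that identifies ``$\vec{x}$ has sensitivity $\Delta_x$'' with the bound $\|\vec{x}(\hat a)-\vec{x}(\hat b)\|_\infty \le \Delta_x$ used in the last line; both are immediate from the normalisation $\vec{x}_j = f(j\mid\hat x)/\Delta_f$.
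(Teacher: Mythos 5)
Your proof is correct and uses essentially the same argument as the paper: the $1$-Lipschitzness in $\ell_\infty$ of $\vec v \mapsto \|\vec x - \vec v\|_\infty$ is propagated through the infimum over $\OPT^{-1}(y)$, then combined with the observation that the normalization $\vec x_j = f(j\mid\hat x)/\Delta_f$ forces $\|\vec x(\hat a) - \vec x(\hat b)\|_\infty \le 1$ for neighbouring objects. The only structural difference is that you take the infimum directly on both sides of a single triangle inequality (the textbook ``distance to a set is $1$-Lipschitz'' argument), whereas the paper first bounds each term and then separately invokes an auxiliary lemma (Lemma~\ref{lem:extrema}) on the stability of extrema; your version is slightly more self-contained, and you also supply the (omitted but harmless) remark that $\OPT^{-1}(y)$ is nonempty and closed so the minimum is attained.
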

\begin{proof}
Let $C\in \mathbb{R}$. In the following, $\vec{u} \pm C$ is used as a shorthand for the subspace $\vec{u}+\vec{c}$ with $\vec{c} \in [-C,+C]^d$.

As $\vec{x}_j = f( j \mid \hat{x} )/\Delta_f$ for $j \in \{1, \ldots, d\}$, a single user can change each component of $\vec{x}$ by at most $1$. If $\vec{x}$ is replaced by $\vec{x} \pm 1$ then each term $||\vec{x}-\vec{v}||_\infty$ is replaced by $||\vec{x}\pm 1-\vec{v}||_\infty$. Based on the definition of the $L_\infty$ norm, it then follows that $| \, \, ||\vec{x}-\vec{v}||_\infty \, - \, ||\vec{x}\pm 1-\vec{v}||_\infty \, \, | \, \le 1$.  

Then the values $||\vec{x}-\vec{v}||_\infty$ for different $\vec{v} \in \OPT^{-1}(y)$ form a set over which a minimum is taken. If all values of a set change by at most $C$, then their extrema also change at most by $C$ (see Lemma~\ref{lem:extrema} in supplementary material). Hence the sensitivity of $\loss(\y \mid \vec{x})$ is equal to $\Delta_{\textsc{loss}} = 1$.

\end{proof}

In the context of shift-invariant selection mechanisms, one can apply the following shifting trick to obtain a reduced sensitivity analysis for counting-based functions and alike:
\begin{theorem}[asymmetric sensitivity] \label{thm:asymmetric}

Let $\hat{a}, \hat{b} \in \mathbb{X}$ with $\textsc{users}(\hat{a}) \subset \textsc{users}(\hat{b})$ and $|\textsc{users}(\hat{b})| = |\textsc{users}(\hat{a})|+1$.

Let $f : \mathbb{Y} \times \mathbb{X} \rightarrow \mathbb{R}$, $\Delta_1, \Delta_2 \in \mathbb{R}_{\ge 0}$ and ${f(y \mid \hat{a})}-{f(y \mid \hat{b})} \le \Delta_1$ and ${f(y \mid \hat{b})}-{f(y \mid \hat{a})} \le \Delta_2$ for any $y \in \mathbb{Y}$.
Then the function $g(y \mid \hat{x}) = f(y \mid \hat{x})+|\textsc{users}(\hat{x})| \frac{\Delta_1-\Delta_2}{2}$ has sensitivity $\frac{\Delta_1+\Delta_2}{2}$.
\end{theorem}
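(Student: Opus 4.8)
The plan is to unwind Definition~\ref{def:sensitivity} directly: fix an arbitrary $y\in\mathbb{Y}$ and an arbitrary pair $\hat a,\hat b\in\mathbb{X}$ whose user sets are at Hamming distance $1$, and bound $|g(y\mid\hat b)-g(y\mid\hat a)|$ by $\frac{\Delta_1+\Delta_2}{2}$. First I would note that being at Hamming distance $1$ means one user set equals the other together with exactly one extra user, so after possibly swapping the names $\hat a,\hat b$ we may assume $\textsc{users}(\hat a)\subset\textsc{users}(\hat b)$ with $|\textsc{users}(\hat b)|=|\textsc{users}(\hat a)|+1$, matching the hypothesis. This swap is harmless because the quantity to be bounded is an absolute value and hence symmetric in $\hat a,\hat b$.

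Writing $n=|\textsc{users}(\hat a)|$, the correction term telescopes: $g(y\mid\hat b)-g(y\mid\hat a)=\big(f(y\mid\hat b)-f(y\mid\hat a)\big)+\big((n+1)-n\big)\tfrac{\Delta_1-\Delta_2}{2}=\big(f(y\mid\hat b)-f(y\mid\hat a)\big)+\tfrac{\Delta_1-\Delta_2}{2}$. The two asymmetric hypotheses say exactly that $f(y\mid\hat b)-f(y\mid\hat a)\in[-\Delta_1,\Delta_2]$: the lower bound is the rearrangement of $f(y\mid\hat a)-f(y\mid\hat b)\le\Delta_1$, and the upper bound is immediate. Adding the constant $\tfrac{\Delta_1-\Delta_2}{2}$ shifts this interval to $\big[-\tfrac{\Delta_1+\Delta_2}{2},\tfrac{\Delta_1+\Delta_2}{2}\big]$, so $|g(y\mid\hat b)-g(y\mid\hat a)|\le\tfrac{\Delta_1+\Delta_2}{2}$; since $\Delta_1,\Delta_2\ge 0$ this is a legitimate (nonnegative) sensitivity bound, and as $y$ and the pair were arbitrary, $\Delta_g=\tfrac{\Delta_1+\Delta_2}{2}$.

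The computation is routine; the only place needing care is the first step, namely fixing what ``Hamming distance $1$ between $\textsc{users}(\hat a)$ and $\textsc{users}(\hat b)$'' means in the add/remove-a-user model and checking that the shift $|\textsc{users}(\hat x)|\tfrac{\Delta_1-\Delta_2}{2}$ — which depends on $\hat x$ only through the cardinality of its user set — changes by exactly $\pm\tfrac{\Delta_1-\Delta_2}{2}$ under such a move, with the sign consistent with the direction in which $f$ can swing so that the two shifts line up to cancel the asymmetry rather than add to it. I would also remark, following the surrounding text, that this is precisely the ``shifting trick'' for shift-invariant mechanisms: since the added term is the same constant across all $y\in\mathbb{Y}$, it leaves the $\argmax$ (and top-$\kappa$) output distribution of the Lipschitz mechanism unchanged, so replacing $f$ by the symmetrized $g$ costs nothing in utility while reducing the effective sensitivity from $\max(\Delta_1,\Delta_2)$ to $\tfrac{\Delta_1+\Delta_2}{2}$.
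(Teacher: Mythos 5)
Your proof is correct and follows essentially the same route as the paper's: both expand the definition of $g$, observe that the correction term changes by exactly $\tfrac{\Delta_1-\Delta_2}{2}$ when one user is added, and use the two one-sided bounds on $f$ to bound $|g(y\mid\hat b)-g(y\mid\hat a)|$ by $\tfrac{\Delta_1+\Delta_2}{2}$. The only cosmetic differences are that you phrase the computation as a shift of the interval $[-\Delta_1,\Delta_2]$ (rather than bounding the two directions separately as the paper does) and that you make the renaming of $\hat a,\hat b$ explicit, which the paper leaves implicit.
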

\begin{proof}

By definition $|\textsc{users}(\hat{b})|-|\textsc{users}(\hat{a})| = 1$. Thus, for any $y \in \mathbb{Y}$ it holds that ${g(y \mid \hat{b})}-{g(y \mid \hat{a})} = {f(y \mid \hat{b})}-{f(y \mid \hat{a})}+\frac{\Delta_1}{2}-\frac{\Delta_2}{2}\leq \frac{\Delta_1+\Delta_2}{2}$ {and $g(y \mid \hat{a})-g(y \mid \hat{b})\leq f(y \mid \hat{a})-f(y\mid \hat{b})-\frac{\Delta_1+\Delta_2}{2}\leq\frac{\Delta_1}{2}-\frac{\Delta_2}{2}\leq \frac{\Delta_1+\Delta_2}{2}$}. 
\end{proof}

For positive reals $a,b,c$ the ratio of $\frac{b+c}{a+c}$ is smaller than $\frac{b}{a}$, because $b+c$ and $a+c$ are closer to each other than $a$ and $b$:
\begin{lemma}\label{lem:ratioclosertoone}
Let $a,b,c$ be positive reals with $b > a$. Then:

$$1 \le \frac{b+c}{a+c} \le \frac{b}{a}$$

\end{lemma}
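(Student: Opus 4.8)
The plan is to prove the chain of two inequalities separately, using nothing beyond the positivity of $a,b,c$ and the hypothesis $b>a$. The intuition is exactly the one stated informally before the lemma: adding a positive constant $c$ to numerator and denominator of the ratio $\frac{b}{a}$ drags it toward $1$, so $\frac{b+c}{a+c}$ lands between $1$ and $\frac{b}{a}$.

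For the lower bound $1\le\frac{b+c}{a+c}$, I would simply observe that $b>a$ gives $b+c>a+c$, and since $a+c>0$ we may divide by it without reversing the inequality, yielding $\frac{b+c}{a+c}>1\ge 1$. For the upper bound $\frac{b+c}{a+c}\le\frac{b}{a}$, the key step is that both denominators $a+c$ and $a$ are strictly positive, so the claim is equivalent (by cross-multiplication) to $a(b+c)\le b(a+c)$. Expanding gives $ab+ac\le ab+bc$, i.e. $ac\le bc$, and dividing by $c>0$ reduces this to the hypothesis $a\le b$. Combining the two bounds gives the displayed statement.

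There is no real obstacle here; the only point requiring a moment's care is to keep track of the sign of the quantities one divides or cross-multiplies by, so that the inequality directions are preserved — and this is exactly what the assumption that $a$, $b$, $c$ are positive reals supplies. (One could alternatively write $\frac{b+c}{a+c}-1=\frac{b-a}{a+c}\ge 0$ and $\frac{b}{a}-\frac{b+c}{a+c}=\frac{c(b-a)}{a(a+c)}\ge 0$ to obtain both inequalities in one line, which is the form I would most likely use in the final write-up.)
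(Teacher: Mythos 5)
Your proof is correct and takes essentially the same approach as the paper: both reduce the upper bound to cross-multiplication and the expansion $ab+ac \le ab+bc$, and both get the lower bound from $b+c>a+c$ with a positive denominator. The only cosmetic difference is that the paper phrases the upper bound as a proof by contradiction, while you argue forward directly (and also offer the equivalent one-line difference computation).
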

\begin{proof}

Assume that $\frac{b+c}{a+c}>\frac{b}{a}$. Thus, we get $(b+c)a>(a+c)b$, 
$ab+ca>ab+cb$, and $a>b$ which contradicts $b>a$ in the statement. Thus, $\frac{b+c}{a+c}\leq \frac{b}{a}$. From $b>a$, we get $\frac{b+c}{a+c}>1$ and thus $\frac{b+c}{a+c}\geq 1$. Therefore, we obtain  $$1 \le \frac{b+c}{a+c} \le \frac{b}{a}$$. 
\end{proof}

\REVISE{ }{
\begin{lemma}\label{lem:explim}
Let $n,x \in \mathbb{R}$ with $n \ge 1$ and $x \ge -n$. Then:
$$1+x \le {\left(1+\frac{x}{n} \right)}^n \le \exp(x)$$
\end{lemma}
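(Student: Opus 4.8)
The plan is to prove the two inequalities separately, both by reducing to two textbook facts: the elementary bound $1+t\le\exp(t)$ for all real $t$, and Bernoulli's inequality $(1+s)^n\ge 1+ns$ for $s\ge -1$ and real exponent $n\ge 1$. Since $x\ge -n$ and $n\ge 1$, the quantity $s:=x/n$ satisfies $s\ge -1$, hence $1+x/n\ge 0$ and both facts apply with $t=s=x/n$; the only borderline case is $x=-n$, which I will check directly at each step.

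For the upper bound $(1+x/n)^n\le\exp(x)$, I would first establish $1+t\le\exp(t)$ for all $t\in\mathbb{R}$ by noting that $\phi(t)=\exp(t)-1-t$ satisfies $\phi(0)=0$ and $\phi'(t)=\exp(t)-1$, which is negative for $t<0$ and positive for $t>0$, so $\phi$ attains its minimum value $0$ at $t=0$. Applying this with $t=x/n$ gives $0\le 1+x/n\le\exp(x/n)$, and since $u\mapsto u^n$ is nondecreasing on $[0,\infty)$ for $n\ge 1$, raising both sides to the power $n$ yields $(1+x/n)^n\le\exp(x/n)^n=\exp(x)$. When $x=-n$ the left side is $0$ and $\exp(-n)>0$, so the bound still holds.

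For the lower bound $1+x\le(1+x/n)^n$, I would prove Bernoulli's inequality with base $1+x/n\ge 0$ and exponent $n\ge 1$ by the same monotonicity trick: set $g(s)=(1+s)^n-1-ns$, which is continuous on $[-1,\infty)$ and differentiable on $(-1,\infty)$ with $g'(s)=n\bigl((1+s)^{n-1}-1\bigr)$. Since $n-1\ge 0$, we have $(1+s)^{n-1}\le 1$ for $-1<s<0$ and $(1+s)^{n-1}\ge 1$ for $s>0$, so $g$ decreases then increases and attains its minimum $g(0)=0$; hence $g\ge 0$ on $[-1,\infty)$. Substituting $s=x/n$ gives $(1+x/n)^n\ge 1+n\cdot(x/n)=1+x$, and in the case $x=-n$ this reads $0\ge 1-n$, true since $n\ge 1$.

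The main obstacle I anticipate is purely technical: Bernoulli's inequality is being used with a possibly non-integer real exponent $n\ge 1$, so the derivative argument must handle the domain carefully at the endpoint $s=-1$, where $(1+s)^{n-1}$ equals $0$ if $n>1$ and $1$ if $n=1$ — both are fine because $g$ is continuous there and differentiability is only needed on the open interval. Everything else is routine calculus, and if brevity is preferred one can instead cite Bernoulli's inequality for real exponents as a standard fact and keep the proof to a couple of lines.
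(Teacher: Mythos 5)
Your proof is correct. For the lower bound you essentially do what the paper does: invoke (and re-derive) Bernoulli's inequality $(1+s)^n\ge 1+ns$ for real $n\ge 1$, $s\ge -1$, with $s=x/n$. For the upper bound, though, you take a genuinely different and cleaner route: you prove $1+t\le\exp(t)$ for all real $t$ via the one-variable calculus argument on $\phi(t)=\exp(t)-1-t$, plug in $t=x/n$, and raise both (nonnegative) sides to the $n$-th power using monotonicity of $u\mapsto u^n$ on $[0,\infty)$. The paper instead argues that $f(n)=(1+x/n)^n$ is nondecreasing in $n$ and then invokes $\lim_{n\to\infty}(1+x/n)^n=\exp(x)$ (proved there by L'H\^opital), so that $f(n)\le\exp(x)$. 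Your version is more elementary: it needs no limit computation and no monotonicity-in-$n$ argument at all. It also sidesteps a dubious step in the paper's proof, which states the derivative of $(1+x/n)^n$ with respect to $n$ is $(1+x/n)^{n-1}$; that is not the correct $n$-derivative of this function (the actual derivative involves a $\log$ term), so the paper's monotonicity claim is asserted via a questionable formula even if the conclusion is true. Your substitution-and-power argument carries no such risk and is preferable.
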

\begin{proof}

A variant of Bernoulli's inequality is $1+n x' \le {(1+x')}^{n}$ for any reals $n \ge 1$ and $x' \ge -1$. Due to $x \ge -n$, one can pick $x' = \frac{x}{n} \ge -1$ to obtain $1+x \le {(1+\frac{x}{n})}^{n}$.


Then $1+x \le \exp(x)$ for any $x \in \mathbb{R}$ is a well-known inequality due to the following. The derivative of $f(n) = {\left(1+\frac{x}{n} \right)}^n$ with respect to $n$ is $f'(n) = {\left(1+\frac{x}{n} \right)}^{n-1}$ and $f'(n) \ge 0$ as $1+\frac{x}{n} \ge 0$ for $x \ge -n$. Thus, for $x \ge -n$ the function $f(n)$ is increasing and base case $f(1) = 1+x$. The rest follows from $\lim_{n \rightarrow \infty} (1+\frac{x}{n})= \exp(x)$ which is a well-known identity that offers one way to define the exponential function. It follows a simple proof using \LHopitalsrule{}:

\begin{align*}
\lim_{n \rightarrow \infty} (1+\frac{x}{n})^n= &\exp( \lim_{n \rightarrow \infty}  \ln(1+\frac{x}{n}) n) \\
= &\exp \left( \lim_{n \rightarrow \infty}  \frac{\ln(1+\frac{x}{n})}{1/n} \right)\\
= &\exp \left( \lim_{n \rightarrow \infty}  \frac{ \frac{d}{dn} \ln(1+\frac{x}{n})}{ \frac{d}{dn} (1/n) }\right) \\
= &\exp \left( \lim_{n \rightarrow \infty}  \frac{ x(-1/n^2) \frac{1}{1+\frac{x}{n}}}{ -1/n^2 } \right)\\
= &\exp \left( \lim_{n \rightarrow \infty}  \frac{x}{1+\frac{x}{n}}\right) \\
= &\exp \left( x \right) \\
\end{align*}
\end{proof}
}

\newcommand{\RATIO}{\REVISE{C_{\alpha,k}}{R}}
\begin{lemma}\label{lem:bonferroni}
\REVISE{
Let $\alpha \in [0,1]$, \REVISE{$k \in \mathbb{R}$ with $k \ge 1$}{$k \in \mathbb{N}$} \REVISE{and $\REVISE{C_{\alpha,k}}{r_{\alpha,k}}{} = \frac{1-(1-\alpha)^{1/k}}{\alpha/k}$}{}. 

$$ 1 < \REVISE{C_{\alpha,k}}{r_{\alpha,k}} \le {\ln\left(\frac{1}{1-\alpha}\right)}/{\alpha} \REVISE{}{\text{ for }r_{\alpha,k} = \frac{1-(1-\alpha)^{1/k}}{\alpha/k}}$$

which follows that $\REVISE{C_{\alpha,k}}{r_{\alpha,k}} < \ln(4)$ for $\alpha \le 0.5$ (and $\REVISE{C_{\alpha,k}}{r_{\alpha,k}} < 1.03$ for $\alpha \le 0.05$).}{

Let $\alpha \in [0,1], k \in \mathbb{N}$. Then:

$$ 1 < \frac{1-(1-\alpha)^{1/k}}{\alpha/k} \le {\ln\left(\frac{1}{1-\alpha}\right)}/{\alpha}$$

which follows that $r_{\alpha,k} = \frac{1-(1-\alpha)^{1/k}}{\alpha/k} < \ln(4)$ for $\alpha \le 0.5$ (and $r_{\alpha,k} < 1.06$ for $\alpha \le 0.1$).
}

\end{lemma}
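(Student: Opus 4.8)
The plan is to pass to the substitution $u := (1-\alpha)^{1/k}$, which turns $r_{\alpha,k}$ into a transparent expression, and then to reduce both bounds to the single elementary inequality $1+v\le\exp(v)$ already recorded as Lemma~\ref{lem:explim}. Throughout I take $\alpha\in(0,1)$; the endpoint $\alpha=0$ (where $r_{\alpha,k}$ is a removable $0/0$ equal to $1$) and $\alpha=1$ (where the right-hand side is $+\infty$) are degenerate and dealt with by continuity. Since $u^k=1-\alpha$ and $(1-\alpha)^{1/k}=u$, we get $r_{\alpha,k}=\dfrac{1-u}{(1-u^k)/k}=\dfrac{k(1-u)}{1-u^k}=\dfrac{k}{1+u+\cdots+u^{k-1}}$ for $k\in\mathbb{N}$, using $1-u^k=(1-u)(1+u+\cdots+u^{k-1})$; for real $k\ge1$ (the regime needed by Theorem~\ref{lipschitzgumbel}) one keeps the form $\dfrac{k(1-u)}{1-u^k}$ and applies Lemma~\ref{lem:explim} directly in the same way.

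For the lower bound $r_{\alpha,k}\ge1$: the denominator $1+u+\cdots+u^{k-1}$ is a sum of $k$ terms each equal to $1$ when $k=1$ and strictly less than $1$ (since $u\in(0,1)$) for the $k-1$ later terms, hence it is $\le k$ with strict inequality once $k\ge2$. Therefore $r_{\alpha,k}\ge 1$, with equality exactly at $k=1$ and strict inequality for $k\ge2$ (equivalently, in the real-$k$ formulation, $1-\alpha\le(1-\alpha/k)^k$ by the left inequality of Lemma~\ref{lem:explim} with $n=k$, $x=-\alpha$, which is Bernoulli's inequality).

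For the upper bound: since $\ln\frac{1}{1-\alpha}=-k\ln u$ and $\alpha=1-u^k$, the claim $r_{\alpha,k}\le\frac{\ln(1/(1-\alpha))}{\alpha}$ is $\dfrac{k(1-u)}{1-u^k}\le\dfrac{-k\ln u}{1-u^k}$, i.e. $1-u\le-\ln u$, i.e. $\ln u\le u-1$. Writing $u=\exp(v)$ this is $1+v\le\exp(v)$, which is exactly Lemma~\ref{lem:explim}, and it is strict because $u\in(0,1)$ forces $v=\ln u<0$. Hence $r_{\alpha,k}<\frac{\ln(1/(1-\alpha))}{\alpha}$ for every $\alpha\in(0,1)$.

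Finally, the two numerical corollaries follow by showing $g(\alpha):=\frac{\ln(1/(1-\alpha))}{\alpha}$ is increasing on $(0,1)$: from $\ln\frac{1}{1-\alpha}=\int_0^\alpha\frac{dt}{1-t}$ and the substitution $t=\alpha s$ one obtains the representation $g(\alpha)=\int_0^1\frac{ds}{1-\alpha s}$, whose integrand is strictly increasing in $\alpha$ for each $s\in(0,1]$. Thus $\alpha\le\tfrac12$ gives $g(\alpha)\le g(\tfrac12)=\int_0^1\frac{ds}{1-s/2}=-2\ln\tfrac12=\ln4$, and $\alpha\le\tfrac1{10}$ gives $g(\alpha)\le g(\tfrac1{10})=-10\ln\tfrac9{10}=10\ln\tfrac{10}{9}<1.06$ (since $\exp(0.106)>\tfrac{10}{9}$). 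Combining with the strict bound $r_{\alpha,k}<g(\alpha)$ yields $r_{\alpha,k}<\ln4$ for $\alpha\le\tfrac12$ and $r_{\alpha,k}<1.06$ for $\alpha\le\tfrac1{10}$. There is no genuine obstacle in this argument; the only points requiring care are the bookkeeping at the degenerate endpoints $\alpha\in\{0,1\}$ and noting that the stated strict inequality $1<r_{\alpha,k}$ is in fact an equality in the borderline case $k=1$.
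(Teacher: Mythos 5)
Your proof is correct, and it is cleaner and more elementary than the paper's. The paper proves the upper bound by invoking Lemma~\ref{lem:explim} directly on $r_{\alpha,k}$: it sets $x=-\alpha r_{\alpha,k}=-k\alpha_s$ with $\alpha_s=1-(1-\alpha)^{1/k}$, checks $x$ is in the allowed range, and obtains $(1-\alpha)=(1-\alpha_s)^k\le\exp(-\alpha r_{\alpha,k})$, from which the bound follows by taking logarithms; the lower bound $r_{\alpha,k}>1$ is merely asserted from the interpretation of $\alpha_s$ as the \Sidak{} correction and $\alpha/k$ as the conservative \Bonferroni{} correction. Your route via $u=(1-\alpha)^{1/k}$, the factorization $1-u^k=(1-u)(1+u+\cdots+u^{k-1})$, and the reduction of the upper bound to $\ln u\le u-1$ buys transparency: the lower bound drops out of $1+u+\cdots+u^{k-1}\le k$ instead of an appeal to a statistical heuristic, and the upper bound needs only the scalar inequality $1+v\le\exp(v)$ rather than the full binomial estimate. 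You also catch two small defects in the paper. First, $r_{\alpha,1}=1$, so the stated strict inequality $1<r_{\alpha,k}$ fails at $k=1$ (it should be $\ge$, or $k\ge2$ should be required; this does not affect Theorem~\ref{lipschitzgumbel}, which uses $k=\exp(c)\ge1$ and only needs the non-strict direction). Second, the paper's proof ends by claiming $\alpha\mapsto\ln\frac{1}{1-\alpha}/\alpha$ is ``strictly decreasing,'' which is backwards: it is strictly increasing, as your integral representation $\int_0^1\frac{ds}{1-\alpha s}$ makes immediate, and it is precisely this monotonicity that yields $r_{\alpha,k}<g(\alpha)\le g(1/2)=\ln 4$ for $\alpha\le1/2$ and $r_{\alpha,k}<g(1/10)=10\ln\frac{10}{9}<1.06$ for $\alpha\le1/10$.
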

\begin{proof}

This relates to Bonferroni and \Sidak{} corrections for family-wise error rates \REVISE{}{(FWER}) in hypothesis testing. The \Sidak{} correction $\alpha_s = 1-(1-\alpha)^{1/k}$ is exact in case of independence, i.e., ${(1-\alpha_s)}^k = 1-\alpha$, whereas the Bonferroni correction $\alpha_b = \alpha/k$ is in that case conservative, i.e., ${(1-\alpha_b)}^k > 1-\alpha$ (the success rate is unnecessarily large). This also means $\alpha_s > \alpha_b$ and their ratio $\RATIO{} = r_{\alpha,k} =  \frac{\alpha_s}{\alpha_b} > 1$. \REVISE{We can then obtain}{According to Lemma~\ref{lem:explim} for $x \in \mathbb{R}, k \in \mathbb{N}$ with $x \ge -k$ it holds that $(1-\frac{x}{k})^k \le \exp(-x)$. Thus, we can set $x = -\alpha \cdot \RATIO{} = -\alpha \frac{\alpha_s}{\alpha / k} = -k{\alpha_s} \ge -k $} to obtain:
\begin{align*}
(1-\frac{\alpha}{k} \RATIO{})^k &\le  \exp( -\alpha \cdot \RATIO{} ) \\
\end{align*}

Clearly, $\frac{\alpha}{k} \RATIO{} = \alpha_b \frac{\alpha_s}{\alpha_b} = \alpha_s$. And we know $1-\alpha = {{(1-\alpha_s)}^k}$. Therefore by inserting $\alpha_s = \frac{\alpha}{k} \RATIO{}$ we get $1-\alpha = {{(1-\frac{\alpha}{k} \RATIO{})}^k}$. Thus, we can continue with:

\begin{align*}
{(1-\alpha)} = (1-\frac{\alpha}{k} \RATIO{})^k &\le  \exp( -\alpha \cdot \RATIO{} ) \\
\log{(1-\alpha)} &\le  -\alpha \cdot \RATIO{} \\
 -\log{\frac{1}{(1-\alpha)}}/\alpha & \le - \RATIO{} \\
 \RATIO{} &\le \log{\frac{1}{(1-\alpha)}}/\alpha
\end{align*}
The rest follows from $\log{\frac{1}{(1-\alpha)}}/\alpha$ being strictly decreasing.
\end{proof}

\begin{lemma}[EM utility guarantees] \label{lem:emguarantees}
Let $\mathbb{Y}$ be the selection domain, \REVISE{$\eps$ be a positive real}{$\eps \in \mathbb{R}_{\ge 0}$}
and $\Delta$ be the sensitivity of the loss function $\loss(y \mid \hat{x})$.

Then iff $Y$ is a random variable supported over $\mathbb{Y}$ with $Pr[Y = y] \propto \exp( \frac{\eps \, \loss(y \mid \hat{x}) }{-2\Delta} )$, then with probability $1-\alpha$ it holds that $ \loss(Y) \le \loss(\OPT \mid \hat{x})+\mathcal{E}$ with:

$$\mathcal{E} = \frac{2\Delta }{\eps} \left( \ln\left(\frac{|\mathbb{Y}|}{|\OPT|}\right)+\ln\left(\frac{1}{\alpha}\right) \right)$$

where $\OPT$ are all selection options with minimal loss.
\end{lemma}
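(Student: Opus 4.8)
The plan is to run the classical exponential-mechanism tail-bound argument, noting that the distribution in the statement is exactly the one the Gumbel Lipschitz mechanism realises via Theorem~\ref{lem:gumbelem} (so the ``iff'' is just recording this defining property). Write $\loss^{*} = \loss(\OPT \mid \hat{x})$ for the minimal loss value and let $Z = \sum_{y \in \mathbb{Y}} \exp\!\big(\tfrac{\eps\,\loss(y \mid \hat{x})}{-2\Delta}\big)$ be the normalizing constant, so that $Pr[Y = y] = \exp\!\big(\tfrac{\eps\,\loss(y \mid \hat{x})}{-2\Delta}\big)/Z$. First I would lower-bound $Z$ by keeping only the optimal options: every $y \in \OPT$ contributes exactly $\exp(-\eps\loss^{*}/(2\Delta))$, hence $Z \ge |\OPT|\,\exp(-\eps\loss^{*}/(2\Delta))$.

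Next, for a threshold $t \ge 0$ I would control the bad event $\{\loss(Y) \ge \loss^{*}+t\}$ by a union bound over the (finitely many) outputs $y \in \mathbb{Y}$ with $\loss(y \mid \hat{x}) \ge \loss^{*}+t$: each such $y$ has probability at most $\exp(-\eps(\loss^{*}+t)/(2\Delta))/Z$, and there are at most $|\mathbb{Y}|$ of them, so
$$Pr[\loss(Y) \ge \loss^{*}+t] \le \frac{|\mathbb{Y}|\,\exp(-\eps(\loss^{*}+t)/(2\Delta))}{|\OPT|\,\exp(-\eps\loss^{*}/(2\Delta))} = \frac{|\mathbb{Y}|}{|\OPT|}\,\exp\!\left(-\frac{\eps t}{2\Delta}\right).$$
Finally I would set $t = \mathcal{E} = \frac{2\Delta}{\eps}\big(\ln(|\mathbb{Y}|/|\OPT|)+\ln(1/\alpha)\big)$, which makes the right-hand side equal to $\alpha$; therefore $Pr[\loss(Y) \ge \loss^{*}+\mathcal{E}] \le \alpha$, i.e.\ with probability at least $1-\alpha$ we have $\loss(Y) < \loss^{*}+\mathcal{E} \le \loss(\OPT \mid \hat{x}) + \mathcal{E}$, as claimed.

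There is no real obstacle here; the only point requiring a little care is that $\loss$ is real-valued rather than integer-valued (unlike the rounded version $\loss^{*}$), so the union bound must be taken over the output set $\{y : \loss(y\mid\hat x)\ge\loss^{*}+t\}$ directly rather than over discrete loss levels — the cardinality bound $|\mathbb{Y}|$ still applies verbatim. One could tighten $|\mathbb{Y}|$ to $|\mathbb{Y}|-|\OPT|$, but this is not needed for the stated bound.
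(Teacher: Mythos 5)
Your proof is correct, and it takes essentially the same route as the paper, which simply invokes Theorem~3.11 of Dwork and Roth (the standard exponential-mechanism utility bound) and rewrites it from the utility $u = -\loss$ back into loss terms with $c = \ln(1/\alpha)$. Your version re-derives that cited bound from scratch (lower-bound the partition function $Z$ by the optimal options, upper-bound the tail mass by the worst-case numerator times $|\mathbb{Y}|$, then choose the threshold $t$ to make the tail equal $\alpha$), which is exactly the argument behind the textbook theorem. The only difference is expository: the paper treats the tail bound as a black box, while you make it self-contained. Your closing remark about the loss being real-valued is a reasonable caution but not a substantive issue, since the ``union bound'' is really just a sum over the disjoint events $\{Y=y\}$ and works identically for real-valued losses; and of course $\loss(Y) < \loss^* + \mathcal{E}$ is strictly stronger than the stated $\le$, so the conclusion is fine.
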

\begin{proof}

Let $u(y) = -\loss(y \mid \hat{x})$. Theorem 3.11 in \cite{dwork2013algorithmic}):
{\scriptsize
$$Pr[ u(Y) \le u(\OPT) -  \frac{2\Delta }{\eps} \left( \ln\left(\frac{|\mathbb{Y}|}{|\OPT|}\right)+ c \right) ] \le \exp(-c)$$

$$Pr[ u(Y) > u(\OPT) -  \frac{2\Delta }{\eps} \left( \ln\left(\frac{|\mathbb{Y}|}{|\OPT|}\right)+ c \right) ] \ge 1-\exp(-c)$$
}
With probability $1-\exp(-c)$:
{\scriptsize
$$u(Y) > u(\OPT) -  \frac{2\Delta }{\eps} \left( \ln\left(\frac{|\mathbb{Y}|}{|\OPT|}\right)+ c \right) $$

$$-\loss(Y \mid \hat{x}) > -\loss(\OPT \mid \hat{x}) -  \frac{2\Delta }{\eps} \left( \ln\left(\frac{|\mathbb{Y}|}{|\OPT|}\right)+ c \right) $$

$$\loss(Y \mid \hat{x}) < \loss(\OPT \mid \hat{x})+\frac{2\Delta }{\eps} \left( \ln\left(\frac{|\mathbb{Y}|}{|\OPT|}\right)+ c \right) $$
}
Let $c = \ln\left(\frac{1}{\alpha}\right)$, then with probability $1-\alpha$:
{\scriptsize
$$ \loss(Y \mid \hat{x}) < \loss(\OPT \mid \hat{x})+\frac{2\Delta }{\eps} \left( \ln\left(\frac{|\mathbb{Y}|}{|\OPT|}\right)+\ln\left(\frac{1}{\alpha}\right) \right)$$
}

\end{proof}

\begin{lemma}[$\CANONICAL$ utility loss bounds] \label{lem:canonutil}
Let $Y_1, \ldots, Y_k$ be the selected set by $\CANONICAL$ with $\gamma \in (0,1]$ and $F^{-1}$ from the Gumbel distribution with tail item $T = \argmin_{i \in \{Y_1, \ldots, Y_k \}} \top{i}$. With at least probability $1-\alpha$:

$$ \top{T} < \top{k}+\frac{2\Delta }{\gamma \eps} \left( k \ln(d/k)+\ln \frac{1}{\alpha}+ \ln(c_{d,k}) \right)$$

\end{lemma}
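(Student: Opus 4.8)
\emph{Proof plan.} The plan is to recognise $\CANONICAL$ (with $F^{-1}$ from the Gumbel distribution) as the exponential mechanism over the family of all $k$-subsets and then read the bound off the generic exponential-mechanism guarantee of Lemma~\ref{lem:emguarantees}. First I would note that, by Theorem~\ref{lem:gumbelem}, plugging the Gumbel inverse c.d.f.\ into the Lipschitz mechanism with $\kappa=1$ yields exactly the exponential mechanism, so $\CANONICAL$ samples a $k$-subset $Y$ from $\mathbb{Y}$ (the $\binom{d}{k}$ many $k$-subsets of $\{1,\dots,d\}$) with $\Pr[Y=y]\propto\exp\!\big(-\tfrac{\eps}{2\Delta}\loss(y\mid\vec x)\big)$, where by Theorem~\ref{thm:topkcanonfunction} the loss is $\loss(y\mid\vec x)=(1-\gamma)\top{h+1}-\gamma\top{t}$ for $y\in\mathcal C_{h,t}$, a function of sensitivity $\Delta_{\loss}=1$.

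Next I would identify the loss minimiser: since head ranks are at most $k-1$ and the tail rank is at least $k$, every $y\in\mathcal C_{h,t}$ satisfies $\top{h+1}\ge\top{k}\ge\top{t}$, hence $\loss(y\mid\vec x)\ge(1-\gamma)\top{k}-\gamma\top{k}=(1-2\gamma)\top{k}$, with equality for the true top-$k$ (which lies in $\mathcal C_{k-1,k}$). Thus $\loss(\OPT\mid\vec x)=(1-2\gamma)\top{k}$ and $|\OPT|\ge1$, so $\ln(|\mathbb{Y}|/|\OPT|)\le\ln\binom{d}{k}$. Applying Lemma~\ref{lem:emguarantees}, with probability at least $1-\alpha$ the selected $Y\in\mathcal C_{H,T}$ satisfies
\[
(1-\gamma)\top{H+1}-\gamma\top{T}\ =\ \loss(Y)\ \le\ (1-2\gamma)\top{k}+\frac{2\Delta}{\eps}\Big(\ln\tbinom{d}{k}+\ln\tfrac1\alpha\Big).
\]
Isolating $-\gamma\top{T}$ and using $\top{H+1}\ge\top{k}$ (valid since $H+1\le k$) to bound $-(1-\gamma)\top{H+1}\le-(1-\gamma)\top{k}$, the two $\top{k}$-terms collapse to $-\gamma\top{k}$; dividing through by $\gamma>0$ and substituting the identity $\ln\binom{d}{k}=k\ln(d/k)+\ln c_{d,k}$ recorded in Lemma~\ref{lem:canonicalleap} gives $\top{k}-\top{T}<\frac{2\Delta}{\gamma\eps}\big(k\ln(d/k)+\ln\tfrac1\alpha+\ln c_{d,k}\big)$, which is the asserted utility-loss bound. (Alternatively, one may prove the $\gamma=1$ case and then lift it to $\gamma\in(0,1]$ via Corollary~\ref{cor:superior}, which reduces $\eps$-DP $\CANONICAL$ to $\gamma\eps$-DP $\CANONICAL_{\gamma=1}$; the direct route above makes the $\tfrac1\gamma$ factor appear for free.)

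I expect the only delicate point to be the sign/rescaling bookkeeping around the two-term loss: one must use $\top{H+1}\ge\top{k}$ to see that the nonnegative head contribution $(1-\gamma)\top{h+1}$ can only shrink the excess loss over $\OPT$, and one must track that the surviving $\gamma$ prefactor on the $\top{t}$-term is precisely what turns $\tfrac1\eps$ into $\tfrac1{\gamma\eps}$. Everything else — identifying $\OPT$, the bound $|\OPT|\ge1$, and translating the exponential-mechanism tail bound of Lemma~\ref{lem:emguarantees} into a statement about the tail rank $T$ — is routine once the loss of Theorem~\ref{thm:topkcanonfunction} is plugged in.
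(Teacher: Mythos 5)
Your proof is correct and largely parallels the paper's, both routing through the exponential-mechanism guarantee of Lemma~\ref{lem:emguarantees}, but you take a slightly different and cleaner path for general $\gamma$. The paper proves the $\gamma=1$ case directly (there $\loss(y\mid\vec x)$ depends only on the tail rank $t$, the minimizer is the true top-$k$, and $\ln\binom{d}{k}=k\ln(d/k)+\ln c_{d,k}$) and then invokes Corollary~\ref{cor:superior} to replace $\eps$ by $\gamma\eps$ for $\gamma<1$. You instead plug the two-term loss $(1-\gamma)\top{h+1}-\gamma\top{t}$ directly into Lemma~\ref{lem:emguarantees}, observe that $\loss(\OPT)=(1-2\gamma)\top{k}$, and then use $\top{H+1}\ge\top{k}$ to absorb the head term so that the surviving $\gamma$ prefactor produces the $1/\gamma$ directly. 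Your route has a real advantage: Corollary~\ref{cor:superior}'s ``superior utility'' compares pointwise probability gaps, which does not immediately transfer into the tail-rank bound being proved, so the paper's one-line invocation leaves a small gap that your direct calculation avoids. One bookkeeping remark: what you actually derive is $\top{k}-\top{T}<\frac{2\Delta}{\gamma\eps}\bigl(k\ln(d/k)+\ln\frac1\alpha+\ln c_{d,k}\bigr)$, i.e.\ $\top{T}>\top{k}-\mathcal E$, which is the meaningful utility guarantee (and what the paper's own proof yields too), whereas the inequality as literally printed in the lemma statement, $\top{T}<\top{k}+\mathcal E$, is vacuously true since $\top{T}\le\top{k}$ always; this looks like a sign slip in the paper's statement, and your derivation lands on the intended direction.
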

\begin{proof}
For $\gamma = 1$ it follows directly that the loss value of each subset is $\top{t}$, that the optimal loss is $-\loss(\OPT \mid \hat{x}) = \top{k}$ and the logarithm of the domain size is $\ln \mathbb{Y} = \ln {d \choose k} = k \ln(d/k)+ \ln(c_{d,k})$ with $1 < c_{d,k} =  { {d \choose k} }/{\frac{d^k}{k^k}} \le \exp(k)$. Due to Corollary~\ref{cor:superior} for $\gamma < 1$ the privacy loss $\eps$ must simply be replaced with $\gamma \eps$.
\end{proof}

\begin{lemma}[$\PEELING$ utility loss bounds] \label{lem:peelutil}
Let $Y_1, \ldots, Y_k$ be the selected set by $\PEELING$ with $T = \argmin_{j \in \{Y_1, \ldots, Y_k \}} \vec{x}_j$. Let $r_{\alpha,k} = \frac{1-(1-\alpha)^{1/k}}{\alpha/k}$.  Then with probability $(1-\alpha)$:

$$ \top{T} < \top{k}+\frac{2\Delta }{\eps} \left( k \ln(d k) +k \ln \frac{1}{\alpha}- k\ln(r_{\alpha,k}) \right)$$
\end{lemma}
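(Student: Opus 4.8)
The plan is to view $\PEELING$ as $k$ sequential applications of the $(\eps/k)$-DP exponential mechanism and to apply Lemma~\ref{lem:emguarantees} to each round, then combine the per-round guarantees with a correction for the joint failure probability, and finally rewrite everything in the claimed closed form. Concretely, $\PEELING$ produces $Y_1,\dots,Y_k$ by running, for $j=1,\dots,k$, the exponential mechanism with privacy parameter $\eps/k$ over the $d-j+1$ items not yet selected, using the score as the utility (equivalently its negation as the loss, of sensitivity $\Delta$); by Theorem~\ref{lem:gumbelem} each such round is exactly the Lipschitz mechanism with $\kappa=1$ and $F^{-1}$ from the Gumbel distribution, so Lemma~\ref{lem:emguarantees} applies verbatim to it.

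First I would analyze a single round. In round $j$ at most $j-1\le k-1$ items have been removed, so at least one of the items attaining a top-$k$ score is still available, whence the best available score is at least $\top{k}$. Applying Lemma~\ref{lem:emguarantees} with a selection domain of size at most $d$ and $|\OPT|\ge 1$ (so $\ln(|\mathbb Y|/|\OPT|)\le \ln d$), at a per-round failure rate $\alpha'$, gives that with probability at least $1-\alpha'$ the item selected in round $j$ has score at least $\top{k}-\mathcal E'$, where $\mathcal E' = \tfrac{2\Delta}{\eps/k}\bigl(\ln d + \ln(1/\alpha')\bigr)$ is uniform over $j$. Because the Gumbel noise used in the $k$ rounds is independent, all $k$ of these events hold jointly with probability at least $(1-\alpha')^k$; choosing the \Sidak{} correction $\alpha' = 1-(1-\alpha)^{1/k}$ makes this exactly $1-\alpha$. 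On that joint event every selected item, and in particular the worst one indexed by $T$, has score at least $\top{k}-\mathcal E'$, i.e. $\top{k}-\top{T}\le \mathcal E'$.

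It remains to put $\mathcal E'$ into the stated form. By definition $r_{\alpha,k}=\tfrac{1-(1-\alpha)^{1/k}}{\alpha/k}$, so $\alpha' = \tfrac{\alpha}{k}\,r_{\alpha,k}$ and $\ln(1/\alpha') = \ln k + \ln(1/\alpha) - \ln r_{\alpha,k}$. Substituting, and using $\tfrac{2\Delta}{\eps/k}=\tfrac{2\Delta k}{\eps}$, yields $\mathcal E' = \tfrac{2\Delta}{\eps}\bigl(k\ln(dk) + k\ln(1/\alpha) - k\ln r_{\alpha,k}\bigr)$, i.e. with probability $1-\alpha$ the worst selected item satisfies $\top{T}\ge \top{k}-\tfrac{2\Delta}{\eps}\bigl(k\ln(dk)+k\ln\tfrac1\alpha - k\ln r_{\alpha,k}\bigr)$, as claimed; Lemma~\ref{lem:bonferroni} moreover guarantees $r_{\alpha,k}>1$, so the $-k\ln r_{\alpha,k}$ term is a genuine, if small, saving over the plain \Bonferroni{} choice $\alpha'=\alpha/k$.

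The main obstacle I expect is the probability bookkeeping rather than any single inequality: one must be careful that the per-round guarantee of Lemma~\ref{lem:emguarantees} holds conditionally on the (random) history of previously removed items — which it does, since that guarantee is uniform over the candidate set — so that the rounds' good events can be multiplied to give $(1-\alpha')^k$ despite the sequential dependence, and that the over-estimate $d-j+1\le d$ for the round-$j$ domain size (which only weakens the bound) is exactly what produces the clean $\ln(dk)$ term once combined with the $\ln k$ coming from the \Sidak{}-to-\Bonferroni{} conversion. The only other point needing care is the structural observation that fewer than $k$ removals always leave a top-$k$ item available, since that is what anchors each round's guarantee to $\top{k}$.
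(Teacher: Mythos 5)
Your proof is correct and follows essentially the same route as the paper's: apply Lemma~\ref{lem:emguarantees} to each of the $k$ rounds of the $(\eps/k)$-DP exponential mechanism (noting the per-round domain has at most $d$ items and the available optimum is at least $\top{k}$), use the \Sidak{} correction $\alpha' = 1-(1-\alpha)^{1/k}$ for the joint guarantee, and then rewrite $\ln(1/\alpha')$ via $\alpha' = (\alpha/k)\,r_{\alpha,k}$ to produce the $k\ln(dk)+k\ln(1/\alpha)-k\ln r_{\alpha,k}$ form. You actually spell out several points the paper leaves implicit (the explicit $\eps/k$ scaling, the conditional-on-history argument that justifies multiplying the per-round success probabilities despite the sequential dependence, and why fewer than $k$ removals keep a top-$k$ item available), but the underlying argument is the same.
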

\begin{proof}
Each selection has $|\mathbb{Y}| = d$ and $-\loss(\OPT) \ge \top{k}$. Let $\alpha' = 1-(1-\alpha)^{1/k}$. If each of the $k$ selections $Y_i$ satisfies $\loss(Y_i) < \top{k}+\frac{2\Delta \ln(d/\alpha') }{\eps}$ with probability $\alpha'$, then all items $Y_1, \ldots, Y_k$ satisfy $\loss(Y_i) < \top{k}+\frac{2\Delta \ln(d/\alpha) }{\eps}$ with probability $\alpha$, which includes the tail item $T \in \{ Y_1, \ldots, Y_k\}$.
By replacing $\ln \alpha'$ in $\loss(Y_i \mid \hat{x}) < \top{k}+\frac{2\Delta \ln(d/\alpha') }{\eps}$ with $\log(1-\sqrt[k]{1-\alpha}) = \log{(\alpha/k)}+\log(r_{\alpha,k})$ one then obtains the claim. 
\end{proof}

\begin{lemma} \label{lem:extrema}
Let $A,B,C \in \mathbb{R}$ and $(A \pm B)$ be a shorthand for the interval $[A-B, A+B]$. Let $a,b,\ldots,z \in \mathbb{R}$. Then if $a' \in (a \pm C), b' \in (b \pm C), \ldots, z' \in (z \pm C)$, it holds that:

$$\min \{a',b',\ldots, z'\} \in (\min\{a,b,\ldots, z\} \pm C)$$
$$\max \{a',b',\ldots, z'\} \in (\max\{a,b,\ldots, z\} \pm C)$$
\end{lemma}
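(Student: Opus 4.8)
The plan is to prove the statement for $\min$ first and then deduce the statement for $\max$ by a sign flip, since $\max\{a',\dots,z'\} = -\min\{-a',\dots,-z'\}$ and $-a' \in (-a \pm C)$ whenever $a' \in (a \pm C)$, so the $\max$ case reduces immediately to the $\min$ case applied to the negated values. Thus the only real work is the $\min$ bound, and that in turn splits into the two one-sided inequalities $\min\{a',\dots,z'\} \le \min\{a,\dots,z\} + C$ and $\min\{a',\dots,z'\} \ge \min\{a,\dots,z\} - C$.

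For the upper bound, I would pick an index attaining the unprimed minimum, i.e. some letter whose unprimed value equals $\min\{a,\dots,z\}$; call that value $m$. By hypothesis the corresponding primed value lies in $(m \pm C)$, hence is at most $m + C$. Since $\min\{a',\dots,z'\}$ is no larger than any single primed value, in particular no larger than this one, we get $\min\{a',\dots,z'\} \le m + C = \min\{a,\dots,z\} + C$. For the lower bound I would symmetrically pick an index attaining the primed minimum, with primed value $\min\{a',\dots,z'\} =: m'$; the corresponding unprimed value lies in $(m' \pm C)$, hence is at least $m' - C$, and since $\min\{a,\dots,z\}$ is no larger than that unprimed value, $\min\{a,\dots,z\} \ge m' - C$, i.e. $m' \ge \min\{a,\dots,z\} - C$. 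Combining the two inequalities gives $\min\{a',\dots,z'\} \in (\min\{a,\dots,z\} \pm C)$, and the $\max$ statement then follows from the negation argument above.

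There is no substantive obstacle here; the only point requiring a little care is bookkeeping about \emph{which} index is chosen for \emph{which} direction — the upper bound must compare against the minimizer of the unprimed family, while the lower bound must compare against the minimizer of the primed family — and noting that each chosen index supplies the needed inequality because a minimum is bounded above by every member of its family. One could alternatively phrase the whole argument in one line using the elementary fact that $|\,\min_i u_i - \min_i v_i\,| \le \max_i |u_i - v_i|$ for finite families, applied with $u_i$ the unprimed values, $v_i$ the primed values, and $\max_i|u_i - v_i| \le C$; I would likely include that as a remark but keep the explicit two-sided argument as the main proof for self-containedness.
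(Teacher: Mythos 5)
Your proof is correct, and it proceeds along a slightly different (and arguably cleaner) route than the paper's. The paper's one-line argument rests on the monotonicity of $\min$ and $\max$ under componentwise perturbation: from $a-C\le a'\le a+C$, \dots, $z-C\le z'\le z+C$ one gets $\min\{a-C,\dots,z-C\}\le\min\{a',\dots,z'\}\le\min\{a+C,\dots,z+C\}$ and the analogous sandwich for $\max$, then pulls the constant $\pm C$ out of the $\min$/$\max$. You instead pick an explicit witness for each one-sided bound — the unprimed minimizer for the upper bound, the primed minimizer for the lower bound — and then dispose of the $\max$ claim by negation rather than treating it in parallel. Both arguments are elementary and equivalent in content; yours buys a bit more transparency about exactly which element supplies which inequality, and your closing remark identifying the lemma as the $1$-Lipschitzness of $\min$ and $\max$ in the sup-norm, $\bigl|\min_i u_i-\min_i v_i\bigr|\le\max_i|u_i-v_i|$, is a nice compact restatement of what is really being proved. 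The paper's version is shorter but leans implicitly on the monotonicity fact without stating it, and as written it even omits substituting $C$ for the perturbation, so your more explicit bookkeeping is preferable for self-containedness.
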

\begin{proof}
Follows from $\min \{a'-1,\ldots, z'-1\} \le \min \{a',\ldots, z'\} \le \min \{a'+1,\ldots, z'+1\}$
and analogously $\max \{a'-1,\ldots, z'-1\} \le \max \{a',\ldots, z'\} \le \max \{a'+1,\ldots, z'+1\}$.

\end{proof}

\begin{lemma}[top-$k$ canonical loss function]  \label{lem:topkcanonical}

Let $y \in \mathcal{C}_{h,t}$ be a $k$-subset of $\{1, \ldots, d\}$.
$$\loss(y  \mid \vec{x}) = \min_{\, \vec{v} \in \OPT^{-1}(y) \,} {\|\vec{x}-\vec{v}\|_\infty} = \frac{\top{h+1}-\top{t}}{2}$$
\end{lemma}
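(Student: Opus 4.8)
The plan is to reduce the $L_\infty$ minimization to a one‑dimensional computation. First I would read off from Definition~\ref{def:subsetclass} the shape of a subset $y\in\mathcal{C}_{h,t}$: it contains all items of ranks $1,\ldots,h$ and the item of rank $t$, and — crucially, using that the classes $\mathcal{C}_{h,t}$ partition all $k$‑subsets — the item of rank $h+1$ is the best‑scoring item missing from $y$. Since the order statistics satisfy $\top{1}\ge\cdots\ge\top{d}$ and $h+1\le t$, this gives $M:=\max_{j\in\{1,\ldots,d\}\setminus y}\vec{x}_j=\top{h+1}$ and $m:=\min_{j\in y}\vec{x}_j=\top{t}$, with $M\ge m$. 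The goal then becomes showing $\min_{\vec{v}\in\OPT^{-1}(y)}\|\vec{x}-\vec{v}\|_\infty=\frac{M-m}{2}$, which I would establish by matching lower and upper bounds.

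For the lower bound I would fix an arbitrary $\vec{v}\in\OPT^{-1}(y)$ and choose indices $p\notin y$ and $q\in y$ with $\vec{x}_p=M$ and $\vec{x}_q=m$. Membership in $\OPT^{-1}(y)$ forces $\vec{v}_q\ge\vec{v}_p$ (every $y$‑coordinate of $\vec{v}$ is at least every non‑$y$ coordinate), and then
\begin{align*}
2\,\|\vec{x}-\vec{v}\|_\infty &\ge |\vec{x}_p-\vec{v}_p|+|\vec{x}_q-\vec{v}_q| \\
&\ge (\vec{x}_p-\vec{v}_p)+(\vec{v}_q-\vec{x}_q) = (M-m)+(\vec{v}_q-\vec{v}_p) \ge M-m .
\end{align*}
Since $\vec{v}$ was arbitrary, $\min_{\vec{v}\in\OPT^{-1}(y)}\|\vec{x}-\vec{v}\|_\infty\ge\frac{M-m}{2}$.

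For the matching upper bound I would exhibit an explicit minimizer: with $\theta=\frac{M+m}{2}$, set $\vec{v}_j=\max(\vec{x}_j,\theta)$ for $j\in y$ and $\vec{v}_j=\min(\vec{x}_j,\theta)$ for $j\notin y$. Then $\min_{j\in y}\vec{v}_j\ge\theta\ge\max_{j\notin y}\vec{v}_j$, so the $k$ largest coordinates of $\vec{v}$ occur at the indices in $y$ and $\vec{v}\in\OPT^{-1}(y)$. Moreover $|\vec{x}_j-\vec{v}_j|$ equals $\max(\theta-\vec{x}_j,0)$ for $j\in y$, whose maximum over $j\in y$ is $\theta-m=\frac{M-m}{2}$ (using $\theta\ge m$), and equals $\max(\vec{x}_j-\theta,0)$ for $j\notin y$, whose maximum is $M-\theta=\frac{M-m}{2}$ (using $M\ge\theta$); hence $\|\vec{x}-\vec{v}\|_\infty=\frac{M-m}{2}$. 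Combining the two bounds gives the claim.

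The only points requiring care are bookkeeping rather than substance: identifying $M=\top{h+1}$ and $m=\top{t}$ from the utility‑class definition (this is where the fact that $\mathcal{C}_{h,t}$ isolates rank $h+1$ as the top missing rank is used), and confirming that the constructed $\vec{v}$ genuinely lies in $\OPT^{-1}(y)$ when $\theta$ coincides with several of its coordinates — the arbitrary tie‑breaking convention for $\OPT$ handles this. The degenerate case $M=m$, which occurs exactly for the top‑$k$ class $\mathcal{C}_{k-1,k}$, is automatic: there $\theta=m=M$ forces $\vec{v}=\vec{x}$ and loss $0$, consistent with $\frac{\top{k}-\top{k}}{2}=0$.
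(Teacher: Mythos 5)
Your proof is correct, and it is actually more complete than the paper's. The paper's argument exhibits the specific vector $\vec{v} = \vec{x} + g\vec{a} - \tfrac{g}{2}\mathbf{1}$ (shifting every $y$-coordinate up by $g/2$ and every non-$y$ coordinate down by $g/2$), verifies $\vec{v}\in\OPT^{-1}(y)$, computes $\|\vec{x}-\vec{v}\|_\infty = g/2$, and then simply writes $\loss(y\mid\vec{x}) = \|\vec{x}-\vec{v}\|_\infty$. Strictly speaking that last step only gives the upper bound $\loss(y\mid\vec{x}) \le g/2$; the lower bound is left to the informal sentence that the gap between $\top{h+1}$ and $\top{t}$ ``must become $0$''. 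You instead prove both directions: a clean lower bound from $\|\vec{x}-\vec{v}\|_\infty \ge \tfrac{1}{2}\left(|\vec{x}_p-\vec{v}_p| + |\vec{x}_q-\vec{v}_q|\right) \ge \tfrac{1}{2}(M-m)$ using the ordering constraint $\vec{v}_q\ge\vec{v}_p$ forced by $\OPT^{-1}(y)$, and a matching upper bound via the clipping construction $\vec{v}_j = \max(\vec{x}_j,\theta)$ on $y$ and $\vec{v}_j = \min(\vec{x}_j,\theta)$ off $y$ at the midpoint $\theta = (M+m)/2$. Your minimizer differs from the paper's (it only moves the coordinates that actually cross $\theta$ rather than shifting everything by $\pm g/2$), but both land in $\OPT^{-1}(y)$ with distance exactly $g/2$. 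The explicit two-sided argument is the right way to nail this lemma down, and your remark about tie-breaking at $\theta$ and the degenerate case $M=m$ (the class $\mathcal{C}_{k-1,k}$, which the paper handles as a separate $t=k$ case) covers the remaining edge conditions.
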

\begin{proof}

If $t = k$, then $y \in \OPT(\vec{x})$ and $\loss(y  \mid \vec{x}) = 0$.

If $t > k$, then from $y \in \mathcal{C}_{h,t}$ follows that  $\{\topitem{1}, \ldots, \topitem{h} \} \subset y$, but the top-$k$ item $\topitem{h+1} \notin y$. In order for $y$ to become optimal all of its items need to catch up with the missing top-$k$ item $\topitem{h+1}$. The tail item $\top{t}$ has the largest gap to $\top{h+1}$. Let $g = \top{h+1}-\top{t}$ be the gap between $\top{h+1}$ and $y$'s tail $\top{t}$ that must become $0$ for $y$ to become an optimal solution. 

Let $\vec{a} \in \mathbb{R}^d$ with $\vec{a}_i = \begin{cases} 1 & i \in y \\ 0 & \text{otherwise} \end{cases}$.

Let $\vec{v} = \vec{x}+\frac{g}{2}\vec{a}-\frac{g}{2}(1-\vec{a})$, which increases any $\vec{x}_j$ with $j \in y$ by $\frac{g}{2}$ and decreases all others by $\frac{g}{2}$.
Through algebraic reformulations one gets $\vec{v} = \vec{x}+g \vec{a}-\frac{g}{2}$.

Then $y \in \OPT(\vec{v})$ for $\vec{v} = \vec{x}+\frac{g}{2}\vec{a}-\frac{g}{2}(1-\vec{a}) = \vec{x}+g \vec{a}-\frac{g}{2}$.

$$\loss(y  \mid \vec{x}) = \|\vec{x}-\vec{x}-g \vec{a}+\frac{g}{2}\|_\infty = {\|\frac{g}{2}-g \vec{a}\|_\infty}$$

Due to the definition of the $L_\infty$ norm ${\|\frac{g}{2}-g \vec{a}\|_\infty} = \frac{g}{2}$, because $\max(\frac{g}{2}-g \vec{a}) = \frac{g}{2}$ and $\min(\frac{g}{2}-g \vec{a}) = \frac{g}{2}-g$. Hence one gets:

$$\loss(y  \mid \vec{x}) = \frac{g}{2} = \frac{\top{h+1}-\top{t}}{2}$$
\end{proof}

\begin{lemma} \label{lemma:jointcanonical}
Let $k,d \in \mathbb{N}$ with $k < d$. 
Let $\vec{x} \in \mathbb{R}^d$ and $y = \{y_1, \ldots, y_k\} \subset \{1, \ldots, d\}$. 
Let $\vec{y} \in \mathbb{R}^k$ and $\vec{y}_\ell = \vec{x}_{y_\ell}$ for $\ell \in \{1, \ldots, k\}$. Let $j_1, \ldots, j_d$ be indices $\{1, \ldots, d\}$ sorted by $\vec{x}$ and $i_1, \ldots, i_k$ be indices $\{1, \ldots, k\}$ sorted by $\vec{y}$ such that :
\REVISE{
$$\vec{x}_{j_1} \ge \vec{x}_{j_2} \ldots \ge  \vec{x}_{j_d}$$
$$\vec{y}_{i_1} \ge \vec{y}_{i_2} \ldots \ge  \vec{y}_{i_d}$$
}{
$$\vec{x}_{j_1} \ge \vec{x}_{j_2} \ge \ldots \ge  \vec{x}_{j_d} \text{ and } \vec{y}_{i_1} \ge \vec{y}_{i_2} \ge \ldots \ge  \vec{y}_{i_d}$$
}

Let $\vec{x}_{[\ell]} = \vec{x_{j_\ell}}$ for any $\ell \in \{1, \ldots, d\}$ and $\vec{y}_{[\ell]} = \vec{y_{i_\ell}}$ for any $\ell \in \{1, \ldots, k\}$.
Let $\mathcal{C}_{h,t}$ for any $h \in \{0, \ldots, k-1\}$ and $t \in \{k, \ldots, d\}$ be defined as in Definition~\ref{def:subsetclass}, which implies $\{j_1, \ldots, j_h\} \subset y$ and $y_{[k]} = t$. Let:

\begin{align*}
\textsc{canonical}( y \mid \vec{x} ) &= \frac{\vec{x}_{[h]}-\vec{y}_{[t]}}{2} \text{ for }y \in \mathcal{C}_{h,t} \\
\textsc{joint}( y \mid \vec{x} )&= \max_{\ell \in \{1, \ldots, k\} } \frac{\vec{x}_{[\ell]}-\vec{y}_{[\ell]}}{2} \\
\end{align*}

Let $OPT^{-1}(y)$ be a $d$-dimensional vector space where the indices $y$ have the $k$ largest values of each vector (allowing for ties), then:

(i) $\forall \, \vec{x} \in \mathbb{R}^d, y = \{y_1, \ldots, y_k\} \subset \{1, \ldots, d\}$:
$$\textsc{canonical}(\REVISE{\vec{x}}{y \mid \vec{x}}) = \min_{\vec{z} \in OPT^{-1}(y) } \| \vec{x}-\vec{z} \|_\infty$$

(ii) $\exists \, \vec{x} \in \mathbb{R}^d, y = \{y_1, \ldots, y_k\} \subset \{1, \ldots, d\}$: 
$$\textsc{joint}(\REVISE{\vec{x}}{y \mid \vec{x}}) < \textsc{canonical}(\REVISE{\vec{x}}{y \mid \vec{x}})$$
\end{lemma}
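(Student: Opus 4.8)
\emph{Part (i)} is a restatement of what has already been established. By construction $\textsc{canonical}(y\mid\vec x)$ is the $\gamma=\tfrac12$ instance of the top-$k$ canonical loss of Theorem~\ref{thm:topkcanonfunction}: for $y\in\mathcal C_{h,t}$ it is half the gap between the highest score missing from $y$ and the lowest score present in $y$, i.e.\ $\tfrac12(\top{h+1}-\top{t})$. Lemma~\ref{lem:topkcanonical} shows precisely that this closed form equals $\min_{\vec z\in\OPT^{-1}(y)}\|\vec x-\vec z\|_\infty$ (the nearest point of the optimality zone is obtained by raising every coordinate indexed by $y$ by half the gap and lowering every other coordinate by half the gap). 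So I would not reprove (i): it is Lemma~\ref{lem:topkcanonical} after matching notation.

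\emph{Part (ii).} The plan is to exhibit an explicit witness. The conceptual reason one exists is that $\textsc{joint}$ only asks the sorted scores of $y$ to catch up with a \emph{mixture} of the top-$k$ scores — it \emph{maximizes} the per-rank deficits $\top{\ell}-\vec y_{[\ell]}$ — whereas $\textsc{canonical}$ forces the single worst item of $y$ to overtake the single best missing item, and thus effectively \emph{sums} (by telescoping) the gaps $\top{\ell}-\top{\ell+1}$ between ranks $h+1$ and $t$; when that chain has several positive links, $\textsc{canonical}$ is strictly larger. Concretely, take $d=k+1$ with $k\ge 2$ and a score vector whose order statistics are $\top{i}=k+1-i$ for $i=1,\dots,k+1$, and let $y$ consist of the items of ranks $2,3,\dots,k+1$. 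Then $y\in\mathcal C_{0,\,k+1}$ (rank $1$ is the highest missing rank and rank $k+1$ the lowest present one), so by (i) $\textsc{canonical}(y\mid\vec x)=\tfrac12(\top{1}-\top{k+1})=\tfrac k2$. On the other hand $\vec y=(\top{2},\dots,\top{k+1})=(k-1,k-2,\dots,1,0)$ is already in decreasing order, so $\vec y_{[\ell]}=k-\ell$ while $\vec x_{[\ell]}=\top{\ell}=k+1-\ell$, giving $\textsc{joint}(y\mid\vec x)=\max_{1\le\ell\le k}\tfrac12\big((k+1-\ell)-(k-\ell)\big)=\tfrac12$. Since $\tfrac12<\tfrac k2$ for every $k\ge 2$, this proves (ii); in fact it exhibits a ratio of $k$, matching the informal claim that the canonical loss ``requires more users''.

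Neither part presents a genuine obstacle. Part (i) is a citation of Lemma~\ref{lem:topkcanonical} once one observes that $\textsc{canonical}$ is by definition the $\gamma=\tfrac12$ canonical loss. For part (ii) the only care needed is bookkeeping: correctly reading off the utility class $\mathcal C_{h,t}$ of the chosen subset (here $h=0$, $t=d$) so that the closed form from (i)/Theorem~\ref{thm:topkcanonfunction} applies, and comparing the two sorted vectors entrywise. A secondary, optional remark one could add is that $\textsc{canonical}(y\mid\vec x)\ge\textsc{joint}(y\mid\vec x)$ holds for \emph{all} $\vec x,y$ — for $\ell\le h$ the deficit vanishes because $\vec y_{[\ell]}=\top{\ell}$, and for $\ell>h$ one has $\top{\ell}\le\top{h+1}$ and $\vec y_{[\ell]}\ge\top{t}$ — which is why the witness is so easy to find; but the lemma only asks for one strict instance, so this is not required.
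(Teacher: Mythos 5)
Your proposal is correct and follows the same approach as the paper's: part (i) is a direct citation of Lemma~\ref{lem:topkcanonical} once the notation is matched (with $\textsc{canonical}$ being the $\gamma=\tfrac12$ form $\tfrac12(\top{h+1}-\top{t})$), and part (ii) is settled by an explicit counterexample. Your witness (equally spaced scores with $d=k{+}1$ and $y$ the bottom $k$ ranks, yielding $\textsc{joint}=\tfrac12$ versus $\textsc{canonical}=\tfrac k2$) differs from the paper's fixed instance ($\vec x=[1,\dots,10]^T$, $y=\{1,5,10\}$, yielding $\tfrac72$ versus $4$) and is in fact a small strengthening, since it exhibits a gap that grows linearly in $k$; your optional remark that $\textsc{canonical}(y\mid\vec x)\ge\textsc{joint}(y\mid\vec x)$ holds for all inputs is also correct.
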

\begin{proof}

Claim (i) follows directly from Lemma~\ref{lem:topkcanonical} where $\textsc{canonical}$ is \REVISE{equivalent to}{matches} the subset loss function $\loss(y | \vec{x})$.
Claim (ii) follows from the following example.

Let $\vec{x} = [1,2,3,4,5,6,7,8,9,10]^T$ and $y = \{1,5,10\}$. As the index $9$ is missing from $y$, each vector $\vec{z} \in \OPT^{-1}(y)$ cannot have a larger value for the component with index $9$ than for the component with index $1$, i.e., $\vec{z}_9 \le \vec{z}_1$. Then:
\begin{align*}
\min_{\vec{z} \in OPT^{-1}(y) } \| \vec{x}-\vec{z} \| &=  \frac{9-1}{2} = \frac{8}{2} = 4 \\
\textsc{canonical}(y \mid \vec{x}) &= \frac{9-1}{2} = \frac{8}{2} = 4  \\
\textsc{joint}(y \mid \vec{x}) &= \frac{\max\{10-10, 9-5, 8-1\}}{2} = \frac{7}{2} \\
\end{align*}
Intuitively, $\frac{7}{2}$ is the maximal \REVISE{additive change}{change} to the scores $\{1,5,10\}$ to make them \REVISE{equally good}{ as good } as $\{8,9,10\}$, i.e., $\{1+\frac{7}{2},5+\frac{7}{2},10\} = \{4.5, 8.5, 10 \}$ is \REVISE{as good as}{not worse than} $\{8-\frac{7}{2}, 9-\frac{7}{2}, 10\} = \{4.5, 5.5, 10\}$. \REVISE{The reason this does not match}{It does not match} $\min_{\vec{z} \in OPT^{-1}(y) } \| \vec{x}-\vec{z} \|$ \REVISE{is that}{, because} raising $\vec{x}$ in all components of $y$ by $\frac{7}{2}$ and decreasing all others by $\frac{7}{2}$ will not produce a vector in $\OPT^{-1}(y)$, because $9-\frac{7}{2} = 5.5$ is still larger than $1+\frac{7}{2} = 4.5$ and that index is not featured in $y$. In contrast, $9-4 = 5$ is not larger than $1+4 = 5$.
\REVISE{
Note that the factors $\frac{1}{2}$ are due to the $L_\infty$ norm. The vector $\argmin_{\vec{z} \in OPT^{-1}(y) } \| \vec{x}-\vec{z} \|_\infty$ cannot only have larger values than $\vec{x}$ for indices $y$, but also have smaller values for the remaining indices. This corresponds to the concept of sensitivity from Definition~\ref{def:sensitivity} where users can both raise and increase all scores by some bounded value, which for count-based functions can be halved in the context of selection mechanisms (cf. Theorem~\ref{thm:asymmetric}). Thus, for count-based functions the factor $\frac{1}{2}$ would be dropped from $\textsc{canonical}(y \mid \vec{x})$ and $\textsc{joint}(y \mid \vec{x})$ in that context, but the conclusions would remain the same. }{}
\end{proof}

\REVISE{}{

The function $\JOINT$ matches the right-hand-side of the equation in Lemma~5 of \cite{joseph2021joint}. The factors $\frac{1}{2}$ in the proof are due to the $L_\infty$ norm, i.e., $\argmin_{\vec{z} \in OPT^{-1}(y) } \| \vec{x}-\vec{z} \|_\infty$ cannot only have larger values than $\vec{x}$ for indices contained in $y$, but also smaller values for indices missing from $y$. This corresponds to \REVISE{the concept of sensitivity from Definition~\ref{def:sensitivity} where users can both raise and increase all scores by some bounded value}{users being able to both raise and lower all scores by the sensitivity value (cf. Definition~\ref{def:sensitivity})}, which for count-based functions can be halved in the context of selection mechanisms (cf. Theorem~\ref{thm:asymmetric}).
}

\newcommand{\sensitivity}[1]{\left\langle \, #1 \, \right\rangle_{x}}

\end{document}